\newcommand\blfootnote[1]{%
  \begingroup
  \renewcommand\thefootnote{}\footnote{#1}%
  \addtocounter{footnote}{-1}%
  \endgroup
}
\newtheorem{theorem}{Theorem}[section]
\newtheorem{assumption}{Assumption}[section]
\newtheorem{lemma}{Lemma}[section]
\newtheorem{prop}{Proposition}[section]
\newtheorem{definition}{Definition}[section]
\newtheorem{remark}{Remark}[section]
\newtheorem{example}{Example}[section]
\numberwithin{equation}{section}
\setlist[itemize]{itemsep=0pt,parsep=2pt,topsep=2pt}
\setlist[enumerate]{itemsep=0pt,parsep=2pt,topsep=2pt}
\def\E{\mathbb{E}}
\def\R{\mathbb{R}}
\def\I{\mathcal{I}}
\def\F{\mathcal{F}}
\def\tl{\tilde}
\DeclareMathAlphabet{\mathbbb}{U}{bbold}{m}{n}
\newcommand{\ind}{\mathbbb{1}}
\def\={\mathop{:=}}
\def\argmax{\mathop{\arg\max}}
\def\Q{\mathcal{Q}}
\def\1{\mathbf{1}}
\def\myqed{\qed}
\def\S{\mathcal{S}}
\def\A{\mathcal{A}}
\def\P{\mathcal{P}}
\def\T{\mathcal{T}}
\def\tm{t_{\text{\rm min}}}
\begin{document} 
\markboth{Yu, Wan, \& Sutton}{RVI-based Average-Reward RL in SMDPs}

\title{Average-reward reinforcement learning in semi-{Markov} decision processes via relative value iteration%
\footnote{This research was supported in part by DeepMind, Amii, and the Natural Sciences and Engineering Research Council of Canada (NSERC) under grants RGPIN-2024-04939 and DGECR-2024-00312.}}

\author{{\normalsize Huizhen Yu\textsuperscript{a},
Yi Wan\textsuperscript{a}, 
and Richard S. Sutton\textsuperscript{a,b}}}

\date{} 
\maketitle

\blfootnote{\textsuperscript{a}Department of Computing Science, University of Alberta, Canada}
\blfootnote{\textsuperscript{b}Alberta Machine Intelligence Institute (Amii), Canada}

\blfootnote{{Emails:}\,\texttt{janey.hzyu@gmail.com} (HY,\! corresponding author); \texttt{wan6@ualberta.ca} (YW); \texttt{rsutton@ualberta.ca} (RS)}

\vspace*{-0.8cm}

\noindent{\bf Abstract:} 
This paper applies the authors' recent results on asynchronous stochastic approximation (SA) in the Borkar--Meyn framework to reinforcement learning in average-reward semi-Markov decision processes (SMDPs). We establish the convergence of an asynchronous SA analogue of Schweitzer's classical relative value iteration algorithm, RVI Q-learning, for finite-space, weakly communicating SMDPs. In particular, we show that the algorithm converges almost surely to a compact, connected subset of solutions to the average-reward optimality equation, with convergence to a unique, sample path-dependent solution under additional stepsize and asynchrony conditions. Moreover, to make full use of the SA framework, we introduce new monotonicity conditions for estimating the optimal reward rate in RVI Q-learning. These conditions substantially expand the previously considered algorithmic framework and are addressed through novel arguments in the stability and convergence analysis of RVI Q-learning.

\medskip 
\noindent{\bf Keywords:}
average-reward reinforcement learning; semi-Markov decision process; relative value iteration; asynchronous stochastic approximation; stability and convergence


\section{Introduction}

Markov and semi-Markov decision processes (MDPs/SMDPs) are widely used mathematical models for sequential decision-making under uncertainty in discrete or continuous time \cite{Put94}, with applications in robotics, finance, operations research, and beyond. This paper focuses on model-free reinforcement learning (RL) methods for solving MDPs and SMDPs under an average-reward criterion, where the goal is to optimize sustained long-term performance. RL offers a powerful approach for tackling large, complex problems in real-world scenarios. Underpinning many RL algorithms---and essential for ensuring their reliability---is the theory of asynchronous stochastic approximation (SA). In \cite{YWS25a}, we developed general stability and convergence results for asynchronous SA within the Borkar--Meyn framework \cite{Bor98,BoM00}. In this paper, we apply those results to study and extend a family of learning algorithms based on relative value iteration (RVI), a classical method for solving average-reward problems \cite{Sch71,ScF77,Whi63}. We focus on SMDPs, which include MDPs as special cases.

Building on this SA foundation, our work extends the seminal contributions of Abounadi, Bertsekas, and Borkar \cite{ABB01}, who first applied the Borkar–Meyn framework \cite{Bor98,Bor00,BoM00} in developing an asynchronous stochastic RVI algorithm, RVI Q-learning, for finite state and action MDPs. Their work combined asynchronous SA theory with the convergence results of Borkar and Soumyanath \cite{BoS97}---which concern continuous-time analogs of fixed-point iteration involving nonexpansive mappings---in the design and analysis of a `self-regulating' RVI Q-Iearning algorithm, laying the groundwork for subsequent developments.

Also closely related to our work are the recent contributions of Wan, Naik, and Sutton \cite{WNS21a,WNS21b}, who broadened the algorithmic framework and extended RVI Q-learning to hierarchical control in average-reward MDPs. In particular, an immediate predecessor of the algorithm we develop in this paper is an algorithm proposed in \cite{WNS21b} for solving average-reward SMDPs arising from those hierarchical control settings. Like their algorithm, our new RVI Q-learning algorithm serves as an asynchronous SA analogue of Schweitzer's classical RVI algorithm for SMDPs~\cite{Sch71}.

The prior analyses of RVI Q-learning \cite{ABB01,WNS21a,WNS21b} apply primarily to unichain MDPs or SMDPs, where the average-reward optimality equation (AOE) admits a unique solution (up to an additive constant), to which RVI Q-learning was shown to converge almost surely (a.s.). Our recent work \cite{WYS24} extended these analyses to weakly communicating MDPs or SMDPs (arising from hierarchical control settings), where the AOE solutions generally do not possess such a uniqueness property, significantly relaxing the model assumptions under which RVI Q-learning can be applied.

Another limitation of the prior studies \cite{ABB01,WNS21a,WNS21b} is the presence of theoretical gaps in their stability proofs---an essential step for establishing convergence. Although these studies showed that the Borkar--Meyn stability criterion \cite{BoM00} is satisfied by the associated update functions, for the stability of the asynchronous algorithms, the original study \cite{ABB01} relied on an unproven assertion in \cite[Thm.\ 2.5]{BoM00}, while the later works \cite{WNS21a,WNS21b} argued incorrectly (see \cite[Sec.\ 1.1]{YWS25a} for a more detailed discussion). These issues were resolved in our companion paper \cite{YWS25a}, which established general stability and convergence results for asynchronous SA, providing a strengthened theoretical foundation for RVI Q-learning.

We applied our asynchronous SA results from \cite{YWS25a} in \cite{WYS24} to analyze existing RVI Q-learning algorithms from \cite{ABB01,WNS21a,WNS21b} for weakly communicating MDPs/SMDPs. One purpose of this paper is to further demonstrate the applicability of our SA results by developing a new, more general RVI Q-learning algorithm for SMDPs.

The formulation and analysis of this new algorithm constitute a central contribution of this paper. They are in part motivated by our desire to fully exploit the implications of the Borkar--Meyn stability criterion \cite{BoM00} and our asynchronous SA results from \cite{YWS25a}. As noted, the new RVI Q-learning algorithm---like its predecessor by Wan et al.~\cite{WNS21b}---is an asynchronous SA analogue of Schweitzer's classical RVI algorithm for SMDPs~\cite{Sch71}. The main innovative feature introduced here is a set of new monotonicity conditions for estimating the optimal reward rate in RVI Q-learning, specifically, strict monotonicity under scalar translation (see Assum.\ \ref{cond-f} and Def.\ \ref{def-sistr}). These conditions significantly expand the algorithmic framework of RVI Q-learning previously considered in \cite{ABB01,WNS21a,WNS21b,WYS24}, and we address them with novel proof arguments in the stability and convergence analysis of RVI Q-learning (cf.\ Rem.~\ref{rmk-novel-proof}). 

A further contribution of this paper is a sharpened characterization of the convergence behavior of RVI Q-learning. As noted earlier, in weakly communicating SMDPs, solutions to the AOE may have multiple degrees of freedom and need not be unique up to an additive constant \cite{ScF78}. In the absence of stochastic noise and asynchrony, the classical RVI algorithm converges to a unique (path-dependent) solution \cite{Sch71,ScF77,platzman1977improved}. It is therefore natural to ask whether RVI Q-learning can also converge to a unique solution, rather than merely approaching the solution set. 

We give an affirmative answer to this question, providing implementable algorithmic conditions under which RVI Q-learning converges to a unique (sample path-dependent) solution. This constitutes another main contribution of the paper. The proof builds on our recent analysis of shadowing properties of asynchronous SA from our companion paper \cite{YWS25a}, which in turn relies on the dynamical system approach of Hirsch and Bena\"{i}m that has been applied previously to synchronous SA algorithms \cite{Ben96,Ben99,BeH96,Hir94}.  

Since MDPs are special cases of SMDPs, all of our analyses and results also apply to average-reward MDPs.

We now formally outline the main contributions of this paper:
\begin{itemize}[leftmargin=0.72cm,labelwidth=!]
\item[(i)] We introduce new monotonicity conditions to substantially generalize RVI Q-learning (see Assum.~\ref{cond-f}, Ex.~\ref{ex-f}, and Prop.~\ref{prp-sol-set}). 
\item[(ii)] Leveraging our SA results from \cite{YWS25a}, we establish the almost sure convergence of the generalized algorithm to a compact and connected subset of solutions to the AOE, for weakly communicating SMDPs under mild model assumptions (Thm.~\ref{thm-rvi-ql}).
\item[(iii)] Using our sharpened convergence result from \cite{YWS25a}---based on an analysis of shadowing properties of asynchronous SA---we further establish, under additional stepsize and asynchrony conditions, the almost sure convergence of RVI Q-learning to a unique AOE solution that depends on the sample path (Thm.~\ref{thm-rvi-ql2}).
\end{itemize}

\vspace*{0.08cm}
The paper is organized as follows. In Sec.~\ref{sec-sa}, we recall the general asynchronous SA framework and the results from \cite{YWS25a} that will be applied in this paper. In Sec.~\ref{sec-rvi}, after an overview of average-reward SMDPs (Sec.~\ref{sec-3.1}), we present our generalized RVI Q-learning algorithm and its convergence properties (Secs.~\ref{sec-3.2},~\ref{sec-3.3}). The proofs are given in Sec.~\ref{sec-4}. The paper concludes with a brief summary in Sec.~\ref{sec-conc-rmks}.

\vspace*{0.1cm}
\noindent {\bf Notation:} In this paper, $\ind \{ E\}$ denotes the indicator for an event $E$; $\P(X)$ denotes the set of probability measures on a measurable space $X$; and $\E[\,\cdot\,]$ denotes expectation. For $a, b \in \R$, $a \vee b \= \max \{ a, b\}$ and $a \wedge b \= \min \{ a, b\}$. The symbol $\1$ stands for the vector of all ones in $\R^d$. The addition $x + c$ for a vector $x \in \R^d$ and a scalar $c$ represents adding $c$ to each component of $x$. This notation also applies to the addition of a vector-valued function with a scalar-valued function. 
For convergence of functions, we write $\overset{p}{\to}$ for pointwise convergence and $\overset{u.c.}{\to}$ for uniform convergence on compact subsets of the domain.

\section{Asynchronous SA: Assumptions \& Key Results from \cite{YWS25a}} 
 \label{sec-sa}

This section recalls the asynchronous SA framework and two convergence theorems from our companion paper \cite{YWS25a} that will be applied in this paper. 

\subsection{Setup and Key Theorems} \label{sec-2.1}
We consider the asynchronous SA framework introduced by Borkar~\cite{Bor98,Bor00}, with an associated function satisfying the Borkar--Meyn stability criterion \cite{BoM00}. Specifically, consider an SA algorithm that operates in a finite-dimensional space $\R^d$ and, given an initial vector $x_0 \in \R^d$, iteratively computes $x_n \in \R^d$ for $n \geq 1$ using an asynchronous scheme as follows. Let $\alpha_n > 0$, $n \geq 0$, be a given sequence of diminishing stepsizes. At iteration $n \geq 0$, a nonempty random subset $Y_n$ is selected from $\I : = \{ 1, 2, \ldots, d\}$. For $i \not\in Y_n$, the $i$th component of $x_n$ remains unchanged: $x_{n+1}(i) = x_n(i)$. For $i \in Y_n$, with $\nu(n,i) \= \sum_{k=0}^{n-1} \ind \{ i \in Y_k\}$ denoting the cumulative number of updates to the $i$th component prior to iteration $n$, let
\begin{equation} \label{eq-alg0}
    x_{n+1}(i)  = x_n(i)  + \alpha_{\nu(n,i)} \big( h_i (x_n) + M_{n+1}(i) + \epsilon_{n+1}(i) \big), \qquad i \in Y_n.
\end{equation}  
Here, $h_i$ denotes the $i$th component of a function $h : \R^d \to \R^d$, assumed to be Lipschitz continuous, and $M_{n+1}(i)$ and $\epsilon_{n+1}(i)$ denote the $i$th components of a centred noise term $M_{n+1}$ and a biased noise term $\epsilon_{n+1}$, respectively. 
The algorithm is associated with an increasing family $\{\F_n\}_{n \geq 0}$ of $\sigma$-fields, where $\F_n \supset \sigma (x_m, Y_m, M_m, \epsilon_m; m \leq n)$. It satisfies Assums.~\ref{cond-h}--\ref{cond-us} listed below. 

Assumption~\ref{cond-h} is the Borkar--Meyn stability criterion \cite{BoM00}, which is central to ensuring the stability of the algorithm---that is, the boundedness of the iterates $\{x_n\}$. The criterion is stated through an ordinary differential equation (ODE) associated with a scaling limit of the function $h$ and imposes conditions on the behavior of solutions to $\dot{x} = h(x)$ when they are far from the origin.

\begin{assumption}[Conditions on function $h$] \label{cond-h} \hfill 
\begin{enumerate}[leftmargin=0.75cm,labelwidth=!] 
\item[{\rm (i)}] $h$ is Lipschitz continuous: for some $L_h \geq 0$, $\| h(x) - h(y) \| \leq L_h \| x - y\|$ for all $x, y \in \R^{d}$.
\item[{\rm (ii)}] Define $h_c(x) \= h(cx)/c$ for $c \geq 1$. As $c \uparrow \infty$, $h_c \overset{u.c.}{\to} h_\infty : \R^d \to \R^d$.
\item[\rm (iii)] The ODE
$ \dot{x}(t) = h_\infty (x(t)) $
has the origin as its unique globally asymptotically stable equilibrium.
\end{enumerate}
\end{assumption} 

Assumption~\ref{cond-ns} on the noise terms is more general than those considered in \cite{Bor98,Bor00,BoM00} and is needed to extend the analysis of RVI Q-learning from MDPs to SMDPs.
Assumptions~\ref{cond-ss}--\ref{cond-us} are algorithmic conditions based on those introduced in Borkar~\cite{Bor98,Bor00} for asynchronous SA, and they closely follow the versions adopted in the RVI Q-learning algorithm of~\cite{ABB01}.
We will discuss these conditions in Sec.~\ref{sec-sa-cond} in connection with RVI Q-learning; see also the related remarks in our companion paper \cite[Sec.~2.1]{YWS25a}.

\begin{assumption}[Conditions on noise terms $M_n, \epsilon_n$] \label{cond-ns} For all $n \geq 0$, we have:
\begin{enumerate}[leftmargin=0.7cm,labelwidth=!] 
\item[{\rm (i)}] 
$\E [ \| M_{n+1} \| ] < \infty$, $\E [ M_{n+1} \mid \F_n ] = 0$ and $ \E[ \| M_{n+1} \|^2 \mid \F_n ] \leq K_n (1 +\| x_n \|^2)$ a.s., for some $\F_n$-measurable $K_n \geq 0$ with $\sup_n K_n < \infty$ a.s. 
\item[{\rm (ii)}] 
$\| \epsilon_{n+1} \| \leq \delta_{n+1} ( 1 + \| x_n \|)$, where $\delta_{n+1}$ is $\F_{n+1}$-measurable and $\delta_n \overset{n \to \infty}{\to} 0$ a.s. 
\end{enumerate}
\end{assumption} 

\begin{assumption}[Stepsize conditions]  \label{cond-ss} \hfill
\begin{enumerate}[leftmargin=0.7cm,labelwidth=!] 
\item[{\rm (i)}] $\sum_n \alpha_n = \infty$, $\sum_n \alpha_n^2 < \infty$, and 
$\alpha_{n+1} \leq \alpha_n$ for all $n$ sufficiently large.
\item[{\rm (ii)}] For $x \in (0,1)$, 
$\sup_n \frac{\alpha_{[ x n]}}{ \alpha_n} < \infty$,
where $[x n]$ denotes the integral part of $xn$.
\item[{\rm (iii)}] For $x \in (0,1)$, as $n \to \infty$, $\frac{ \sum_{k=0}^{[ y n ]} \alpha_k }{ \sum_{k=0}^{n} \alpha_k} \to 1$ uniformly in $y \in [x, 1]$.
\end{enumerate}
\end{assumption} 

For $x > 0$ and $n \geq 0$, define $N(n,x) \= \min  \left\{ m > n : \sum_{k = n}^m \alpha_k \geq x \right\}$.
\begin{assumption}[Asynchronous update conditions] \label{cond-us} \hfill 
\begin{enumerate}[leftmargin=0.7cm,labelwidth=!]
\item[{\rm (i)}]  For some deterministic $\Delta > 0$, 
$\liminf_{n \to \infty} \nu(n,i)/n  \geq \Delta$ a.s., for all $i \in \I$.
\item[\rm (ii)] For each $x > 0$, the limit $\lim_{n \to \infty} \frac{ \sum_{k = \nu(n,i)}^{\nu(N(n,x), i)} \alpha_k}{ \sum_{k = \nu(n,j)}^{\nu(N(n,x), j)} \alpha_k}$ exists a.s., for all $i, j \in \I$.
\end{enumerate}
\end{assumption} 

Theorem~\ref{thm-ql} below establishes the convergence of algorithm~\eqref{eq-alg0} under the preceding assumptions. It is derived in \cite{YWS25a} from general stability and convergence results, building on the ODE-based proof methods introduced in \cite{Bor98,Bor00,BoM00} (see also \cite{Bor23}) and relating the algorithm's asymptotic behavior to solutions of associated (autonomous and non-autonomous) ODEs. In this theorem, the asymptotic behavior of algorithm~\eqref{eq-alg0} is described first in terms of the iterates $\{x_n\}$ and then in terms of a continuous trajectory formed from them, which provides additional insight into the algorithm's dynamic behavior. 

Before stating the theorem, we recall the relevant definitions. Consider the ODE $\dot{x}(t) = h(x(t))$ with equilibrium set
$$E_h \= \{ x \in \R^d \mid h(x) = 0 \}.$$
Under Assum.~\ref{cond-h}, $E_h$ is nonempty and compact \cite[Lem.~2.1]{YWS25a}. In the theorem, we will further assume that $E_h$ is \emph{globally asymptotically stable}. Recall that this means that (i) every solution $x(t)$ of the ODE converges to $E_h$ as $t \to \infty$, and (ii) $E_h$ is \emph{Lyapunov stable}, i.e., for every $\epsilon > 0$, there exists $\delta > 0$ such that solutions starting from the $\delta$-neighborhood of $E_h$ remain within its $\epsilon$-neighborhood for all $t \geq 0$ (see \cite[Chap.\ 4.2.2]{KuY03}).

Define a linearly interpolated trajectory $\bar x(t)$ from $\{x_n\}$ with aggregated stepsizes 
$$\textstyle{\tl \alpha_n = \sum_{i \in Y_n}  \alpha_{\nu(n, i)},} \quad n \geq 0,$$ 
as the elapsed times between consecutive iterates. Specifically, for $n \geq 0$, let $\tl t(n) \= \sum_{k=0}^{n -1} \tl \alpha_k$ with $\tl t(0) \= 0$, and define $\bar x(\tl t(n)) \= x_n$ and 
\begin{equation} 
 \bar x(t) \=  x_n +  \tfrac{t - \tl t(n)}{\tl t(n+1) - \tl t(n)} \, ( x_{n+1} - x_n), \ \ \,  t \in [\tl t(n), \tl t(n+1)]. \label{eq-cont-traj2}
\end{equation} 
Extend $\bar x(\cdot)$ to $(-\infty, \infty)$ by setting $\bar x(t) = x_0$ for $t < 0$. We refer to the time variable $t$ in $\bar x(t)$ as the `ODE-time.'

\begin{theorem}[{from \cite[Cor.~2.1]{YWS25a}}] \label{thm-ql} 
Let Assums.~\ref{cond-h}--\ref{cond-us} hold, and let $E_h$ be globally asymptotically stable for the ODE $\dot{x}(t) = h(x(t))$. Then the following hold a.s.\ for algorithm \eqref{eq-alg0}:
\begin{itemize}[leftmargin=0.65cm,labelwidth=!]
\item[\rm (i)] The sequence $\{x_n\}$ converges to a compact connected subset of $E_h$.
\item[\rm (ii)] For any $\delta > 0$ and any convergent subsequence $\{x_{n_k}\}$, as $k \to \infty$,
$$\tau_{\delta,k} \= \min \left\{ |s| :  \| \bar x(t_{n_k} + s) - x^* \| > \delta, \ s \in \R \right\} \to \infty,$$ 
where $\bar x(\cdot)$ is the continuous trajectory defined above, 
$t_{n_k} = \tl t(n_k)$ is the ODE-time when $x_{n_k}$ is generated, and $x^* \in E_h$ is the point to which $\{x_{n_k}\}$ converges.
\end{itemize}
\end{theorem}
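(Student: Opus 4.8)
The result is imported from the companion paper \cite{YWS25a}, so my plan is to reconstruct the ODE-method argument that underlies it, in three stages. The first and most technical stage is \emph{stability}: showing that $\{x_n\}$ is bounded a.s. Here I would use Assum.~\ref{cond-h} through the standard rescaling device. Along any hypothetical subsequence on which $\|x_n\|$ grows, one rescales the iterates by their norm and shows that the rescaled interpolated trajectory asymptotically follows the scaling-limit ODE $\dot x = h_\infty(x)$; since the origin is its unique globally asymptotically stable equilibrium (Assum.~\ref{cond-h}(iii)), the rescaled trajectory is pulled toward the origin, contradicting unbounded growth. The generalized noise condition Assum.~\ref{cond-ns}(i), whose conditional second-moment bound carries the factor $1+\|x_n\|^2$, together with the biased term of Assum.~\ref{cond-ns}(ii), must be controlled \emph{within} this rescaling and uniformly across the asynchronous stepsize schedule; this bookkeeping is where most of the care is needed.

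Second, once boundedness is in hand, I would show that the interpolated trajectory $\bar x(\cdot)$ of \eqref{eq-cont-traj2} is an asymptotic pseudotrajectory (APT) of the flow $\Phi$ generated by $\dot x = h(x)$. The centred-noise term is averaged out by a martingale argument using Assum.~\ref{cond-ns}(i) and the square-summable stepsizes of Assum.~\ref{cond-ss}(i), while the biased term vanishes by Assum.~\ref{cond-ns}(ii). The genuinely asynchronous feature---that components update at different rates---is absorbed through a time-change: the aggregated stepsizes $\tl\alpha_n$ and the relative-frequency conditions of Assum.~\ref{cond-us}, together with the stepsize regularity of Assum.~\ref{cond-ss}(ii)--(iii), keep the per-component clocks mutually comparable, so that after rescaling ODE-time the asynchronous iteration tracks the single autonomous ODE $\dot x = h(x)$. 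This reconciliation of per-component time scales is, I expect, the main obstacle, and is precisely the point where the earlier analyses \cite{ABB01,WNS21a,WNS21b} had gaps. With $\bar x(\cdot)$ established as a bounded APT of a flow whose equilibrium set $E_h$ is globally asymptotically stable, part (i) follows from the Hirsch--Bena\"{i}m limit-set theory \cite{Ben96,Ben99,BeH96,Hir94}: the limit set $L$ of the trajectory is a compact, connected, internally chain transitive set, and any internally chain transitive set of such a flow must lie in $E_h$, giving $L \subseteq E_h$.

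For part (ii), let $x_{n_k} \to x^* \in E_h$, fix $T > 0$, and argue by contradiction: suppose $\tau_{\delta,k} \le T$ infinitely often, and choose $s_k$ with $|s_k| \le T$ and $\|\bar x(t_{n_k}+s_k) - x^*\| > \delta$; passing to a subsequence, $s_k \to \sigma \in [-T,T]$ and $\bar x(t_{n_k}+s_k) \to y$ with $\|y - x^*\| \ge \delta$. If $\sigma \ge 0$, the APT property from ODE-time $t_{n_k}$ gives $\bar x(t_{n_k}+s) \to \Phi_s(x^*) = x^*$ uniformly on $[0,T]$ (as $x^*$ is an equilibrium), forcing $y = x^*$, a contradiction. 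If $\sigma < 0$, then $t_{n_k}+s_k \to \infty$, so $y \in L \subseteq E_h$ by part (i) and hence $\Phi_{|\sigma|}(y) = y$; applying the APT property forward from ODE-time $t_{n_k}+s_k$ over length $|s_k|$ yields $x_{n_k} = \bar x(t_{n_k}) \to \Phi_{|\sigma|}(y) = y$, so again $y = x^*$, a contradiction. Thus $\tau_{\delta,k} \to \infty$. The one subtlety is that every limit point of the trajectory lies in $E_h$---exactly what part (i) supplies---which is what lets the equilibrium identity $\Phi_t(y)=y$ close the backward direction of the argument.
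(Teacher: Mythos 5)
Your reconstruction is sound and follows exactly the route the paper attributes to its source: the paper itself offers no proof of Thm.~\ref{thm-ql}---it is imported verbatim from \cite[Cor.~2.1]{YWS25a}, with only the remark that it is derived via the ODE-based methods of Borkar--Meyn and the limit-set/shadowing machinery of Hirsch and Bena\"{i}m. Your three-stage outline (rescaling for stability, asymptotic-pseudotrajectory plus internally-chain-transitive limit sets for part (i), and the forward/backward contradiction argument for part (ii), which correctly leans on part (i) to identify the backward limit point as an equilibrium) matches that described approach, so there is nothing in this paper to contrast it with.
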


This theorem will be applied to establish general convergence properties of RVI Q-learning (see Thm.~\ref{thm-rvi-ql} and Sec.~\ref{sec-4.1}). Both parts of the theorem describe convergence to the set $E_h$, rather than to a single point. When $E_h$ contains non-isolated equilibrium points, part (ii) shows that as time progresses, the duration of ODE-time that algorithm \eqref{eq-alg0} spends near each limiting point increases without bound.

Next, we recall a sharpened convergence theorem from \cite{YWS25a}, together with the additional conditions it imposes on the biased noise terms, stepsizes, and asynchrony, to ensure convergence to a single equilibrium point. 

\begin{assumption}[Additional condition on noise term $\epsilon_n$] \label{cond-mns}
There exists a deterministic constant $\mu_\delta < 0$ such that $\limsup_{n \to \infty} \tfrac{\ln(\delta_{n+1})}{\sum_{k=0}^n \alpha_k} \leq \mu_\delta$ a.s., where $\{\delta_{n}\}$ are the random variables involved in Assum.~\ref{cond-ns}(ii) for $\{\epsilon_n\}$.
\end{assumption}

Consider two specific stepsize sequences that satisfy Assum.~\ref{cond-ss}:  
\begin{equation} \label{def-stepsize-class}
   \text{$\alpha_n = \tfrac{1}{A n}$ (class 1) \ \ \ and \ \ \ $\alpha_n = \tfrac{1}{A n \ln n}$ (class 2)},
\end{equation}   
where $A > 0$ is a scaling parameter (with $\alpha_n$ set to $\tfrac{1}{A}$ if the denominator is zero). For class-1 stepsizes, we impose an additional condition on asynchony:

\begin{assumption}[Additional asynchrony condition] \label{cond-mus} 
For class-1 stepsizes, the asynchronous update schedules are such that a.s.\ for all $i \in \I$, $\nu(n,i)/n \to p_i$ as $n \to \infty$, for some (sample path-dependent) $p_i \in (0,1]$. Moreover, there exists a deterministic constant $\gamma > 0$ such that $\limsup_{n \to \infty} n^{\gamma} \big| \nu(n,i)/n - p_i \big| < \infty$ a.s.
\end{assumption} 

Let $L_h$ be the Lipschitz constant of $h$ under $\| \cdot\|_\infty$. 

\begin{theorem}[{from \cite[Thm.~2.3]{YWS25a}}] \label{thm-3}
Consider algorithm~\eqref{eq-alg0} with class-1 stepsizes where $\tfrac{A}{2} > L_h$ or class-2 stepsizes where $A >  L_h$. Let Assums.~\ref{cond-h}-\ref{cond-us},~\ref{cond-mns} with $\mu_\delta < - L_h$ hold, and for class-1 stepsizes, also assume Assum.~\ref{cond-mus} with $\gamma A > L_h$.
For the ODE $\dot{x}(t) = h(x(t))$, suppose that the set $E_h$ is globally asymptotically stable and that every solution $x(t)$ converges to a unique point in $E_h$ as $t \to \infty$. Then the sequence $\{x_n\}$ from algorithm \eqref{eq-alg0} converges a.s.\ to a point in $E_h$ that depends on the sample path. 
\end{theorem}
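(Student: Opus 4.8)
The plan is to upgrade the set-convergence of Theorem~\ref{thm-ql} to point-convergence by showing that, almost surely, the interpolated trajectory $\bar x(\cdot)$ does not merely approach $E_h$ but asymptotically \emph{shadows} a genuine solution of the limiting ODE $\dot x = h(x)$. Since every such solution converges to a unique point of $E_h$ by hypothesis, the shadowed trajectory inherits a single limit, forcing $x_n$ to converge. Concretely, I would aim to produce a deterministic time $T_0$ and a solution $\xi : [T_0,\infty) \to \R^d$ of the ODE such that $\sup_{t \ge T} \| \bar x(t) - \xi(t) \| \to 0$ as $T \to \infty$ a.s. Granting this, $\xi(t) \to \xi^*$ for a unique $\xi^* \in E_h$ yields $\bar x(t) \to \xi^*$, hence $x_n \to \xi^*$, with $\xi^*$ depending on the sample path through $\xi$.

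The first technical step is to absorb the asynchrony into a deterministic time change. Using Assum.~\ref{cond-us}(i)--(ii) and, for class-1 stepsizes, the sharpened rate in Assum.~\ref{cond-mus} ($\nu(n,i)/n \to p_i$ with polynomial rate $\gamma$), I would reparametrize the per-component clocks so that, in the common ODE-time $t$, the increments in \eqref{eq-alg0} track the same autonomous field $h$ up to a vanishing error. The gap between the exact update fractions and their limits $p_i$ enters as an additional bias, and the condition $\gamma A > L_h$ is precisely what guarantees that this reparametrization error decays fast enough---exponentially in ODE-time---to be summable rather than merely null.

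The heart of the argument is a quantitative, Gronwall-type comparison over successive ODE-time windows. On a window $[t, t+T]$ I would write the deviation of $\bar x$ from the flow started at $\bar x(t)$ as a sum of three contributions: (a) the martingale part $\sum_k \alpha_{\nu(k,i)} M_{k+1}(i)$, controlled via the conditional second-moment bound of Assum.~\ref{cond-ns}(i) together with $\sum_n \alpha_n^2 < \infty$, whose tail fluctuation over $[t,\infty)$ vanishes a.s.; (b) the bias part driven by $\epsilon_{k+1}$, bounded through Assum.~\ref{cond-ns}(ii) by $\sum_k \alpha_k\, \delta_k\,(1 + \| x_k\|)$, where Assum.~\ref{cond-mns} with $\mu_\delta < -L_h$ forces $\delta_n$ to decay geometrically in the accumulated stepsize and hence in ODE-time; and (c) the discretization and time-change error from the previous step. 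The Lipschitz growth of the flow inflates an initial discrepancy by at most $e^{L_h T}$, while the stepsize normalization ($\tfrac{A}{2} > L_h$ for class 1, $A > L_h$ for class 2) makes the accumulated perturbation over $[t,\infty)$ decay strictly faster than $e^{L_h t}$ grows. The net effect is that the window-by-window errors form a convergent series, so the total drift of $\bar x$ along $E_h$ is a.s.\ finite; equivalently, the per-window flows splice into a single limiting solution $\xi$ satisfying the uniform shadowing claim.

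The main obstacle is exactly this competition between the Lipschitz expansion rate $L_h$ of the flow and the decay rates supplied by the stepsize, bias, and asynchrony conditions. Because $E_h$ is in general a continuum of non-isolated equilibria, the flow is only Lyapunov stable transversally and provides no contraction \emph{along} $E_h$, so the internally-chain-transitive argument underlying Theorem~\ref{thm-ql} cannot exclude an $\epsilon$-chain that slides the limit set across a connected piece of $E_h$. Ruling out this lateral drift is the crux, and it is why merely $\delta_n \to 0$ (Assum.~\ref{cond-ns}(ii)) does not suffice and the strengthened, rate-quantified hypotheses ($\mu_\delta < -L_h$, $\tfrac{A}{2} > L_h$ or $A > L_h$, and $\gamma A > L_h$) are required: they convert each of the three error contributions into a summable series, upgrading ``the trajectory stays near $E_h$'' to ``the trajectory settles at one point of $E_h$.''
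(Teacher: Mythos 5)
This theorem is not proved in the paper itself---it is imported from the companion paper \cite{YWS25a}, and the text only characterizes its proof as an analysis of the shadowing properties of $\bar x(\cdot)$ in the Bena\"{i}m--Hirsch dynamical-systems framework. Your proposal---upgrading the set-convergence of Thm.~\ref{thm-ql} to point-convergence by showing that $\bar x(\cdot)$ asymptotically shadows a single true solution of $\dot x = h(x)$, with the rate conditions $\tfrac{A}{2} > L_h$ (or $A > L_h$), $\mu_\delta < -L_h$, and $\gamma A > L_h$ serving precisely to make the martingale, bias, and asynchrony perturbations decay faster than the flow's $e^{L_h t}$ Lipschitz expansion---is essentially that same approach.
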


This theorem is based on an analysis of the shadowing properties of the trajectory $\bar x(\cdot)$ using a dynamical systems approach of Hirsch and Bena\"{i}m \cite{Ben96,Ben99,BeH96,Hir94}. The notion of shadowing concerns the asymptotic alignment of the trajectory $\bar x(\cdot)$ with a \emph{unique solution trajectory} of the associated limiting ODE, and this alignment is key to ensure the desired convergence to a single equilibrium point.

We will apply this theorem to strengthen the convergence analysis of RVI Q-learning for weakly communicating SMDPs (see Thm.~\ref{thm-rvi-ql2} and Sec.~\ref{sec-shad2}).

\subsection{Remarks on the Required Conditions} \label{sec-sa-cond}

The algorithmic design of RVI Q-learning in this paper, as in prior studies \cite{ABB01,WNS21a,WNS21b,WYS24}, is shaped by the requirements of the underlying asynchronous SA framework, both directly and indirectly. Some algorithmic requirements, such as those on stepsizes and asynchrony, will be explicitly imposed on RVI Q-learning. Others, like the Borkar–Meyn stability criterion, pertain to the nature of the application problem and will be shown to hold through domain-specific analysis---though the need to ensure these assumptions indirectly influences the algorithm's formulation. Additional abstract conditions, such as those on noise terms, are motivated by applications and will either be verified directly or translated into implementable algorithmic requirements.

The following remarks clarify the roles of these conditions in the SA framework, framed in the RL context to prepare for their use in developing and analyzing RVI Q-learning. Readers primarily interested in the RL algorithm may choose to revisit this material later for a more integrated view of the analysis and algorithm design.

To apply the asynchronous SA results above, we will cast RVI Q-learning in the form of \eqref{eq-alg0} and prove that Assums.~\ref{cond-h} and \ref{cond-ns}, concerning the function $h$ and the noise terms, along with the conditions on the associated ODE and the set $E_h$, hold based on the formulation of RVI Q-learning and the structural properties of weakly communicating SMDPs. 

Regarding the noise conditions in Assum.~\ref{cond-ns}:
\begin{itemize}[leftmargin=0.4cm,labelwidth=!]
\item The biased noise terms $\{\epsilon_{n}\}$ arise in SMDPs due to the dependence of the function $h$ on expected holding times (i.e., expected durations between state transitions), which are estimated from data with increasing accuracy by the RL algorithm.
\item Assumption~\ref{cond-ns}(i) relaxes the standard condition on the martingale difference noise terms $\{M_n\}$, which uses a deterministic constant $K$ instead of the $K_n$'s in the conditional variance bounds. For SMDPs, the standard condition on $\{M_n\}$ suffices when a known lower bound on expected holding times is available; otherwise, the more general condition Assum.~\ref{cond-ns}(i) is needed.
\item For MDPs, which are SMDPs with unit holding times, all $\epsilon_n = 0$, so the standard condition on $\{M_n\}$ suffices.
\end{itemize} 
See Lem.~\ref{lem-cond-noise} in Sec.~\ref{sec-4.1.1} for exactly how Assum.~\ref{cond-ns} is used in our analysis of RVI Q-learning for SMDPs.

Regarding the stepsize and asynchrony conditions in Assums.~\ref{cond-ss} and~\ref{cond-us}, together they establish a partial asynchrony mechanism that aligns the algorithm’s asymptotic behavior, on average, with that of a synchronous counterpart (cf.\ \cite{Bor98} and \cite[Rem.~2.3]{YWS25a}). This partial asynchrony is crucial for RVI Q-learning in average-reward problems, where the underlying mappings generally lack contraction or nonexpansion properties (unlike Q-learning for discounted-reward or some total-reward problems, where stepsizes and asynchronous update schedules can be chosen more flexibly \cite{Tsi94,YuB13}). As noted earlier, these conditions align with those used in the original RVI Q-learning algorithm \cite{ABB01}, and they will likewise be imposed on our algorithm.

Class-1 and class-2 stepsizes both satisfy Assum.~\ref{cond-ss}. To ensure Assum.~\ref{cond-us} on asynchrony when using these stepsizes, one approach is to select components for updating $x_n$ so that their sequence eventually follows an irreducible Markov chain on $\I$ (see the verifications in \cite[Ex.\ 3]{WYS24}). For class-1 stepsizes, the additional asynchrony condition Assum.~\ref{cond-mus} then also holds for any $0 < \gamma < \tfrac{1}{2}$ by the law of the iterated logarithm, as noted in \cite[Rem.~2.6(b)]{YWS25a}.

As to the remaining conditions involved in Thm.~\ref{thm-3}, they can also be satisfied through appropriate choices of stepsizes and asynchronous update schedules for RVI Q-learning. In particular, for SMDPs, since the biased noise terms $\{\epsilon_n\}$ arise from estimated expected holding times, the additional noise condition Assum.~\ref{cond-mns} with $\mu_\delta < - L_h$ can be translated into algorithmic requirements on the estimation of expected holding times and ensured accordingly. Moreover, an effective upper bound on $L_h$ can be obtained with minimal model knowledge---specifically, a lower bound on the minimum expected holding time---and used in place of the exact value of $L_h$ when setting the threshold for the stepsize scaling parameter $A$. (See Assum.~\ref{cond-rvi-alg2}, Ex.~\ref{ex-beta}, Thm.~\ref{thm-rvi-ql2}, and its proof in Sec.~\ref{sec-shad2}.) 

In the special case of MDPs, where all $\epsilon_n = 0$, Assum.~\ref{cond-mns} holds automatically with $\mu_\delta = - \infty$, and the minimum expected holding time is $1$---no model knowledge is required.

\section{RVI Q-Learning in Average-Reward SMDPs} \label{sec-rvi}

We begin with an overview of average-reward SMDPs, their optimality properties, and Schweitzer's classical RVI algorithm (Sec.~\ref{sec-3.1}), before introducing our generalized RVI Q-learning algorithm (Sec.~\ref{sec-3.2}) and presenting its convergence properties (Sec.~\ref{sec-3.3}).

\subsection{Average-Reward Weakly Communicating SMDPs} \label{sec-3.1}

We consider a standard finite state and action SMDP with state space $\S$ and action space $\A$. In this framework, the system's evolution and the decision-maker's actions follow specific rules (see, e.g., \cite[Chap.\ 11]{Put94}). When the system is in state $s \in \S$ and action $a \in \A$ is taken, the system transitions to state $S$ at some random time $\tau \geq 0$, known as the \emph{holding time}, and incurs a random reward $R$ upon this transition. The joint probability distribution of $(S, \tau, R)$ for this transition from $(s,a)$ is given by a (Borel) probability measure $\mathbb{P}_{sa}$ on $\S \times \R_+ \times \R$.
Let $\E_{sa}$ denote the expectation operator with respect to (w.r.t.)\ $\mathbb{P}_{sa}$. 

\begin{assumption}[Conditions on the SMDP model] \label{cond-smdp} \hfill
\begin{itemize}[leftmargin=0.7cm,labelwidth=!] 
\item[\rm (i)] For some $\epsilon > 0$, $\mathbb{P}_{sa}( \tau \leq \epsilon) < 1$ for all $s \in \S$ and $a \in \A$.
\item[\rm (ii)] For all $s \in \S$ and $a \in \A$, $\E_{sa}[ \tau^2] < \infty$ and $\E_{sa}[ R^2] < \infty$.
\end{itemize}
\end{assumption} 

Assumption~\ref{cond-smdp}(i) is standard and prevents an infinite number of state transitions in a finite time interval \cite[Lem.\ 1]{yushkevich1982semi}. Assumption~\ref{cond-smdp}(ii) is needed for RL; for the optimality results reviewed below, it suffices that the expected holding times and expected rewards incurred with each state transition are finite. 

Regarding the rules for decision-making, actions are applied initially at time $0$ and subsequently at discrete moments upon state transitions.
\footnote{The standard SMDP model studied here is most natural for modeling continuous-time decision systems with event-triggered actions or exponentially distributed holding times. For optimal control in more complex continuous-time problems, actions may need to change continuously between state transitions (see, e.g., \cite{Yus80a,Yus80b,Can84}). Nonetheless, the standard SMDP model remains useful for numerical solutions to such problems through the Markov chain approximation method and policy approximation via piecewise constant policies and quantized actions (see \cite{KuD01,PrY24} and the references therein).}
For notational simplicity, we assume (without loss of generality) that all actions from $\A$ are admissible at each state. At the $n$th transition moment, before selecting the next action, the decision-maker knows the \emph{history} of states, actions, rewards, and holding times realized up to that point, denoted by $h_n \= (s_0, a_0, r_1, \tau_1; \, \ldots; \, s_{n-1}, a_{n-1}, r_n, \tau_n; s_n)$. The decision-maker employs a randomized or nonrandomized decision rule, represented by a Borel-measurable stochastic kernel $\pi_n: h_n \mapsto \pi_n(\,\cdot\, | h_n) \in \P(\A)$, to select the next action $a_n$ based on the history $h_n$. The collection $\pi\=\{\pi_n\}_{n \geq 0}$ of these decision rules is called a \emph{policy}. The set of all such policies is denoted by $\Pi$. 

Our primary focus will be on \emph{stationary policies}, which select the actions $a_n$ based solely on the states $s_n$ in a time-invariant manner. Such a policy can be represented by a mapping $\pi: \S \to \P(\A)$ or $\pi: \S \to \A$, depending on whether the employed decision rule is randomized or nonrandomized.

We use an average-reward criterion to evaluate policy performance. The \emph{average reward rate} of a policy $\pi$ is defined for each initial state $s \in \S$ as: 
\begin{equation} \label{eq: r-pi smdp}
 r(\pi, s) \= \, \textstyle{ \liminf_{t \to \infty}  t^{-1} \, \E^\pi_s \left [\sum_{n = 1}^{N_t} R_n \right]\!,}
\end{equation} 
where $\E^\pi_s [ \,\cdot ]$ denotes the expectation w.r.t.\ the probability distribution of the random process $\{(S_n, A_n, R_{n+1}, \tau_{n+1})\}_{n \geq 0}$ induced by the policy $\pi$ and initial state $S_0=s$. The summation $\sum_{n = 1}^{N_t} R_n$ represents the total rewards received by time $t$, where $N_t$ counts the number of transitions by that time, defined as $N_t = \max \{ n \mid t_n  \leq t\}$ with $t_n \= \sum_{i=1}^n \tau_i$ and $t_0=0$. Assumption \ref{cond-smdp} ensures that $r(\pi, s)$ is well-defined, real-valued, and uniformly bounded across policies $\pi$ and states $s$ (as can be shown based on \cite[Lem.\ 1]{yushkevich1982semi}). If the policy $\pi$ is stationary, the $\liminf$ in definition \eqref{eq: r-pi smdp} can be replaced by $\lim$ according to renewal theory (see \citep{Ros70}). A policy is called \emph{optimal} if it achieves the \emph{optimal reward rate} $r^*(s) \= \sup_{\pi \in \Pi} r(\pi, s)$ for \emph{all} initial states $s \in \S$. 

Based on \cite[Thms.\ 2 and 3]{yushkevich1982semi}, under Assum.~\ref{cond-smdp}, there exists a nonrandomized stationary optimal policy, and such a policy, along with $r^*$, can be identified from a solution to the AOE (\emph{average-reward optimality equation}). This equation can be expressed in terms of either state values or state-and-action values. We opt for the latter form as it aligns with the RL application under consideration. 
Let 
$$r_{sa} \= \E_{sa} [ R ], \qquad t_{sa} \= \E_{sa} [ \tau], \qquad p_{ss'}^a \=  \mathbb{P}_{sa} ( S = s')$$ 
denote, respectively, the expected reward, the expected holding time, and the probability of transitioning to state $s'$ from state $s$ with action $a$. When $r^*$ remains constant regardless of the initial state, we seek a solution $(\bar r, q) \in \R \times \R^{|\S \times \A|}$ to the AOE: 
\begin{equation}
q(s, a)  =  r_{sa} - t_{sa} \cdot \bar r  +  \sum_{s' \in \S} p_{ss'}^a \max_{a' \in \A} q(s', a'),\qquad \ \, \forall \, s \in \S, \, a \in \A.  \label{eq-opt}
\end{equation} 
Recall that, in this case, solutions to~\eqref{eq-opt} exist, and their $\bar r$-components always equal the constant optimal reward rate $r^*$. Moreover, any stationary policy that solves the corresponding maximization problems in the right-hand side (r.h.s.)\ of AOE is optimal  \cite{ScF78,yushkevich1982semi}. Bounds on $r^*$ and performance bounds on policies can also be derived from AOE~\cite[Thm.\ 1]{platzman1977improved}. 

For RL, we shall focus on \emph{weakly communicating SMDPs}, wherein $r^*$ remains constant. These SMDPs are defined by their state communication structure \cite{Bat73,platzman1977improved,Put94}: they possess a unique \emph{closed communicating class}---a set of states such that starting from any state in the set, every state in it is reachable with positive probability under some policy, but no states outside it are ever visited under any policy. The remaining states, if any, are transient under all policies. 

Solutions to AOEs in weakly communicating SMDPs exhibit two structural properties that are important to the RL context we consider later. 
\begin{itemize}[leftmargin=0.4cm,labelwidth=!]
\item First, unlike a unichain SMDP, in a weakly communicating SMDP, the solutions for $q$ in \eqref{eq-opt} may not be unique up to an additive constant. Instead, they can have multiple degrees of freedom, depending on the recurrence structures of the Markov chains $\{S_n\}$ induced by stationary optimal policies, as characterized by Schweitzer and Federgruen \cite{ScF78} (see \cite[Sec.\ 2.2]{WYS24} for some illustrative examples). 
\item Second, despite this lack of uniqueness, these solutions can only `escape to $\infty$' asymptotically along the directions represented by constant vectors. 
\end{itemize}
This second property is encapsulated in the fact that in weakly communicating SMDPs with zero rewards, AOEs have unique solutions (up to an additive constant) represented by constant vectors. This fact is particularly important for the stability of our RL algorithms, and we state it in the lemma below. It can be inferred from the theory of \cite[Thms.\ 3.2 and 5.1]{ScF78} or proved directly (see \cite[Lem.\ 5.1 and Rem.\ 7.1(b)]{WYS24}):

\begin{lemma} \label{lem: sol-struc} 
Suppose Assum.~\ref{cond-smdp} holds and the SMDP is weakly communicating. 
If all rewards $\{r_{sa} \}_{s \in \S, a \in \A}$ are zero, the only solutions to AOE \eqref{eq-opt} are $\bar r = 0$ with $q(\cdot) \equiv c, c \in \R$.
\end{lemma}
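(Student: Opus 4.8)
The plan is to reduce the action-value optimality equation to a state-value equation and then exploit that, once $\bar r$ is pinned to $0$, the resulting value function is \emph{excessive} (superharmonic) for every action, which together with the communication structure forces it to be constant. Concretely, I would set $v(s) \= \max_{a} q(s,a)$ and rewrite \eqref{eq-opt} (with $r_{sa}=0$) as $q(s,a) = -\,t_{sa}\bar r + \sum_{s'} p^a_{ss'} v(s')$, whence $v(s) = \max_a\big[-t_{sa}\bar r + \sum_{s'} p^a_{ss'} v(s')\big]$. It then suffices to show $\bar r = 0$ and $v \equiv c$, since substituting back gives $q(s,a) = \sum_{s'} p^a_{ss'} c = c$. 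Throughout I write $P^a$ for the transition matrix with entries $p^a_{ss'}$ and $P_\sigma$ for the transition matrix of a stationary policy $\sigma$.

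For $\bar r = 0$: since every reward vanishes, $r(\pi,s)=0$ for all $\pi$ and $s$, so $r^*=0$, and by the recalled fact that every AOE solution has $\bar r = r^*$ we get $\bar r = 0$. A self-contained alternative, which I would include to avoid leaning on that citation, is to take a greedy policy $\mu$ attaining the maxima, so that $v = -\bar r\, t_\mu + P_\mu v$ with $t_\mu(s) = t_{s\mu(s)}$; averaging this identity against an invariant distribution $\eta$ of any recurrent class of $P_\mu$ gives $\bar r \sum_s \eta(s)\,t_{s\mu(s)} = 0$, and since $t_{sa} = \E_{sa}[\tau] > 0$ by Assum.~\ref{cond-smdp}(i), the sum is strictly positive, forcing $\bar r = 0$.

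With $\bar r = 0$ the equation becomes $v(s) = \max_a \sum_{s'} p^a_{ss'} v(s')$, so in particular $v \ge P_\sigma v$ for every stationary $\sigma$, and by monotonicity $v \ge P_\sigma^{\,n} v$ for all $n$. Let $m = \min_s v(s)$, attained at $s_*$. If $s'$ is reachable from $s_*$ (along a simple path realizable by some stationary $\sigma$, so that $P_\sigma^{\,n}(s_*,s') = \rho > 0$), then excessiveness and $v \ge m$ give $m = v(s_*) \ge (P_\sigma^{\,n} v)(s_*) \ge \rho\, v(s') + (1-\rho)m$, forcing $v(s') \le m$, hence $v(s')=m$. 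By weak communication the unique closed communicating class $C$ is reachable from every state and is internally communicating, so every state of $C$ is reachable from $s_*$; therefore $v \equiv m$ on $C$. Finally, for the greedy policy $\mu$ the identity $v = P_\mu v$ makes $v$ harmonic, and since all recurrent classes of $P_\mu$ lie in $C$ (states outside $C$ being transient under all policies), where $v \equiv m$, the absorption representation of bounded harmonic functions yields $v(s)=m$ for every transient $s$ as well. Thus $v \equiv m$ on $\S$, and the lemma follows with $c = m$.

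I expect the main obstacle to be the constancy argument in the last step: excessiveness delivers only lower bounds on $v$, so it propagates the \emph{minimum} forward along reachable states but cannot by itself determine the value at states from which $C$ is entered. The resolution is the asymmetric two-stage treatment---establishing $v \equiv m$ on $C$ by minimum-propagation, and only then handling the transient states through the equality (harmonic) structure of a greedy policy's chain. Secondary care is needed to confirm $t_{sa}>0$ from Assum.~\ref{cond-smdp}(i) and to note that reachability under arbitrary policies reduces to reachability under stationary ones.
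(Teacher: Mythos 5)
Your proof is correct. Note that the paper does not actually prove this lemma in-house: it states that the result ``can be inferred from the theory of [Schweitzer--Federgruen, Thms.~3.2 and 5.1] or proved directly (see [WYS24, Lem.~5.1 and Rem.~7.1(b)]),'' so there is no internal proof to match against. Your argument is a complete, self-contained substitute, and it follows the natural direct route one would expect the cited reference to take: reduce to the state-value form $v(s)=\max_a[-t_{sa}\bar r+\sum_{s'}p^a_{ss'}v(s')]$, kill $\bar r$ by averaging the greedy-policy identity against an invariant distribution (using $t_{sa}=\E_{sa}[\tau]>0$, which indeed follows from Assum.~3.1(i) since $\E_{sa}[\tau]\ge\epsilon\,\mathbb{P}_{sa}(\tau>\epsilon)>0$), propagate the minimum of the excessive function $v$ forward into the closed communicating class, and then pin down the transient states via harmonicity of $v$ for the greedy chain (Cesàro limits of $P_\mu^n$ concentrating on the recurrent classes, all of which lie in $C$). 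Your two-stage treatment of $C$ versus the transient states is exactly the right care point, and your remark that excessiveness alone cannot determine $v$ at transient states is accurate. One small caveat: your first, citation-based argument that $r^*=0$ is slightly delicate as written, because $r_{sa}=0$ only says the rewards have zero mean, and in the SMDP model $R_n$ and $\tau_n$ are jointly distributed, so $\{N_t\ge n\}$ is not independent of $R_n$ and $\E[\sum_{n=1}^{N_t}R_n]=0$ does not follow from a naive Wald argument. This does not affect your proof, since your self-contained alternative via the invariant distribution is airtight and is the one you should keep.
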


When $r^*$ is constant, Schweitzer's RVI algorithm \citep{Sch71} can be applied to solve AOE. We describe a version of this algorithm, which can be compared with the Q-learning algorithms introduced later. Given an initial vector $Q_0 \in \R^{|\S \times \A |}$, compute $Q_{n+1} \in \R^{|\S \times \A |}$ iteratively for $n \geq 0$ as follows: for all $(s,a) \in \S \times \A$,
\begin{equation} \label{eq-s-rvi}
  Q_{n+1}(s,a) = Q_n(s,a) + \bar \alpha \Big( \frac{r_{sa} + \sum_{s' \in \S} p_{ss'}^a \max_{a' \in \A} Q_{n}(s', a')  - Q_n(s,a)}{t_{sa}} -  f(Q_n) \Big),
\end{equation}   
where, for a fixed state and action pair $(\bar s, \bar a)$,
$$f(Q_n) \=  t_{\bar s \bar a}^{-1}\Big(r_{\bar{s}\bar{a}} + \sum_{s' \in \S} p_{\bar ss'}^{\bar a} \max_{a' \in \A} Q_{n}(s', a')  - Q_n(\bar s, \bar a)\Big),$$ 
and the stepsize $\bar \alpha$ can be chosen within $(0, \min_{s \in \S, a \in \A} t_{sa})$. This algorithm converges when $r^*$ remains constant; specifically, as $n \to \infty$, $f(Q_n) \to r^*$ and $Q_n \to \bar q$, a solution of AOE \eqref{eq-opt} \citep{platzman1977improved,Sch71,ScF77}.

\begin{remark} \rm
The RVI Q-learning algorithm introduced next is a model-free, asynchronous stochastic counterpart to Schweitzer's RVI algorithm. To focus the discussion, we assume it operates in weakly communicating SMDPs under the preceding model conditions and average-reward criterion. However, our results presented below apply more broadly to scenarios where AOE \eqref{eq-opt} holds and the SMDP with zero rewards exhibits the solution structure asserted in Lem.~\ref{lem: sol-struc}. 
In particular:\\
\noindent (a) Our results extend to an alternative average-reward criterion (criterion $w_2$ in \cite[Eq.\ (4)]{yushkevich1982semi}), which is suitable for incremental reward accrual rather than lump-sum rewards per transition. AOE holds under this criterion for fairly general continuous reward generation mechanisms \cite[Thm.\ 3]{yushkevich1982semi}.\\
\noindent (b) Beyond weakly communicating SMDPs, our results apply broadly to SMDPs with constant $r^*$, where a stationary policy that applies every action with positive probability induces a Markov chain $\{S_n\}$ with a single recurrent class (possibly with transient states). Based on \cite{ScF78}, these SMDPs exhibit the required solution structure described in Lem.~\ref{lem: sol-struc}. \myqed
\end{remark}
\vspace*{-0.1cm} 

\subsection{RVI Q-Learning: Generalized Formulation} \label{sec-3.2}

The RVI Q-learning algorithm we introduce below is indeed a family of average-reward RL algorithms based on the RVI approach. These algorithms operate without knowledge of the SMDP model, using random transition data from the SMDP to solve AOE \eqref{eq-opt}. Unlike the classical RVI algorithm \eqref{eq-s-rvi}, these algorithms are asynchronous, updating only for a subset of state-action pairs at each iteration based on available data. The critical scaling by the expected holding times $t_{sa}$, essential for the convergence of \eqref{eq-s-rvi} in SMDPs, is applied here using data estimates instead.

Another key difference from the classical RVI algorithm is the choice of function $f$ for estimating the optimal reward rate. Due to stochasticity and asynchrony, as first proposed in \cite{ABB01}, the choice of $f$ must provide the learning algorithms with a `self-regulating' mechanism to ensure stability and convergence.

An immediate predecessor of the following algorithm for SMDPs was introduced by Wan et al.\ \cite{WNS21b} in the context of hierarchical control for average-reward MDPs. The algorithmic framework presented here builds on the original formulation of RVI Q-learning for MDPs by Abounadi et al.\ \cite{ABB01} and its recent extensions by Wan et al.\ \cite{WNS21a,WNS21b}. Our major generalization introduced below is in the class of functions $f$ used.

Consider a weakly communicating SMDP under Assum.~\ref{cond-smdp}. Let $d = |\S \times \A|$. The RVI Q-learning algorithm maintains estimates of state-action values and expected holding times, represented by $d$-dimensional vectors $Q_n$ and $T_n \geq 0$ at each iteration $n$. The initial $Q_0$ and $T_0$, considered as given, can be chosen arbitrarily. The algorithm uses two separate sequences of deterministic, diminishing stepsizes, $\{\alpha_k\}$ and $\{\beta_k\}$, for updating $Q_n$ and $T_n$, respectively, along with a given sequence of diminishing positive scalars, $\eta_n$, to lower-bound the estimated holding times at each iteration. (If a positive lower bound on $\min_{s \in \S, a \in \A} t_{sa}$ is known a priori, it can replace the $\eta_n$'s; for example, $1$ in the case of hierarchical control in MDPs \cite{WNS21b}.) The estimates $Q_n$ and $T_n$ are updated iteratively as follows. At iteration $n \geq 0$: 
\begin{itemize}[leftmargin=0.5cm,labelwidth=!]
\item A subset $Y_n \not=\varnothing$ of state-action pairs is randomly selected. For each pair $(s,a) \in Y_n$, there is a freshly generated data point, consisting of a random state transition, holding time, and reward  $(S_{n+1}^{sa}, \tau_{n+1}^{sa}, R_{n+1}^{sa})$ jointly distributed according to $\mathbb{P}_{sa}$. 
\item Use these transition data to update the corresponding components of $Q_n$ and $T_n$:
\begin{itemize}[leftmargin=0.15cm,labelwidth=!]
\item[] for $(s, a) \not\in Y_n$:  $Q_{n+1}(s, a) \= Q_{n}(s, a)$ and $T_{n+1}(s, a) \= T_{n}(s, a)$;
\item[] for $(s, a) \in Y_n$:\vspace*{-2pt}
\begin{align}
   Q_{n+1}(s, a)  & \= Q_{n}(s, a)   
    \nonumber \\
  &  +  \alpha_{\nu(n, (s, a))}  \left( \frac{R_{n+1}^{sa}  + \max_{a' \in \A} Q_n (S_{n+1}^{sa}, a') - Q_n(s, a)}{T_n(s, a) \vee \eta_n} -  f(Q_n) \right),  \label{eq: q-ite}  
\end{align}
\begin{equation}
    T_{n+1}(s, a)  \= T_{n}(s, a) + \beta_{\nu(n, (s,a))} (\tau_{n+1}^{sa} - T_{n}(s, a)).  
    \label{eq: t-ite}
\end{equation}
\end{itemize}
\end{itemize}
In the above, $f : \R^d \to \R$ is a Lipschitz continuous function with additional properties to be given shortly. The term $\nu(n,(s,a))  \= \sum_{k=0}^{n-1} \ind \{ (s,a) \in Y_k \}$ is the cumulative count of how many times the state-action pair $(s,a)$ has been chosen up to iteration $n$. Stochastic gradient descent is applied in \eqref{eq: t-ite} to estimate the expected holding time $t_{sa}$, with a standard stepsize sequence $\beta_k \in [0,1]$, $k \geq 0$. 

The algorithmic conditions are summarized below. See Sec.~\ref{sec-sa-cond} for a discussion on how the asynchrony conditions required by Assum.~\ref{cond-us} can, in particular, be ensured.

\begin{assumption}[Algorithmic requirements]\label{cond-rvi-alg}\hfill
 \begin{itemize}[leftmargin=0.7cm,labelwidth=!]
\item[\rm (i)] The stepsizes $\{\alpha_n\}$ satisfy Assum.~\ref{cond-ss}. The asynchronous update schedules are such that $\{\nu(n, \cdot)\}$ satisfies Assum.~\ref{cond-us} with the space $\I = \S \times \A$.
\item[\rm (ii)] The stepsizes $\{\beta_n\}$ satisfy $ \beta_n \in [0, 1]$ for $n \geq 0$, $\sum_{n} \beta_n = \infty$, and $\sum_{n} \beta_n^2 < \infty$.
\item[\rm (iii)] The sequence $\{\eta_n\}$ satisfies $\eta_n > 0$ for all $n \geq 0$ and $\lim_{n \to \infty} \eta_n = 0$.
\end{itemize}    
\end{assumption}

Anticipating the application of the Borkar--Meyn stability criterion, and with the solution structure of a weakly communicating SMDP in mind (Lem.~\ref{lem: sol-struc}), we now introduce our conditions on the function $f$:

\begin{definition} \rm \label{def-sistr} 
We call $g: \R^d \to \R$ \emph{strictly increasing under scalar translation} (SISTr) if, for every $x \in \R^d$, the function $c \in \R \mapsto g(x + c)$ is strictly increasing and maps $\R$ onto $\R$. If this condition holds at a specific point $x$, we say $g$ is \emph{SISTr at} $x$.
\end{definition}

\begin{assumption}[Conditions on function $f$]\label{cond-f} \hfill
\begin{itemize}[leftmargin=0.7cm,labelwidth=!]
\item[\rm (i)] $f$ is Lipschitz continuous and SISTr.
\item[\rm (ii)] As $c \uparrow \infty$, the function $f(c \, \cdot)/c \overset{p}{\to} f_\infty : \R^d \to \R$, and $f_\infty$ is SISTr at the origin.
\end{itemize}
\end{assumption}

It is possible to relax the SISTr condition on $f$ in this assumption; see Rem.~\ref{rmk-rel-sistr} after our convergence proof. 

The existence of the scaling limit $f_\infty$ implies that it is Lipschitz continuous and positively homogeneous (i.e., $f_\infty(c x) = c f_\infty(x)$ for $c \geq 0$), with $f_\infty(0) = 0$. Moreover, $f(c \, \cdot)/c \overset{u.c.}{\to} f_\infty$ as $c \uparrow \infty$. Since $f$ is SISTr, $f_\infty$ is nondecreasing under scalar translation at each point, though it only needs to be SISTr at the origin, which is strictly weaker than requiring $f_\infty$ to be SISTr at all points, as the following example demonstrates.

\begin{example} \label{cntex-f} \rm
This example shows a function $f : \R^2 \to \R$ that satisfies Assum.~\ref{cond-f} without its scaling limit $f_\infty$ being SISTr at all points. Express each $x \in \R^2$ as $x = x_a v_a + x_c v_c$ w.r.t.\ the basis $v_a = (1, -1)$ and $v_c = (1, 1)$. Divide $\R^2$ into three regions and define $f$ on each region as follows:
\begin{equation} 
    f(x) \= \begin{cases}
        2 x_c \phi(x_a) & \text{if} \ x_a \geq 0, \ 0 \leq x_c \leq \frac{x_a}{2}; \\
        2 (x_a - x_c) \phi(x_a) + (2 x_c - x_a)& \text{if} \ x_a \geq 0, \ \frac{x_a}{2} < x_c \leq  x_a; \\
        x_c & \text{otherwise},
        \end{cases}
\end{equation}
where $\phi(x_a) \= 1 - \frac{e^{-x_a}}{2}$. The scaling limit $f_\infty$ is then given by
\begin{equation} 
    f_\infty(x) \= \begin{cases}
 2 x_c  & \text{if} \ x_a \geq 0, \ 0 \leq x_c \leq \frac{x_a}{2}; \\
        x_a & \text{if} \ x_a \geq 0, \ \frac{x_a}{2} < x_c \leq  x_a; \\
        x_c & \text{otherwise}.
        \end{cases}
\end{equation}
It is straightforward to verify that $f$ and $f_\infty$ are Lipschitz continuous, $f$ is SISTr, and $f_\infty$ is SISTr at the origin. Specifically, taking $x^o$ as the origin, we have $f(x^o + c) = c$ for $c \in \R$. 
However, at points $\bar x = \bar a v_a$ with $\bar a > 0$, we have $f_\infty(\bar x + c) = \bar a$ for all $c \in [\frac{\bar a}{2}, \bar a]$, so $f_\infty$ is not SISTr at such points $\bar x$. \myqed
\end{example}

Assumption~\ref{cond-f} significantly expands the scope of the previous assumptions on $f$ introduced in \cite{ABB01,WNS21a} for RVI Q-learning. Let us discuss this with some examples.

\begin{example}[Examples of $f$]  \rm
\label{ex-f}
In addition to Lipschitz continuity, the function $f$ considered in \cite{ABB01,WNS21a} satisfies the following two conditions: 
\begin{itemize}[leftmargin=0.6cm,labelwidth=!]
\item For some scalar $u > 0$, $f(x + c) = f(x) + c u$ for all $x \in \R^d$ and $c \in \R$; and 
\item for $c \geq 0$, $f(c x) = f(0) + c (f(x) - f(0))$.
\end{itemize}
(The case $u = 1$ is equivalent to the conditions originally introduced by \cite{ABB01}, and the generalization to $u > 0$ is due to \cite{WNS21a}.) Such functions $f$ satisfy Assum.~\ref{cond-f} with scaling limit $f_\infty(x) = f(x) - f(0)$. Examples include affine functions of the form
$f(x) = b + \theta^\top x$, where $b \in \R$, $\theta \in \R^d$, and $\sum_{i=1}^d \theta_i > 0$, as well as nonlinear functions such as $f(x)  = b + \beta \max_{i \in D} x_i$ or $f(x)  = b + \beta \min_{i \in D} x_i$, where $b \in \R$, $\beta > 0$, and $D \subset \{1, 2, \ldots, d\}$.
 
For functions that satisfy our Assum.~\ref{cond-f} but do not necessarily meet the conditions of \cite{ABB01,WNS21a}, consider examples such as $f(x) = \max \{ g_1(x), \ldots, g_m(x)\}$ or $\min \{ g_1(x)$, $\ldots, g_m(x)\}$, where each function $g_k$ fits into one of the aforementioned types.

In general, if $g_1, \ldots, g_m$ satisfy Assum.~\ref{cond-f}, then $f(x) = \psi(g_1(x), \ldots, g_m(x))$ also satisfies Assum.~\ref{cond-f}, where $\psi : \R^m \to \R$ possesses the following properties:
\begin{itemize}[leftmargin=0.75cm,labelwidth=!]
\item[(i)] Lipschitz continuity and strict monotonicity, that is, $\psi(y) > \psi(y')$ if $y > y'$ component-wise.
\item[(ii)] $\psi(y) \to \infty$ as $\min_{i \leq m} y_i \to \infty$, and $\psi(y) \to - \infty$ as $\max_{i \leq m} y_i \to -\infty$.
\item[(iii)] As $c \uparrow \infty$, $\psi (c \,\cdot)/c \overset{p}{\to} \psi_\infty : \R^m \to \R$, and $\psi_\infty$ possesses properties (i,\,ii).
\end{itemize}
In particular, properties (i--ii) ensure that $f$ is SISTr, while property (iii) ensures that $f_\infty$ is SISTr at the origin, with 
$$f_\infty(x) = \psi_\infty\!\big(g_{1, \infty}(x), \, \ldots,\, g_{m, \infty}(x) \big).$$ 
There is a large class of functions with these properties, beyond the simple weighted combinations or max/min functions mentioned above.

The composition with $\psi$ allows for the integration of various estimates and provides substantially greater flexibility in estimating the optimal reward rate than was available in earlier formulations of RVI Q-learning. \myqed
\end{example}

The next lemma gives an implication of Assum.~\ref{cond-f}(i). We will apply it below to characterize the set of solutions to AOE that are constrained by $f(q) = r^*$, which is the target set for RVI Q-learning to converge to. Another implication of Assum.~\ref{cond-f}(i) will be given later in Lem.~\ref{lem-f2} during our convergence analysis, where the monotonicity property of $f$ will be critical for the `self-regulating' behavior of RVI Q-learning.

\begin{lemma} \label{lem-f}
Let $f$ satisfy Assum.~\ref{cond-f}(i), and let $\ell \in \R$. 
For each $x \in \R^d$, there exists a unique $c_x \in \R$ such that $f(x + c_x) = \ell$, and the function $x \mapsto c_x $ is continuous. 
\end{lemma}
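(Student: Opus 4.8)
The statement has two parts: existence-and-uniqueness of $c_x$ for each $x$, and continuity of $x \mapsto c_x$. The plan is to obtain existence and uniqueness directly from the SISTr property, and then deduce continuity from the monotonicity together with a routine $\epsilon$-$\delta$ argument (or, more slickly, from the implicit function theorem packaged into a monotonicity-based estimate).

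First I would fix $x \in \R^d$ and consider the scalar function $\phi_x(c) := f(x + c)$, where $x+c$ denotes adding $c$ to each coordinate per the paper's notation. By Assum.~\ref{cond-f}(i), $f$ is SISTr, so by Def.~\ref{def-sistr} the map $\phi_x$ is strictly increasing and surjective onto $\R$. A strictly increasing surjection from $\R$ to $\R$ is a bijection, hence for the given $\ell$ there is exactly one $c = c_x$ with $\phi_x(c_x) = \ell$, i.e. $f(x + c_x) = \ell$. This settles existence and uniqueness immediately.

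For continuity, the key observation is that strict monotonicity lets me invert an inequality on $f$ into an inequality on $c$. Fix $x$ and let $\epsilon > 0$. Since $\phi_x$ is strictly increasing with $\phi_x(c_x) = \ell$, we have $\phi_x(c_x - \epsilon) < \ell < \phi_x(c_x + \epsilon)$; set $\delta_0 := \min\{\ell - f(x + c_x - \epsilon),\ f(x + c_x + \epsilon) - \ell\} > 0$. Now for $y$ near $x$, Lipschitz continuity of $f$ (with constant $L_f$, say) gives $|f(y + c) - f(x + c)| \leq L_f \|y - x\|$ uniformly in $c$, so if $\|y - x\| < \delta_0 / L_f$ then $f(y + c_x - \epsilon) < \ell < f(y + c_x + \epsilon)$. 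Since $\phi_y$ is strictly increasing and $\phi_y(c_y) = \ell$, this forces $c_x - \epsilon < c_y < c_x + \epsilon$, i.e. $|c_y - c_x| < \epsilon$. This proves continuity at $x$, and since $x$ was arbitrary, $x \mapsto c_x$ is continuous on $\R^d$.

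The only mild subtlety—and the step I would be most careful about—is keeping the Lipschitz bound uniform in the translation variable $c$: I must use that $\|(y + c) - (x + c)\| = \|y - x\|$ (translation by the same scalar in every coordinate cancels), so the Lipschitz estimate on $f$ applies with the same constant at the shifted arguments and lets me transfer the strict sign conditions from $\phi_x$ to $\phi_y$. Everything else is a direct consequence of SISTr plus Lipschitz continuity; no compactness or scaling-limit properties (Assum.~\ref{cond-f}(ii)) are needed for this lemma.
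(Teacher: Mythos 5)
Your proof is correct. The existence--uniqueness step is identical to the paper's (the map $c \mapsto f(x+c)$ is a strictly increasing surjection of $\R$ onto $\R$, hence a bijection). For continuity, however, you take a genuinely different route: you give a direct $\epsilon$--$\delta$ sandwich argument, transferring the strict inequalities $f(x+c_x-\epsilon) < \ell < f(x+c_x+\epsilon)$ to the nearby point $y$ via the uniform-in-$c$ Lipschitz bound $|f(y+c)-f(x+c)| \le L_f\|y-x\|$, and then using strict monotonicity of $c \mapsto f(y+c)$ to trap $c_y$ in $(c_x-\epsilon, c_x+\epsilon)$. The paper instead argues by contradiction: if $x_n \to \bar x$ with $|c_{x_n}-c_{\bar x}|>\delta$, then uniqueness of $c_{\bar x}$ forces $|c_{x_n}|\to\infty$, Lipschitz continuity gives $f(\bar x + c_{x_n}) \to \ell$, yet the SISTr property (specifically, surjectivity onto $\R$) forces $|f(\bar x + c_{x_n})| \to \infty$. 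Your approach is more elementary and slightly stronger in that it yields an explicit local modulus of continuity and uses only strict monotonicity (not surjectivity) for the continuity step, whereas the paper's argument leans on the behavior of $f$ at infinity along scalar translations. Both are valid; your careful note that $\|(y+c)-(x+c)\| = \|y-x\|$ is exactly the point that makes the Lipschitz transfer uniform in $c$, and the implicit use of $L_f>0$ is harmless since a SISTr function cannot be constant.
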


\begin{proof}
Since the function $c \mapsto f(x + c)$ maps $\R$ one-to-one onto $\R$ under Assum.~\ref{cond-f}(i), $c_x$ exists and is unique. To show the continuity of the function $x \mapsto c_x$, suppose, for contradiction, that it is discontinuous. Then there exist some $\bar x \in \R^d$, $\delta > 0$, and a sequence $x_n \to \bar x$ such that $|c_{x_n} - c_{\bar x}| > \delta$ for all $n$. Given $f(x_n + c_{x_n}) = \ell$ for all $n$, the Lipschitz continuity of $f$ and the uniqueness of $c_{\bar x}$ imply that $| c_{x_n} | \to \infty$ and $| f(\bar x + c_{x_n}) - f(x_n + c_{x_n})| \to 0$, hence $f(\bar x + c_{x_n}) \to \ell$. But, since $f$ is SISTr, $| c_{x_n} | \to \infty$ implies $|f(\bar x + c_{x_n})| \to \infty$. This contradiction proves that $x \mapsto c_x$ is continuous.
\end{proof}

Let $\Q$ be the set of solutions $q$ to AOE \eqref{eq-opt}. Let $\Q_f \= \{ q \in \Q \mid f(q) = r^* \}$. By SMDP theory (see Sec.~\ref{sec-3.1}), $\Q_f$ is the solution set of the equation
\begin{equation}
  q(s, a)  =  r_{sa} - t_{sa} \cdot f(q)  +  \sum_{s' \in \S} p_{ss'}^a \max_{a' \in \A} q(s', a'),\qquad \ \, \forall \, s \in \S, \, a \in \A.  \label{eq-for-rvi}
\end{equation} 
Consider also the case where all expected rewards $r_{sa} = 0$ and the function $f_\infty$ replaces $f$. Denote the corresponding set by $\Q^o_{f_\infty}$; that is, $\Q^o_{f_\infty}\!$ is the solution set of the equation
\begin{equation}
  q(s, a)  =   - t_{sa} \cdot f_\infty(q)  +  \sum_{s' \in \S} p_{ss'}^a \max_{a' \in \A} q(s', a'),\qquad \ \, \forall \, s \in \S, \, a \in \A.  \label{eq-for-rvi0}
\end{equation} 

\begin{prop} \label{prp-sol-set}
Consider an SMDP satisfying Assum.~\ref{cond-smdp}. 
If the SMDP is weakly communicating and the function $f$ satisfies Assum.~\ref{cond-f}, 
then $\Q_f$ is nonempty, compact, and connected, while $\Q^o_{f_\infty}\!$ contains only the origin in $\R^d$.
\end{prop}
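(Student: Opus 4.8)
The plan is to dispatch the rigidity claim $\Q^o_{f_\infty} = \{0\}$ first, since it is exactly what will force $\Q_f$ to be bounded. The inclusion $0 \in \Q^o_{f_\infty}$ is clear from positive homogeneity of $f_\infty$, which gives $f_\infty(0) = 0$. For the reverse inclusion I would take $q \in \Q^o_{f_\infty}$ and set $\bar r_0 \= f_\infty(q)$: then \eqref{eq-for-rvi0} states precisely that $(\bar r_0, q)$ solves the AOE \eqref{eq-opt} of the SMDP with all rewards set to zero, so Lem.~\ref{lem: sol-struc} forces $\bar r_0 = 0$ and $q \equiv c$ for some $c \in \R$. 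Since $f_\infty$ is SISTr at the origin and $f_\infty(0) = 0$, the strictly increasing map $c \mapsto f_\infty(0 + c)$ vanishes only at $c = 0$; hence $0 = \bar r_0 = f_\infty(0 + c)$ yields $c = 0$ and $q = 0$.

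Next I would treat nonemptiness and compactness of $\Q_f$, recalling that $\Q_f$ is the solution set of \eqref{eq-for-rvi} and that $\Q$ is invariant under adding constant vectors: writing the right-hand side of \eqref{eq-opt} at $\bar r = r^*$ as $G(q)$, one has $G(q+c) = G(q)+c$ because $\sum_{s'} p_{ss'}^a = 1$. Nonemptiness is then immediate, since $\Q \neq \varnothing$ by SMDP theory (Sec.~\ref{sec-3.1}) and Lem.~\ref{lem-f} with $\ell = r^*$ applied to any $q_0 \in \Q$ produces a constant $c$ with $q_0 + c \in \Q_f$. Compactness reduces to boundedness, as closedness follows from continuity of the map in \eqref{eq-for-rvi}. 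For boundedness I would argue by contradiction: if $q_n \in \Q_f$ with $c_n \= \|q_n\| \to \infty$, pass to a subsequence along which $u_n \= q_n/c_n \to \bar q$ with $\|\bar q\| = 1$, divide \eqref{eq-for-rvi} by $c_n$, and let $n \to \infty$; the reward terms vanish, while $f(q_n)/c_n = f(c_n u_n)/c_n \to f_\infty(\bar q)$ by the uniform-on-compacts convergence $f(c\,\cdot)/c \overset{u.c.}{\to} f_\infty$ together with continuity of $f_\infty$. The limiting identity is exactly \eqref{eq-for-rvi0}, so $\bar q \in \Q^o_{f_\infty} = \{0\}$ by the first step, contradicting $\|\bar q\| = 1$. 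Hence $\Q_f$ is bounded and therefore compact.

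For connectedness I would reduce to $\Q$. By Lem.~\ref{lem-f}, the map $\rho : \Q \to \Q_f$ sending $q$ to $q + c_q$, where $c_q$ is the unique constant with $f(q + c_q) = r^*$, is continuous, lands in $\Q_f$ by constant-invariance, and restricts to the identity on $\Q_f$; in fact $(q,c) \mapsto q + c$ is a homeomorphism $\Q_f \times \R \to \Q$, so $\Q_f$ is connected if and only if $\Q$ is. It therefore suffices to show that $\Q$ is connected, which is a property of the SMDP alone, independent of $f$; I would obtain it from the Schweitzer--Federgruen characterization of AOE solutions in weakly communicating SMDPs \cite{ScF78} (cf.\ \cite{WYS24}), which presents $\Q$ as a connected set assembled from the recurrence structure of optimal stationary policies. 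I expect this to be the main obstacle: the arguments for $\Q^o_{f_\infty} = \{0\}$, nonemptiness, and compactness are short consequences of Lems.~\ref{lem: sol-struc} and \ref{lem-f} and the scaling limit, whereas connectedness of $\Q$ requires genuinely structural information about the geometry of AOE solutions. An alternative route, available once $\Q_f$ is identified with the equilibrium set $E_h$ of the RVI ODE $\dot q = h(q)$ and shown to be globally asymptotically stable, is to invoke the general principle that a compact globally asymptotically stable set of a flow on the connected space $\R^d$ must itself be connected.
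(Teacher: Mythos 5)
Your proposal is correct and follows essentially the same route as the paper's proof: it establishes $\Q^o_{f_\infty}=\{0\}$ via Lem.~\ref{lem: sol-struc} and the SISTr property of $f_\infty$ at the origin, obtains nonemptiness and connectedness of $\Q_f$ as the image of the connected set $\Q$ (connected by \cite{ScF78}) under the continuous translation map from Lem.~\ref{lem-f}, and proves boundedness by the same rescaling contradiction using $f(c\,\cdot)/c \overset{u.c.}{\to} f_\infty$ and the rigidity of $\Q^o_{f_\infty}$. The extra observations (the homeomorphism $\Q_f\times\R\cong\Q$ and the alternative connectedness argument via asymptotic stability) are valid but not needed.
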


\begin{proof} 
This proof generalizes our proofs in \cite[Thm.\ 5.1 and Lem.\ 6.1]{WYS24}, which reach the same conclusions under stronger conditions on $f$ from \cite{ABB01,WNS21a} (cf.\ Ex.~\ref{ex-f}).

Consider $\Q^o_{f_\infty}\!$ first. By Lem.~\ref{lem: sol-struc}, in a weakly communicating SMDP, a vector $q \in \R^d$ solves \eqref{eq-for-rvi0} if and only if $f_\infty(q) = 0$ and $q(\cdot) \equiv c$ for some $c \in \R$. The origin $0$ in $\R^d$ is the only solution because $f_\infty(0) = 0$ and $f_\infty$ is SISTr at $0$ by Assum.~\ref{cond-f}(ii). Thus, $\Q^o_{f_\infty}\! =\{0\}$.

Regarding $\Q_f$, note first that $\Q \not=\varnothing$ in a weakly communicating SMDP, and $\Q$ is connected by \cite[Thm.\ 4.2]{ScF78}. Lemma~\ref{lem-f} with $\ell = r^*$ implies that $\Q_f$ is the image of $\Q$ under the continuous mapping $q \mapsto q + c_q$. Consequently, $\Q_f$ is nonempty and connected. As the solution set of \eqref{eq-for-rvi}, the closedness of $\Q_f$ is obvious from the Lipschitz continuity of $f$. 

Finally, the boundedness of $\Q_f$ is established via proof by contradiction: If it is unbounded, let $\{q_n\}$ be a sequence in $\Q_f$ with $\|q_n\| \uparrow \infty$. Since $q_n$ solves \eqref{eq-for-rvi}, we divide both sides of this equation (with $q_n$ in place of $q$) by $\| q_n\|$, and then let $n \to \infty$. Noting that $f(c \,\cdot )/c \overset{u.c.}{\to} f_\infty$ as $c \uparrow \infty$ under Assum.~\ref{cond-f}, we obtain that any limit point $\bar q$ of $\{q_n / \| q_n\|\}$ solves \eqref{eq-for-rvi0}. But this is impossible since $\| \bar q \| = 1$, while $\Q^o_{f_\infty}\! = \{0\}$ as proved above. Consequently, $\Q_f$ must be bounded and hence compact. 
\end{proof}

\begin{remark} \label{rmk: Qf-dim} \rm
Under the same conditions as Prop.~\ref{prp-sol-set} and based on the theory from \cite{ScF78}, the solution set $\Q$ is homeomorphic to a convex polyhedron, with its dimension determined by the recurrence structures of the stationary optimal policies in the SMDP. Using this and Lem.~\ref{lem-f}, it can be shown that the set $\Q_f$ is homeomorphic to a convex polyhedron of exactly one fewer dimension. (This proof closely parallels our previous proof of \cite[Thm.\ 7.1]{WYS24}.) Thus, $\Q_f$ is generally not a singleton but rather a connected subset of infinitely many solutions to AOE~\eqref{eq-opt}. \myqed
\end{remark}

\subsection{Convergence Results} \label{sec-3.3}

We now present our convergence results for RVI Q-learning. The first theorem establishes convergence to the subset $\Q_f$ of AOE solutions, while the second addresses convergence to a single solution. Their proofs, which rely on Thms.~\ref{thm-ql} and~\ref{thm-3}, will be given in the next section.

\begin{theorem} \label{thm-rvi-ql}
Suppose Assum.~\ref{cond-smdp} holds and the SMDP is weakly communicating. Then, under Assums.~\ref{cond-rvi-alg} and~\ref{cond-f} for the algorithm \eqref{eq: q-ite}-\eqref{eq: t-ite}, almost surely:
\begin{itemize}[leftmargin=0.6cm,labelwidth=!]
\item[\rm (i)] $\{Q_n\}$ converges to a compact connected subset of $\Q_f$, with $f(Q_n) \to r^*$.
\item[\rm (ii)] The continuous trajectory $\bar x(\cdot)$ defined by $\{Q_n\}$ according to \eqref{eq-cont-traj2} (with $x_n = Q_n$) has the convergence property asserted in Thm.~\ref{thm-ql}(ii) with the set $E_h = \Q_f$.
\end{itemize}
\end{theorem}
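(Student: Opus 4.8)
The plan is to recast the coupled recursion \eqref{eq: q-ite}--\eqref{eq: t-ite} into the canonical asynchronous SA form \eqref{eq-alg0} on $\R^d$ with $d = |\S \times \A|$, verify Assums.~\ref{cond-h}--\ref{cond-us} together with the global asymptotic stability of $E_h$, and then invoke Thm.~\ref{thm-ql} with $x_n = Q_n$. Writing $(PQ)_{sa} \= \sum_{s'} p_{ss'}^a \max_{a'} Q(s',a')$, I would take as mean field
\[
h_{sa}(Q) \= t_{sa}^{-1}\big( r_{sa} + (PQ)_{sa} - Q(s,a)\big) - f(Q),
\]
which uses the \emph{true} expected holding times, and split the bracketed increment of \eqref{eq: q-ite} as $h_{sa}(Q_n) + M_{n+1}(s,a) + \epsilon_{n+1}(s,a)$, where
\[
M_{n+1}(s,a) \= \frac{(R^{sa}_{n+1} - r_{sa}) + \big(\max_{a'} Q_n(S^{sa}_{n+1},a') - (PQ_n)_{sa}\big)}{T_n(s,a)\vee \eta_n}
\]
is a martingale difference (since $T_n(s,a)\vee\eta_n$ is $\F_n$-measurable) and $\epsilon_{n+1}(s,a) \= \big( r_{sa} + (PQ_n)_{sa} - Q_n(s,a)\big)\big( (T_n(s,a)\vee\eta_n)^{-1} - t_{sa}^{-1}\big)$ is the bias from estimating holding times. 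Equivalently $h(Q) = D\big(G(Q) - Q\big)$ with $D \= \mathrm{diag}(t_{sa}^{-1})$ and $G_{sa}(Q) \= r_{sa} - t_{sa} f(Q) + (PQ)_{sa}$, so that $E_h$ consists exactly of the fixed points of $G$, i.e.\ the solution set $\Q_f$ of \eqref{eq-for-rvi}.

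Most hypotheses are then routine. Here $t_{sa} > 0$ by Assum.~\ref{cond-smdp}(i), and $h$ is Lipschitz since $P$ and $f$ are. Homogenizing, $h(cQ)/c \overset{u.c.}{\to} h_\infty(Q) = t_{sa}^{-1}\big((PQ)_{sa} - Q(s,a)\big) - f_\infty(Q)$ as $c\uparrow\infty$ (using $r_{sa}/c \to 0$ and $f(c\,\cdot)/c \overset{u.c.}{\to} f_\infty$), the zero-reward mean field with $f_\infty$ in place of $f$; this gives Assum.~\ref{cond-h}(i,ii). For the noise, \eqref{eq: t-ite} is standard Robbins--Monro averaging of i.i.d.\ samples of $\tau^{sa}$, so under Assum.~\ref{cond-rvi-alg}(ii), $\E_{sa}[\tau^2]<\infty$ (Assum.~\ref{cond-smdp}(ii)), and the infinitely-often updating from Assum.~\ref{cond-us}(i), $T_n(s,a) \to t_{sa} > 0$ a.s. Hence $T_n(s,a)\vee\eta_n \to t_{sa}$, so $|(T_n\vee\eta_n)^{-1} - t_{sa}^{-1}| \to 0$ a.s.; since $|r_{sa} + (PQ_n)_{sa} - Q_n(s,a)| \leq C(1 + \|Q_n\|)$, this yields $\|\epsilon_{n+1}\| \leq \delta_{n+1}(1+\|Q_n\|)$ with $\delta_{n+1}\to 0$, i.e.\ Assum.~\ref{cond-ns}(ii). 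For $M_{n+1}$, the conditional second moment is bounded by $K_n(1+\|Q_n\|^2)$ with $K_n \propto \max_{sa}(T_n(s,a)\vee\eta_n)^{-2}$; this is where the \emph{general} form of Assum.~\ref{cond-ns}(i) is essential, since the $\eta_n \to 0$ lower bound rules out a deterministic $K$, whereas $T_n \to t_{sa} > 0$ ensures $\sup_n K_n < \infty$ a.s. Assums.~\ref{cond-ss} and~\ref{cond-us} follow from Assum.~\ref{cond-rvi-alg}(i), and Prop.~\ref{prp-sol-set} identifies $E_h = \Q_f$ (nonempty, compact, connected) and the equilibrium set of $h_\infty$ as $\Q^o_{f_\infty} = \{0\}$.

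The substantive work, and the step I expect to be the main obstacle, is the two global-asymptotic-stability statements Thm.~\ref{thm-ql} requires: that the origin is globally asymptotically stable for $\dot x = h_\infty(x)$ (Assum.~\ref{cond-h}(iii)) and that $\Q_f$ is globally asymptotically stable for $\dot x = h(x)$. The difficulty is that $G$ is \emph{not} nonexpansive: although $Q \mapsto r + PQ$ is nonexpansive in $\|\cdot\|_\infty$ and commutes with translation by $c\1$, the regulating term $-t_{sa} f(Q)$ breaks this. The intended approach is to decompose the dynamics along the constant direction $\1$ and its complement: on the quotient $\R^d/\mathrm{span}(\1)$ the averaging operator $P$ is nonexpansive and, by weak communication, drives trajectories toward the translation-invariant solution structure, while the SISTr monotonicity of $f$ (Assum.~\ref{cond-f}) supplies a strict restoring force along $\1$ that both pins down $f(x(t)) \to r^*$ and prevents escape in the constant direction. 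A weighted sup-norm Lyapunov estimate—exploiting that $|x_j - x^*_j|$ is nonincreasing at the active coordinate under the positive diagonal rates $t_{sa}^{-1}$—then yields Lyapunov stability and convergence to the fixed-point set, in the spirit of Borkar--Soumyanath \cite{BoS97}. This is exactly where the analyses of \cite{ABB01,WNS21a}, which relied on the affine identity $f(x+c\1)=f(x)+cu$ to linearize the constant-direction dynamics, must be replaced by \emph{monotonicity-based} comparison arguments valid for general SISTr $f$ (cf.\ Rem.~\ref{rmk-novel-proof}); the same decomposition handles $h_\infty$, where $f_\infty$ being SISTr only at the origin suffices because the sole equilibrium is $\{0\}$.

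With all hypotheses in place, Thm.~\ref{thm-ql}(i) gives that $\{Q_n\}$ converges a.s.\ to a compact connected subset of $E_h = \Q_f$, and continuity of $f$ together with $f \equiv r^*$ on $\Q_f$ yields $f(Q_n) \to r^*$, establishing part (i). Part (ii) is then immediate from Thm.~\ref{thm-ql}(ii) applied with $E_h = \Q_f$ and the trajectory $\bar x(\cdot)$ built from $x_n = Q_n$ via \eqref{eq-cont-traj2}.
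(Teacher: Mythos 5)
Your reduction to the SA form \eqref{eq-alg0}, the identification of $E_h=\Q_f$ and $E_{h_\infty}=\{0\}$ via Prop.~\ref{prp-sol-set}, and the verification of the noise, stepsize, and asynchrony conditions (including the observation that the random $K_n$ in Assum.~\ref{cond-ns}(i) is what makes the $\eta_n\to 0$ lower bound workable) all match the paper's Sec.~\ref{sec-4.1.1}. You also correctly locate the crux in the two global-asymptotic-stability claims. But your treatment of that crux is a sketch with concrete gaps, and that is precisely where the paper does its real work (Props.~\ref{prp-rvi-asyn-stab} and~\ref{prp-rvi-asyn-stab0}). First, your unscaled mean field $h=D(G(Q)-Q)$ with $D=\mathrm{diag}(t_{sa}^{-1})$ is not in the Borkar--Soumyanath form $\dot y=F(y)-y$ with $F$ nonexpansive: absorbing $D$ gives $\tilde F(y)=DF(y)+(I-D)y$, which is nonexpansive in $\|\cdot\|_\infty$ only if all $t_{sa}\geq 1$. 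The paper's operator $\T$ in \eqref{eq-T}, with the factor $\bar\alpha\in(0,\min_{(s,a)}t_{sa}]$, is exactly Schweitzer's rescaling that makes $\T$ a nonexpansive (MDP) operator, so that Lem.~\ref{lem: aux-ode} applies verbatim to the auxiliary ODE $\dot y=\T(y)-y-\bar\alpha\, r^*$. Without this step your appeal to \cite{BoS97} for convergence of the ``quotient'' dynamics is not justified.

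Second, ``decompose along $\1$ and its complement'' has to be made into a theorem. The paper's Lem.~\ref{lem-ode0-aux-ode} proves that the solution of $\dot x=h(x)$ equals $y(t)+z(t)$, where $y$ solves the auxiliary ODE from the same initial condition and $z$ is a \emph{scalar} solving the non-autonomous ODE $\dot z=\bar\alpha\, r^*-\bar\alpha f(y(t)+z(t))$; establishing this uses the translation property $\T(q+c)=\T(q)+c$, Picard--Lindel\"{o}f, and a Gronwall argument to rule out finite-time blow-up of $z$. Your sketch neither states this reduction nor handles the resulting non-autonomy: the restoring force supplied by the SISTr property acts on $f(y(t)+z(t))$, not on $f(\bar y+z(t))$, and one needs the strictness quantified (the $\epsilon_{\bar y}$ of \eqref{eq-lem-m1-prf3}) to absorb the $L_f\|y(t)-\bar y\|$ error before the drift becomes negative. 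Third, and most importantly for invoking Thm.~\ref{thm-ql}: $\Q_f$ is in general a continuum (Rem.~\ref{rmk: Qf-dim}), so pointwise attractivity does not suffice---you need Lyapunov stability of the \emph{set}, which requires a modulus of strict monotonicity of $f$ that is uniform over the compact set $\Q_f$. That is the content of the paper's Lem.~\ref{lem-f2}, and nothing in your proposal supplies it. These are missing ideas rather than wrong ones (your remark that SISTr of $f_\infty$ at the origin alone suffices for the $h_\infty$ ODE is exactly the paper's observation), but as written the stability step---the only nontrivial step---is not proved.
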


\begin{remark} \rm
The a.s.\ convergence of $Q_n$ to the compact set $\Q_f$ by Thm.\ \ref{thm-rvi-ql}(i) and Prop.~\ref{prp-sol-set} implies that a.s.\ for all $n$ sufficiently large, any stationary policy $\pi$ satisfying 
$$\{ a \in \A \mid \pi(a \,|\, s) > 0\} \subset \argmax_{a \in \A} Q_n(s, a), \quad \forall \, s \in \S$$ 
is optimal in the SMDP (see the proof of \cite[Thm.~3.2(ii)]{WYS24}). \myqed
\end{remark}

As noted earlier after Thm.~\ref{thm-ql}, Thm.\ \ref{thm-rvi-ql}(ii) shows that the iterates $\{Q_n\}$ may spend increasing amounts of ODE-time near each of infinitely many connected limit points in $\Q_f$ without converging to any single one. This behavior can resemble convergence, making it difficult in practice to determine from numerical data whether the algorithm is truly approaching a unique limit.
  
To ensure the convergence of $\{Q_n\}$ to a single point in $\Q_f$, we introduce additional conditions below, whose purpose is to enable the application of the shadowing-based SA convergence result, Thm.~\ref{thm-3}.

For a stepsize sequence $\{\alpha_n\}$, define the associated quantity
$$\ell(\{\alpha_n\}) \= \limsup_{n \to \infty} \tfrac{\ln(\alpha_n)}{\sum_{k=0}^{n} \alpha_k},$$ 
which characterizes how fast it decreases. For example, for $\alpha_n = \tfrac{1}{A n}$ and $\alpha_n = \tfrac{1}{A n \ln n}$, which are the class-1 and class-2 stepsizes defined in \eqref{def-stepsize-class} (Sec.~\ref{sec-2.1}), we have $\ell(\{\alpha_n\}) = - A$ and $-\infty$, respectively.

Let $L_f$ be the Lipschitz constant of $f$ w.r.t.\ $\|\cdot\|_\infty$. Define 
$$A_* \= \tfrac{2}{\tm} + L_f, \qquad \text{where} \ \ \tm \= \min_{(s,a) \in \S \times \A} t_{sa}.$$

\begin{assumption}[Additional algorithmic conditions]\label{cond-rvi-alg2}\hfill
 \begin{itemize}[leftmargin=0.7cm,labelwidth=!]
\item[\rm (i)] The stepsizes $\{\alpha_n\}$ belong to class 1 or 2, with scaling parameter $A > 0$.
\begin{itemize}[leftmargin=0.5cm,labelwidth=!]
\item[\rm a.] For class-1 stepsizes, $\tfrac{A}{2} > A_*$, and the asynchronous update schedules satisfy Assum.~\ref{cond-mus} with $\I = \S \times \A$ and $\gamma A > A_*$. 
\item[$\rm b.$] For class-2 stepsizes, $A > A_*$.
\end{itemize}
\item[\rm (ii)] The stepsizes $\{\beta_n\}$ satisfy, for some $\varsigma > A_*$, 
$$\beta_n \geq \varsigma \alpha_n \ \ \text{for all sufficiently large $n$}, \ \ \ \text{and} \ \  \tfrac{- \varsigma \ell(\{\beta_n\})}{2} > A_*.$$ 
This condition holds, for instance, if $\beta_n = \varsigma \alpha_n$ for all $n \geq 0$, where $\{\alpha_n\}$ satisfies the conditions in (i). 
\end{itemize}
\end{assumption}

\begin{remark} \rm
Recall from Sec.~\ref{sec-sa-cond} that if the selection of which components of $Q_n$ to update eventually follows an irreducible Markov chain on $\S \times \A$, then Assum.~\ref{cond-mus} holds for any $\gamma \in (0, \tfrac{1}{2})$ by the law of the iterated logarithm, and consequently Assum.~\ref{cond-rvi-alg2}(i.a) can be satisfied with $\tfrac{A}{2} > A_*$. \myqed
\end{remark}

As additional instances of stepsizes $\{\beta_n\}$ satisfying (or violating) Assum.~\ref{cond-rvi-alg2}(ii), we give the following examples.

\begin{example}[Admissible and non-admissible $\{\beta_n\}$ under Assum.~\ref{cond-rvi-alg2}(ii)] \label{ex-beta} \rm \hfill \\
The rate at which $\{\beta_n\}$ decreases relative to $\{\alpha_n\}$ determines whether the expected holding times are estimated on a faster or slower time-scale than the state-action values, or on the same time-scale. Estimation on a slower time-scale, corresponding to $\tfrac{\beta_n}{\alpha_n} \to 0$, violates Assum.~\ref{cond-rvi-alg2}(ii). 
The first example below illustrates a faster time-scale setting, while the second and third examples extend the instance given in Assum.~\ref{cond-rvi-alg2}(ii) and illustrate a same time-scale setting.
\begin{itemize}[leftmargin=0.45cm,labelwidth=!]
\item[1.] $\{\alpha_n\}$ is in class 2, and $\beta_n = \tfrac{1}{B n (\ln n)^b}$ where $B> 0$ and $b \in [0,1)$.

In this case, $\tfrac{\beta_n}{\alpha_n} \to \infty$ as $n \to \infty$; $- \ell(\{\beta_n\}) = B$ when $b = 0$, and $- \ell(\{\beta_n\}) = \infty$ when $b \in (0,1)$. Hence we can take $\varsigma$ arbitrarily large in Assum.~\ref{cond-rvi-alg2}(ii) to meet all its requirements.

\item[2.] $\{\alpha_n\}$ is in class 1 with scaling parameter $A$, while $\{\beta_n\}$ satisfies, for all sufficiently large $n$, $\varsigma  \leq \tfrac{\beta_n}{\alpha_n} \leq c < \infty$, where $\varsigma > A_*$ and $\tfrac{\varsigma}{c} \cdot \tfrac{A}{2} > A_*$.

In this case, a direct calculation yields $-\ell(\{\beta_n\}) \geq \tfrac{- \ell(\{\alpha_n\})}{c} = \tfrac{A}{c}$, so $\tfrac{- \varsigma \ell(\{\beta_n\})}{2} \geq  \tfrac{\varsigma}{c} \cdot \tfrac{A}{2} > A_*$ as required, thereby fulfilling Assum.~\ref{cond-rvi-alg2}(ii).

\item[3.] $\{\alpha_n\}$ is in class 2, and for all sufficiently large $n$, $\varsigma \leq  \tfrac{\beta_n}{\alpha_n} \leq c  < \infty$, where $\varsigma > A_*$.

In this case, we have $-\ell(\{\beta_n\}) = \infty$, since $-\ell(\{\beta_n\}) \geq \tfrac{- \ell(\{\alpha_n\})}{c}$ by direct calculation, while $\ell(\{\alpha_n\}) = - \infty$. Therefore, Assum.~\ref{cond-rvi-alg2}(ii) holds.
\end{itemize}
Finally, as a counterexample in a faster time-scale setting, choosing larger stepsizes of the form $\beta_n = O(n^{-b})$ with $b \in (\tfrac{1}{2},1)$ results in $\ell(\{\beta_n\}) = 0$, violating Assum.~\ref{cond-rvi-alg2}(ii), although it is allowed under Assum.~\ref{cond-rvi-alg}. \myqed
\end{example}

The next theorem strengthens Thm.~\ref{thm-rvi-ql} under the additional Assum.~\ref{cond-rvi-alg2}:

\begin{theorem} \label{thm-rvi-ql2}
Let the conditions in Thm.~\ref{thm-rvi-ql} hold; moreover, impose the stepsize and asynchony conditions from Assum.~\ref{cond-rvi-alg2}. Then the sequence $\{Q_n\}$ from algorithm \eqref{eq: q-ite}-\eqref{eq: t-ite} converges a.s.\ to a unique (sample path-dependent) point in $\Q_f$.
\end{theorem}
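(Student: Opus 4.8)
The plan is to derive Theorem~\ref{thm-rvi-ql2} by verifying the hypotheses of the sharpened stochastic-approximation result Theorem~\ref{thm-3}, applied to the same casting of RVI Q-learning into the form \eqref{eq-alg0} used for Theorem~\ref{thm-rvi-ql}, with equilibrium set $E_h = \Q_f$. The conditions of Theorem~\ref{thm-rvi-ql} already supply Assums.~\ref{cond-h}--\ref{cond-us}, the identification $E_h = \Q_f$ (compact, connected, nonempty by Prop.~\ref{prp-sol-set}), and the global asymptotic stability of $\Q_f$ for $\dot q = h(q)$, where $h_{sa}(q) = t_{sa}^{-1}\big(r_{sa} + \sum_{s'} p^a_{ss'}\max_{a'} q(s',a') - q(s,a)\big) - f(q)$. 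So what remains is to supply the three ingredients that Theorem~\ref{thm-3} needs beyond Theorem~\ref{thm-ql}: (a) the stepsize/asynchrony thresholds relative to $L_h$, (b) the biased-noise rate condition Assum.~\ref{cond-mns}, and (c) the ODE property that every solution converges to a single point of $\Q_f$.

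First I would pin down the Lipschitz constant. Since $q \mapsto \max_{a'} q(\cdot, a')$ and $q \mapsto \sum_{s'} p^a_{ss'}\max_{a'} q(s',a')$ are $\|\cdot\|_\infty$-nonexpansive, the Bellman part of $h$ has $\|\cdot\|_\infty$-Lipschitz constant at most $2/t_{sa} \le 2/\tm$, and the $-f(q)$ term contributes $L_f$; hence $L_h \le 2/\tm + L_f = A_*$. Consequently the thresholds in Assum.~\ref{cond-rvi-alg2}(i) ($\tfrac{A}{2} > A_*$ for class~1, $A > A_*$ for class~2, and $\gamma A > A_*$) imply the corresponding $L_h$-thresholds of Theorem~\ref{thm-3}, and Assum.~\ref{cond-mus} is handed over directly from Assum.~\ref{cond-rvi-alg2}(i.a). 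It then suffices to establish Assum.~\ref{cond-mns} with $\mu_\delta < -A_* \le -L_h$.

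Second I would verify Assum.~\ref{cond-mns}. The biased noise $\epsilon_{n+1}$ originates solely from replacing $t_{sa}^{-1}$ by $(T_n(s,a)\vee\eta_n)^{-1}$ in \eqref{eq: q-ite}, so its bound $\delta_{n+1}$ is controlled (up to the vanishing $\eta_n$-effect) by the holding-time estimation error $\max_{s,a}|T_n(s,a) - t_{sa}|$. The recursion \eqref{eq: t-ite} is a Robbins--Monro scheme for the mean $t_{sa}$ with stepsizes $\beta_n$; a law-of-the-iterated-logarithm-type bound gives a decay rate of order $(\beta_{\nu(n,(s,a))})^{1/2}$ up to subexponential factors, so $\ln\delta_{n+1} \approx \tfrac12 \ln\beta_{\nu(n,(s,a))}$. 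Using $\ell(\{\beta_n\})<0$ to write $\ln\beta_{\nu(n,(s,a))} \lesssim \ell(\{\beta_n\})\sum_{k=0}^{\nu(n,(s,a))}\beta_k$, the bound $\beta_k \ge \varsigma\alpha_k$, and the near-equality $\sum_{k=0}^{\nu(n,(s,a))}\alpha_k \approx \sum_{k=0}^{n}\alpha_k$ for class-1/2 stepsizes, I would obtain $\limsup_n \tfrac{\ln\delta_{n+1}}{\sum_{k=0}^n\alpha_k} \le \mu_\delta$ with $\mu_\delta \= \tfrac{\varsigma\,\ell(\{\beta_n\})}{2}$, which is $< -A_*$ exactly by the condition $\tfrac{-\varsigma\ell(\{\beta_n\})}{2} > A_*$ of Assum.~\ref{cond-rvi-alg2}(ii). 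This two-time-scale rate computation is the key technical step; its delicate part is the rigorous LIL-type $\tfrac12$-rate for the holding-time iterates together with careful handling of the $\limsup$ directions.

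Third, and the main obstacle, I would establish that every solution of $\dot q = h(q)$ converges to a unique point of $\Q_f$, rather than merely to the set. This is the continuous-time analogue of the classical unique convergence of Schweitzer's RVI \cite{Sch71,ScF77,platzman1977improved}, and I would prove it by exploiting the $\|\cdot\|_\infty$-nonexpansiveness of the Bellman operator $q \mapsto r + P\max q$ together with the SISTr property of $f$: the former makes the flow (after absorbing the per-component holding-time scaling $t_{sa}^{-1}$ into a suitable weighted structure) fit the framework of Borkar--Soumyanath \cite{BoS97} for nonexpansive-map flows, while the strict monotonicity of $f$ under scalar translation pins down the motion along the constant/reward-rate direction, forcing a single limit. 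Reconciling the per-component scaling with nonexpansiveness, and carrying the argument through for the general SISTr $f$ of Assum.~\ref{cond-f} (rather than the affine-type $f$ of \cite{ABB01,WNS21a}), is where the novel arguments are required. Once (a)--(c) are in place, Theorem~\ref{thm-3} applies and yields a.s.\ convergence of $\{Q_n\}$ to a unique, sample-path-dependent point of $\Q_f$.
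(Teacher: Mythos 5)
Your overall architecture matches the paper's: cast the algorithm as \eqref{eq-alg0}, bound $L_h$ by $A_* = \tfrac{2}{\tm}+L_f$ via nonexpansiveness of the Bellman part plus $L_f$, hand the stepsize/asynchrony thresholds of Assum.~\ref{cond-rvi-alg2}(i) to Thm.~\ref{thm-3}, and reduce everything to verifying Assum.~\ref{cond-mns} for the holding-time estimation error. Two remarks on the non-central parts first. Your item (c) is not actually a remaining obstacle: pointwise convergence of every ODE solution to a single point of $\Q_f$ is exactly Lem.~\ref{lem-main-1}, which was already proved (via the decomposition $x(t)=y(t)+z(t)$, Borkar--Soumyanath for $y$, and SISTr for $z$) in the course of establishing Thm.~\ref{thm-rvi-ql}; the paper simply cites it. Your sketch of how you would prove it is essentially that same argument, so no harm done, but it is already in hand.

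The genuine gap is in your rate computation for $\delta_{n+1}$. You claim an LIL-type bound $\ln\delta_{n+1}\approx \tfrac12\ln\beta_{\nu(n,(s,a))}$, leading to $\mu_\delta=\tfrac{\varsigma\,\ell(\{\beta_n\})}{2}$. This accounts only for the \emph{noise} contribution to the Robbins--Monro error and omits the \emph{deterministic transient}: the iteration \eqref{eq: t-ite} tracks the ODE $\dot y=-(y-t_{sa})$, whose contraction rate is $-1$ per unit of ODE-time $\sum_k\beta_k$, so the initial-condition error decays only like $e^{-\sum_k\beta_k}$. For $\beta_n=\varsigma\alpha_n$ with class-1 $\alpha_n=\tfrac{1}{An}$ this is $n^{-\varsigma/A}$, which dominates the $n^{-1/2}$ noise term whenever $\varsigma<\tfrac{A}{2}$. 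The correct exponent is therefore $\mu_\delta=\varsigma\max\bigl\{\tfrac{\ell(\{\beta_n\})}{2},\,-1\bigr\}$ (this is what the paper obtains, via Bena\"{i}m's shadowing estimate \cite[Prop.~8.3]{Ben99} applied to the $T$-iterates rather than an LIL), and requiring it to be below $-A_*$ forces \emph{both} inequalities of Assum.~\ref{cond-rvi-alg2}(ii): $-\tfrac{\varsigma\,\ell(\{\beta_n\})}{2}>A_*$ \emph{and} $\varsigma>A_*$. Your derivation uses only the first and would never explain why the second is assumed; as written it proves a bound that is false when $\varsigma$ is small relative to $-\ell(\{\beta_n\})$. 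Since the theorem's hypotheses do include $\varsigma>A_*$, the final conclusion survives once the transient term is added to your estimate, but the key technical step needs this correction (and a citable tool such as the Bena\"{i}m estimate, or a careful two-term analysis, in place of the bare LIL heuristic).
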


As noted earlier, the conditions involved can be satisfied through appropriate choices of stepsizes and update schemes, without requiring model knowledge beyond a lower bound on $\tm$. When such a lower bound is not available \emph{a priori}, the theorem can still provide a qualitative convergence guarantee for $A$ and $\varsigma$ sufficiently large. Moreover, it can be turned into a high-probability guarantee if a lower bound on $\tm$ is estimated from data and incorporated into the algorithm for tuning its parameters.

We close this section with a discussion of two special cases---MDPs and synchronous settings.

\begin{remark} \label{rem-mdp} \rm
In MDPs, where transitions occur at unit intervals, RVI Q-learning simplifies to \eqref{eq: q-ite} with $1$ in place of the holding-time estimates, so stepsizes $\{\beta_n\}$ and Assum.~\ref{cond-rvi-alg2}(ii) are not involved, and $\tm = 1$. For hierarchical control in MDPs as studied in \cite{WYS24}, $\tm \geq 1$. In both cases, $A_* \leq 2 + L_f$, and this bound can be used to set the stepsize scaling parameters when applying Thm.~\ref{thm-rvi-ql2}. No model knowledge is required in these cases. \myqed
\end{remark}

\begin{remark} \rm \label{rem-sync}
If the RVI Q-learning algorithm is implemented \emph{synchronously}, then the scaling parameter $A$ for class-2 stepsizes can be chosen \emph{freely} while maintaining the convergence conclusion of Thm.~\ref{thm-rvi-ql2}. This is because, for class-2 stepsizes, the constraint on $A$ arises solely from handling asynchrony; in the synchronous setting, this constraint is removed, without affecting the conclusions of Thm.~\ref{thm-3}, from which Thm.~\ref{thm-rvi-ql2} is derived. (See \cite[Rem.~2.6(a)]{YWS25a}; see also the prior results \cite{Ben96,Ben99} for synchronous SA algorithms.) 

In this synchronous setting, for class-2 $\{\alpha_n\}$, the stepsizes $\{\beta_n\}$ can be chosen according to Example~\ref{ex-beta}.1 (faster time-scale) or Example~\ref{ex-beta}.3 (same time-scale) to ensure that Assum.~\ref{cond-rvi-alg2}(ii) holds, while keeping the scaling parameter of $\{\alpha_n\}$ unrestricted. Moreover, choosing $\{\beta_n\}$ as in Example~\ref{ex-beta}.1 is particularly convenient, as it does not involve $\varsigma$ as a stepsize parameter and all parameters of $\{\beta_n\}$ can be chosen freely. This also eliminates the need to estimate bounds on $\tm$ and $A_*$. \myqed
\end{remark}

\section{Convergence Proofs} \label{sec-4}
This section provides the proofs for Thms.~\ref{thm-rvi-ql} and~\ref{thm-rvi-ql2}.

\subsection{Proof of Theorem~\ref{thm-rvi-ql}} \label{sec-4.1}

To prove Thm.~\ref{thm-rvi-ql}, we proceed in two steps. First, in Sec.~\ref{sec-4.1.1}, we rewrite the RVI Q-learning algorithm \eqref{eq: q-ite}-\eqref{eq: t-ite} in the framework of the general SA algorithm \eqref{eq-alg0}, aiming to invoke the results in Sec.~\ref{sec-sa} from \cite{YWS25a}. Then, in Sec.~\ref{sec-4.1.2}, we analyze the solution properties of the corresponding ODEs and combine these with Thm.~\ref{thm-ql} to establish the theorem. 

\subsubsection{Relating RVI Q-Learning to Algorithm \eqref{eq-alg0}} \label{sec-4.1.1}

To define the function $h$ in \eqref{eq-alg0}, similarly to Schweitzer's reasoning for his RVI algorithm \cite{Sch71}, we first fix some $\bar \alpha \in (0, \min_{s \in \S, a \in \A} t_{sa}]$ and define an operator $\T: \R^{|\S \times \A|} \to \R^{|\S \times \A|}$ by  
\begin{equation} \label{eq-T}
    \T(q)(s,a) \= \frac{\bar \alpha \, r_{sa}}{t_{sa}}  + \frac{\bar \alpha}{t_{sa}} \cdot \sum_{s' \in \S} p_{ss'}^a \max_{a' \in \A} q(s', a') + \Big(1 - \frac{\bar \alpha}{t_{sa}}\Big) \cdot q(s,a), \quad s \in \S, \, a \in \A.
\end{equation}
We then define $h : \R^{|\S \times \A|} \to \R^{|\S \times \A|}$ as: for $s \in \S$ and $a \in \A$,
\begin{align}
  h(q)(s,a) & \=  \bar \alpha \cdot \left( \frac{r_{sa}  +  \sum_{s' \in \S} p_{ss'}^a \max_{a' \in \A} q(s', a') - q(s,a)}{t_{sa}} - f(q) \right) \label{eq-def-h} \\
  & \, = \T(q)(s,a) - q(s,a) - \bar \alpha f(q). \label{eq-alt-h}
\end{align}

Before proceeding, several properties of $\T$ are worth noting: 
\begin{enumerate}[leftmargin=0.75cm,labelwidth=!] 
\item[(i)] Since $0 < \frac{\bar \alpha}{t_{sa}} \leq 1$ for all $(s,a)$ by the choice of $\bar \alpha$, $\T$ is nonexpansive w.r.t.\ $\|\cdot\|_\infty$. Indeed, $\T$ can be viewed as the dynamic programming operator for an MDP---this transformation of an SMDP into an equivalent MDP is introduced by Schweitzer in deriving his RVI algorithm \cite{Sch71}.
\item[(ii)] For $c \in \R$ and $q \in \R^{|\S \times \A|}$, $\T(q + c) = \T(q) + c$.
\item[(iii)] The following equation is equivalent to AOE \eqref{eq-opt} with $\bar r = r^*$:
\begin{equation} \label{eq-def-h'}
  h'(q) \= \T(q) - q - \bar \alpha \, r^* = 0,
\end{equation}
and the function $h'$ just defined is invariant under scalar translation due to (ii).
\end{enumerate}
Also, define an operator $\T^o$ as in \eqref{eq-T} but with the expected rewards $r_{sa}$ all set to $0$; that is,
$$ \T^o(q)(s,a) \= \frac{\bar \alpha}{t_{sa}} \cdot \sum_{s' \in \S} p_{ss'}^a \max_{a' \in \A} q(s', a') + \Big(1 - \frac{\bar \alpha}{t_{sa}}\Big) \cdot q(s,a), \quad s \in \S, \, a \in \A.$$
Then $\T^o$ has the same properties (i,\,ii), and the equation $\T^o(q) - q = 0$ corresponds to AOE \eqref{eq-opt} in the case of zero rewards.

Recall the notation $E_g \= \{ x \in \R^d \mid g(x) = 0\}$ for $g : \R^d \to \R^d$. The next lemma 
partially verifies Assum.~\ref{cond-h} on $h$ (Assum.~\ref{cond-h}(iii) will be verified later in Prop.~\ref{prp-rvi-asyn-stab0}).

\begin{lemma} \label{lem-cond-h}
Under Assums.~\ref{cond-smdp} and~\ref{cond-f}, the function $h$ defined by \eqref{eq-def-h} satisfies Assum.~\ref{cond-h}(i,\,ii) with $h_\infty$ given by $h_\infty(q) =  \T^o(q) - q - \bar \alpha f_\infty(q)$. Furthermore, when the SMDP is weakly communicating, $E_h = \Q_f$ and $E_{h_\infty} = \Q^o_{f_\infty} = \{0\}$, while $E_{h'} = \Q$ for the function $h'$ defined in \eqref{eq-def-h'}.
\end{lemma}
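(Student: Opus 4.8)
The plan is to establish the four claims in sequence, working throughout from the representation $h = \T - \mathrm{id} - \bar\alpha f(\cdot)\1$ in \eqref{eq-alt-h} and exploiting the listed properties of $\T$ together with Assum.~\ref{cond-f} on $f$. For Assum.~\ref{cond-h}(i), I would check that each of the three summands of $h$ is Lipschitz w.r.t.\ $\|\cdot\|_\infty$: the map $q \mapsto \T(q)$ is nonexpansive by property~(i) of $\T$, the identity map is $1$-Lipschitz, and $q \mapsto \bar\alpha f(q)\1$ is Lipschitz since $f$ is Lipschitz by Assum.~\ref{cond-f}(i) and $\|f(q)\1 - f(q')\1\|_\infty = |f(q) - f(q')|$. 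Hence $h$ is Lipschitz with constant at most $2 + \bar\alpha L_f$.

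For Assum.~\ref{cond-h}(ii), the key computation is the scaling behavior of $\T$. Using the positive homogeneity of the pointwise maximum, $\max_{a' \in \A}(cq)(s',a') = c \max_{a' \in \A} q(s',a')$ for $c \geq 0$, one reads off from \eqref{eq-T} that $\T(cq) = c\,\T^o(q) + w$, where $w$ is the constant-in-$q$ reward vector with components $\bar\alpha r_{sa}/t_{sa}$. Substituting into \eqref{eq-alt-h} and dividing by $c$ gives
\[
   h_c(q) = \frac{h(cq)}{c} = \T^o(q) - q + \tfrac{1}{c}\,w - \bar\alpha\,\tfrac{f(cq)}{c}.
\]
As $c \uparrow \infty$, the term $\tfrac{1}{c} w \to 0$ uniformly (being independent of $q$), while $f(c\,\cdot)/c \overset{u.c.}{\to} f_\infty$ by Assum.~\ref{cond-f} (as recorded after its statement); since the remaining terms $\T^o(q) - q$ do not depend on $c$, this yields $h_c \overset{u.c.}{\to} h_\infty$ with $h_\infty(q) = \T^o(q) - q - \bar\alpha f_\infty(q)$, exactly as claimed.

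For the equilibrium sets, I would match each defining equation to its AOE counterpart. Setting $h(q) = 0$ in \eqref{eq-def-h} and clearing the positive factor $t_{sa}$ reproduces precisely \eqref{eq-for-rvi}, whose solution set is $\Q_f$ by SMDP theory; hence $E_h = \Q_f$. Likewise $h_\infty(q) = 0$ is, after clearing $t_{sa}$, exactly \eqref{eq-for-rvi0}, so $E_{h_\infty} = \Q^o_{f_\infty}$, which equals $\{0\}$ by Prop.~\ref{prp-sol-set}. Finally, $h'(q) = 0$ is equivalent to AOE \eqref{eq-opt} with $\bar r = r^*$ by property~(iii) of $\T$; since the SMDP is weakly communicating, every AOE solution has $\bar r = r^*$, so the set of admissible $q$-components is exactly $\Q$, giving $E_{h'} = \Q$.

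The computations are largely routine, and the one step requiring genuine care is the scaling identity $\T(cq) = c\,\T^o(q) + w$ coupled with the limiting behavior of $f$: I must invoke the upgrade from pointwise to uniform-on-compacts convergence of $f(c\,\cdot)/c$ (guaranteed under Assum.~\ref{cond-f}) so that $h_c \overset{u.c.}{\to} h_\infty$ holds in the uniform-on-compacts sense demanded by Assum.~\ref{cond-h}(ii), rather than merely pointwise.
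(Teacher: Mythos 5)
Your proposal is correct and follows essentially the same route as the paper, which treats the Lipschitz bound and the scaling limit as immediate from the definition of $h$ and Assum.~\ref{cond-f}, identifies $E_h$, $E_{h_\infty}$, and $E_{h'}$ by matching $h(q)=0$, $h_\infty(q)=0$, and $h'(q)=0$ with \eqref{eq-for-rvi}, \eqref{eq-for-rvi0}, and the AOE respectively, and then cites Prop.~\ref{prp-sol-set} for $\Q^o_{f_\infty}=\{0\}$. You simply spell out the routine details (the decomposition into Lipschitz summands and the identity $\T(cq)=c\,\T^o(q)+w$) that the paper leaves implicit.
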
 

\begin{proof}
The first statement holds by the definition of $h$ and Assum.~\ref{cond-f} on $f$. We have $E_h = \Q_f$ and $E_{h_\infty} = \Q^o_{f_\infty}$, since the equation $h(q) = 0$ is equivalent to \eqref{eq-for-rvi}, while the equation $h_\infty(q) = 0$ is equivalent to \eqref{eq-for-rvi0}. That $E_{h'} = \Q$ follows from the equivalence of $h'(q) = 0$ to AOE \eqref{eq-opt}. Finally, $E_{h_\infty} = \{0\}$ by Prop.~\ref{prp-sol-set}.
\end{proof} 

We now express the RVI Q-learning algorithm \eqref{eq: q-ite}--\eqref{eq: t-ite} in the form of the SA algorithm \eqref{eq-alg0}: for all $i = (s,a) \in \S \times \A$,
\begin{equation} \label{eq-rvi-sa-form}
 Q_{n+1}(i)  = Q_n(i)  + \frac{\alpha_{\nu(n,i)}}{\bar \alpha} \cdot \big( h(Q_n)(i) + M_{n+1}(i) + \epsilon_{n+1}(i) \big) \cdot \ind \{ i \in Y_n\},
\end{equation}  
where we define the noise terms $M_{n+1}$ and $\epsilon_{n+1}$ as follows.
 For each $i = (s,a) \in Y_n$, 
\begin{align} 
    M_{n+1}(i) &= \bar \alpha \cdot \bigg( \frac{R_{n+1}^{sa} - r_{sa}}{T_n(s, a) \vee \eta_n}  \notag \\ 
    & \qquad \quad \ \,  
      +  \frac{\max_{a' \in \A} Q_n(S_{n+1}^{sa}, a') - \sum_{s' \in \S} p_{ss'}^a \max_{a' \in \A} Q_n(s', a')}{t_{sa}} \bigg), \label{eq: M}\\
    \epsilon_{n+1}(i) &= \bar \alpha \cdot \bigg(  \frac{r_{sa} + \max_{a' \in \A} Q_n(S_{n+1}^{sa}, a') - Q_n(s, a) }{T_n(s, a) \vee \eta_n}   \notag \\ 
    & \qquad \quad \ \, - \frac{r_{sa} + \max_{a' \in \A} Q_n(S_{n+1}^{sa}, a') - Q_n(s, a)}{t_{sa}} \bigg), \label{eq: epsilon}
\end{align}
while $M_{n+1}(i) = \epsilon_{n+1}(i) = 0$ if $i \not\in Y_n$. Let $\F_n = \sigma(Q_m, T_m, Y_m, M_m, \epsilon_m; m \leq n)$.

\begin{lemma} \label{lem-cond-noise}
Under Assums.~\ref{cond-smdp}--\ref{cond-f}, $\{M_{n}\}$ and $\{\epsilon_n\}$ satisfy Assum.~\ref{cond-ns}.
\end{lemma}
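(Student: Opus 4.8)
The plan is to verify the two parts of Assum.~\ref{cond-ns} directly from the definitions \eqref{eq: M}--\eqref{eq: epsilon}, with everything resting on a single preliminary fact: the holding-time estimates converge, $T_n(s,a) \to t_{sa}$ a.s.\ for every $(s,a)$. Conditionally on $\F_n$, the samples $\{\tau_{n+1}^{sa}\}$ are drawn from $\mathbb{P}_{sa}$ with mean $t_{sa}$ and finite variance (Assum.~\ref{cond-smdp}(ii)), so the update \eqref{eq: t-ite} is, for each fixed $(s,a)$, a standard Robbins--Monro averaging recursion run on its own update clock $\nu(n,(s,a))$. Each component is updated infinitely often because $\liminf_n \nu(n,i)/n \geq \Delta > 0$ by Assum.~\ref{cond-us}(i), and $\{\beta_n\}$ satisfies $\sum_n \beta_n = \infty$, $\sum_n \beta_n^2 < \infty$ by Assum.~\ref{cond-rvi-alg}(ii); hence $T_n(s,a) \to t_{sa}$ a.s. Since $t_{sa} \geq \tm > 0$ and $\eta_n \to 0$, this gives $T_n(s,a)\vee\eta_n \to t_{sa}$, so the $\F_n$-measurable quantity $\underline T_n \= \min_{(s,a)}\big(T_n(s,a)\vee\eta_n\big)$ satisfies $\underline T_n \to \tm > 0$ a.s., whence $\sup_n \underline T_n^{-1} < \infty$ a.s. This sample-path-dependent bound is exactly what the generalized condition Assum.~\ref{cond-ns}(i)---which permits a random $K_n$ rather than a deterministic constant---is designed to absorb, and it is needed precisely because $\eta_n \downarrow 0$ provides no deterministic lower bound on the denominators.

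For the martingale part, Assum.~\ref{cond-ns}(i), I would first note that $Y_n$ is $\F_n$-measurable and the fresh samples $(S_{n+1}^{sa}, R_{n+1}^{sa})$, $(s,a)\in Y_n$, are conditionally independent of $\F_n$ with distributions $\mathbb{P}_{sa}$. Then $\E[R_{n+1}^{sa}\mid\F_n]=r_{sa}$ and $\E[\max_{a'}Q_n(S_{n+1}^{sa},a')\mid\F_n]=\sum_{s'}p_{ss'}^a\max_{a'}Q_n(s',a')$, with $T_n(s,a)\vee\eta_n$, $t_{sa}$, and $Q_n$ all $\F_n$-measurable; hence each summand of \eqref{eq: M} is conditionally centred and $\E[M_{n+1}\mid\F_n]=0$. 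For the second moment I split $M_{n+1}(i)$ into its reward-noise and transition-noise terms and use $(a+b)^2\le 2a^2+2b^2$: the former has conditional second moment $\mathrm{Var}_{sa}(R)/(T_n(s,a)\vee\eta_n)^2\le \mathrm{Var}_{sa}(R)/\underline T_n^2$, the latter at most $\|Q_n\|_\infty^2/t_{sa}^2\le \|Q_n\|_\infty^2/\tm^2$. Summing over the at most $d$ components of $Y_n$ and invoking equivalence of norms yields $\E[\|M_{n+1}\|^2\mid\F_n]\le K_n(1+\|Q_n\|^2)$, where $K_n$ is a constant multiple of $\max\{\underline T_n^{-2},\tm^{-2}\}\,(1\vee\max_{sa}\mathrm{Var}_{sa}(R))$, which is $\F_n$-measurable and satisfies $\sup_n K_n<\infty$ a.s.\ by the preliminary fact. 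Integrability $\E[\|M_{n+1}\|]<\infty$ follows by induction on $n$: $Q_0,T_0$ are fixed, and for each fixed $n$ the per-iteration deterministic bound $T_n(s,a)\vee\eta_n\ge \eta_n$ together with $\E|R_{n+1}^{sa}|<\infty$ gives $\E\|M_{n+1}\|<\infty$ whenever $\E\|Q_n\|<\infty$, which propagates through the recursion since $h$ is Lipschitz and $\|\epsilon_{n+1}\|$ is likewise dominated for fixed $n$.

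For the biased term, Assum.~\ref{cond-ns}(ii), the definition \eqref{eq: epsilon} factors cleanly:
$$\epsilon_{n+1}(i) = \bar\alpha \Big( \tfrac{1}{T_n(s,a)\vee\eta_n} - \tfrac{1}{t_{sa}} \Big)\Big( r_{sa} + \max_{a'\in\A} Q_n(S_{n+1}^{sa}, a') - Q_n(s,a) \Big).$$
The second factor is bounded by $C(1+\|Q_n\|)$ for a deterministic $C$ depending on $\max_{sa}|r_{sa}|$, so setting $\delta_{n+1}\=\bar\alpha\,C'\max_{(s,a)}\big|(T_n(s,a)\vee\eta_n)^{-1}-t_{sa}^{-1}\big|$ gives $\|\epsilon_{n+1}\|\le \delta_{n+1}(1+\|Q_n\|)$, where $\delta_{n+1}$ is $\F_n$-measurable (a fortiori $\F_{n+1}$-measurable) and $\delta_{n+1}\to 0$ a.s.\ by the preliminary fact. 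Note that $f$ does not appear in \eqref{eq: M}--\eqref{eq: epsilon}, so Assum.~\ref{cond-f} plays no role in this lemma.

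The main obstacle, and the crux of the whole lemma, is the preliminary a.s.\ convergence $T_n(s,a)\to t_{sa}$ together with the uniform-in-$n$ bound $\sup_n\underline T_n^{-1}<\infty$: the remaining steps are bookkeeping, but this one is what renders the random constant $K_n$ a.s.\ finite and forces $\delta_n\to 0$. It is also the sole reason the analysis requires the \emph{asynchronous} infinitely-often guarantee of Assum.~\ref{cond-us}(i) and the generalized martingale bound of Assum.~\ref{cond-ns}(i), rather than the classical version with a deterministic $K$.
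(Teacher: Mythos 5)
Your proof is correct and follows essentially the same route as the paper's: establish $T_n(s,a)\to t_{sa}$ a.s.\ by standard SA theory for the recursion \eqref{eq: t-ite}, then verify the conditional-mean, conditional-second-moment, and bias bounds by direct computation, with the random $K_n$ and $\delta_{n+1}$ built from $(T_n(s,a)\vee\eta_n)^{-1}$ and rendered a.s.\ bounded (resp.\ vanishing) by that convergence. One small caveat to your closing remark: Assum.~\ref{cond-f} is not entirely idle here---the Lipschitz continuity of $f$ is what lets the integrability $\E[\|Q_n\|]<\infty$ propagate through the recursion \eqref{eq: q-ite}, which contains $f(Q_n)$, exactly as you implicitly use when invoking the Lipschitz property of $h$ in your induction.
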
 

\begin{proof}
Under our assumptions, $f$ is Lipschitz continuous, the expected holding times $t_{sa} > 0$, and all the random rewards $R^{sa}_{n+1}$ have finite variances. By definition, the constants $\eta_n > 0$. Using these facts, we obtain $\E [\| Q_n \| ] < \infty$ from \eqref{eq: q-ite}, and then $\E [\| M_{n+1} \| ] < \infty$ from \eqref{eq: M}, for all $n \geq 0$. By \eqref{eq: M} and how the transition data $(R_{n+1}^{sa}, S_{n+1}^{sa})$ are generated, $\E [ M_{n+1} \mid \F_n ] = 0$ a.s. 

To verify the remaining conditions in Assum.~\ref{cond-ns}, note first that Assums.~\ref{cond-smdp}(ii) and~\ref{cond-rvi-alg}(i,\,ii) ensure the convergence $T_n(s,a) \to t_{sa}$ a.s.\ for each $(s, a) \in \S \times \A$, by applying standard SA theory \cite{Bor23,KuY03} to the stochastic-gradient-descent update rule \eqref{eq: t-ite}.
By the definitions of $M_{n+1}$ and $\epsilon_{n+1}$ in \eqref{eq: M}--\eqref{eq: epsilon}, direct calculations yield the following bounds: for some suitable constants $\bar K, \bar K' > 0$, 
\begin{align*}
  \E [ \| M_{n+1} \|^2 \mid \F_n ] & \leq  \underset{K_n}{\underbrace{\left( 1 \vee \max_{(s, a) \in \S \times \A} \big(T_n(s, a) \vee \eta_n \big)^{-2} \right) \cdot \bar K}} \cdot ( 1 + \| Q_n\|^2) \ \ \ a.s., \\
   \| \epsilon_{n+1} \| & \leq  \underset{\delta_{n+1}}{\underbrace{ \left( \max_{(s, a) \in \S \times \A} \Big| \big(T_n(s, a) \vee \eta_n \big)^{-1} - t_{sa}^{-1} \Big| \right) \cdot \bar K' }} \cdot ( 1 + \| Q_n\|) \ \ \ a.s. 
\end{align*}
Both $K_n$ and $\delta_{n+1}$, being $\F_n$-measurable, satisfy the required measurability.
 Since $T_n(s,a) \to t_{sa} > 0$ a.s.\ and by Assum.~\ref{cond-rvi-alg} $\eta_n \to 0$, whence $T_n(s,a) \vee \eta_n \to t_{sa}$ a.s., we also have $\sup_n K_n < \infty$ and $\delta_n \to 0$ a.s., as required. 
\end{proof}

\subsubsection{Analyzing Solution Properties of Associated ODEs} \label{sec-4.1.2}

With the initial steps completed, we next aim to establish the global asymptotic stability of $\Q_f$ and verify the required stability criterion of Borkar and Meyn---in particular, Assum.~\ref{cond-h}(iii)---through the following propositions:

\begin{prop} \label{prp-rvi-asyn-stab}
Under the conditions of Thm.~\ref{thm-rvi-ql}, with $h$ given by \eqref{eq-def-h}, the set $E_h = \Q_f$ is globally asymptotically stable for the ODE $\dot x(t) = h(x(t))$.
\end{prop}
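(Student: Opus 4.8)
The plan is to exploit the structure $h(q) = \T(q) - q - \bar\alpha f(q)\mathbf{1}$ exposed by \eqref{eq-alt-h}, in which the $\|\cdot\|_\infty$-nonexpansive operator $\T$ governs $q$ modulo constant vectors (the ``shape''), while $f$ acts purely along $\mathbf{1}$ (the ``level''). Because $\T(q+c\mathbf{1}) = \T(q) + c\mathbf{1}$ by property (ii) of $\T$, projecting the ODE onto the quotient $\R^d/\R\mathbf{1}$ annihilates the $f$-term and yields an \emph{autonomous} flow $\tfrac{d}{dt}[x] = \hat{\T}([x]) - [x]$, where $\hat{\T}$ is the induced map, nonexpansive in the span seminorm. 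Its fixed-point set is the image of $\Q$ in the quotient, which is compact because, by Lem.~\ref{lem: sol-struc} and the structural property it encodes, AOE solutions escape to infinity only along $\mathbf{1}$. Nonexpansiveness makes the span-distance from $[x(t)]$ to each fixed point nonincreasing, giving boundedness of the shape and Lyapunov stability of the shape fixed-point set; the Borkar--Soumyanath theory for nonexpansive flows \cite{BoS97} then supplies convergence of $[x(t)]$ to that set.

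The second step is the level. I would first establish boundedness of the full trajectory: the average $\tfrac1d\mathbf{1}^\top x$ evolves by $\tfrac1d\mathbf{1}^\top(\T(x)-x) - \bar\alpha f(x)$, whose first term depends only on the (bounded) shape, while by the SISTr property of $f$ the second term drives the level back whenever it becomes large in magnitude. With boundedness secured, the $\omega$-limit set $L$ of any trajectory is nonempty, compact, connected, and invariant; shape convergence and the fact that $\Q$ is itself invariant under constant shifts force $L \subset \Q$. On $\Q$ the flow reduces to $\dot x = \bar\alpha\big(r^* - f(x)\big)\mathbf{1}$, i.e.\ motion along $\mathbf{1}$ within each fiber $\{q + s\mathbf{1}\}$. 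Here SISTr is decisive: by Lem.~\ref{lem-f} each fiber contains a unique point with $f = r^*$, and the scalar equation $\dot s = \bar\alpha\big(r^* - f(q+s\mathbf{1})\big)$ is monotone and globally convergent to it, since $s \mapsto f(q+s\mathbf{1})$ is strictly increasing onto $\R$. A backward-orbit argument shows any compact invariant subset of $\Q$ under this strictly convergent fiber flow must consist of equilibria, so $L \subset \Q_f = E_h$, establishing global attraction.

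To upgrade attraction to global asymptotic stability I would verify Lyapunov stability of the set $\Q_f$. The shape contributes a nonincreasing span-distance to the shapes of $\Q$, while the level obeys the monotone scalar dynamics above, which contracts toward the fiberwise equilibrium $f = r^*$; combining a span-distance-to-$\Q$ term with the fiberwise level offset (well defined and continuous across fibers precisely by the monotonicity of $f$ in Lem.~\ref{lem-f}, and quantified via Lem.~\ref{lem-f2}) into a composite neighborhood estimate shows trajectories starting near $\Q_f$ remain near it.

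The main obstacle I anticipate is the rigorous coupling of shape and level during the transient, together with the non-smoothness of $\T$ and $f$ (both merely Lipschitz). Unlike the affine $f$ of \cite{ABB01,WNS21a}, for which the level equation decouples exactly and linearly, the SISTr $f$ renders it genuinely nonlinear, so the clean shape/level separation holds only in the limit; I would therefore argue through $\omega$-limit sets and a LaSalle-type invariance principle rather than an explicit time-scale separation, using Dini derivatives (or a differential-inclusion formulation) at points of non-differentiability. This is where the novel arguments flagged in Rem.~\ref{rmk-novel-proof} enter, and establishing set-Lyapunov stability uniformly over the connected, possibly higher-dimensional set $\Q_f$ is the most delicate part.
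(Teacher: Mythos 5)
Your proposal is essentially correct in outline, but it follows a genuinely different route from the paper's proof. You quotient out the constant directions and treat the ``shape'' via the induced nonexpansive flow in the span (quotient) norm, then handle the ``level'' through the scalar dynamics of $\tfrac1d\1^\top x$, concluding with an $\omega$-limit/invariance argument and a composite Lyapunov estimate. The paper instead never passes to the quotient: it pairs the trajectory $x(t)$ with the solution $y(t)$ of the auxiliary ODE $\dot y = \T(y)-y-\bar\alpha r^*$ started from the \emph{same} initial condition, proves via Picard--Lindel\"{o}f that $x(t)=y(t)+z(t)$ for a scalar $z(t)$ solving the non-autonomous ODE $\dot z=\bar\alpha r^*-\bar\alpha f(y(t)+z(t))$ (Lem.~\ref{lem-ode0-aux-ode}), and then runs quadratic Lyapunov arguments on $z$ using the SISTr property (Lems.~\ref{lem-main-1},~\ref{lem-f2},~\ref{lem-main-2}), with Borkar--Soumyanath applied to the $y$-flow in $\|\cdot\|_\infty$ rather than to the quotient flow. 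The main substantive difference is flagged in Rem.~\ref{rmk-novel-proof}: your decomposition reinstates the reliance on span-seminorm nonexpansiveness of $\T$ (valid here, since $\T$ is an MDP dynamic-programming operator and the translation property makes quotient-nonexpansiveness automatic, and this is essentially the route of \cite{ABB01}), whereas the paper deliberately avoids it so that the argument ports to settings where only $\|\cdot\|_\infty$-nonexpansiveness is available. What your approach buys is a cleaner conceptual separation and an autonomous shape flow to which LaSalle-type reasoning applies directly; what the paper's approach buys is greater generality and a more elementary, fully quantitative treatment of the level via a single scalar ODE. The two soft spots you would need to firm up are exactly the ones you identify: (i) the uniform-in-shape coercivity of $f$ needed for boundedness of the level (this is where the analogue of Lem.~\ref{lem-f2} enters, via Lipschitz continuity of $f$ over the compact set of shapes), and (ii) Lyapunov stability of $\Q_f$, where reassembling the $\|\cdot\|_\infty$-distance to $\Q_f$ from the span-distance to $[\Q]$ plus a fiberwise level offset requires the continuity of $q\mapsto c_q$ from Lem.~\ref{lem-f} and the uniform shrinkage $\sup_{\bar x\in\Q_f}\epsilon_{\bar x,\delta}\downarrow 0$; neither is a gap in principle, but both need the same quantitative care the paper expends in Lem.~\ref{lem-main-2}.
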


\begin{prop} \label{prp-rvi-asyn-stab0}
Under the conditions of Prop.~\ref{prp-rvi-asyn-stab}, the origin is the unique globally asymptotically stable equilibrium of the ODE $\dot x(t) = h_\infty(x(t))$.
\end{prop}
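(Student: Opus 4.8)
The plan is to show that the origin is globally asymptotically stable for $\dot x(t) = h_\infty(x(t))$, where by Lem.~\ref{lem-cond-h} $h_\infty(q) = \T^o(q) - q - \bar\alpha f_\infty(q)\1$ and where $E_{h_\infty} = \{0\}$ is already the unique equilibrium (Prop.~\ref{prp-sol-set}). It therefore remains to prove (a) global attractivity, that every solution converges to $0$, and (b) Lyapunov stability. The structural facts I would exploit are that $\T^o$ is nonexpansive in $\|\cdot\|_\infty$, commutes with scalar translation ($\T^o(q+c)=\T^o(q)+c$), and is positively homogeneous, while $f_\infty$ is Lipschitz, positively homogeneous, and SISTr at the origin. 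Accordingly I would decompose the dynamics into a shape part, tracked by the span seminorm $\mathrm{sp}(x) \= \max_i x_i - \min_i x_i$, and a level part, tracked by $M(t) \= \max_i x_i(t)$ and $m(t) \= \min_i x_i(t)$.

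For the span I would first show $\mathrm{sp}(x(t))$ is nonincreasing along the flow. The translation term $-\bar\alpha f_\infty(x)\1$ shifts all components equally and hence leaves the span unchanged, so it suffices to treat $\dot x = \T^o(x)-x$; since $\T^o$ is nonexpansive in $\|\cdot\|_\infty$ and translation-invariant, it induces a nonexpansive map on the quotient $\R^d / \R\1$ under the span seminorm, which gives $D^+\mathrm{sp}(x(t)) \le 0$. To upgrade this to $\mathrm{sp}(x(t)) \to 0$, I would use a LaSalle-type argument: the $\omega$-limit set is invariant and $\mathrm{sp}$ is constant on it, forcing the maximizing and minimizing components to be reproduced by $\T^o$ persistently, and the weakly communicating structure --- concretely, the fact from Lem.~\ref{lem: sol-struc} that the only fixed points of $\T^o$ are constant vectors --- then excludes any nonconstant vector from the $\omega$-limit set. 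This part parallels, and can borrow from, the span-contraction analysis underlying Prop.~\ref{prp-rvi-asyn-stab} and \cite{WYS24,BoS97}.

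With $\mathrm{sp}(x(t)) \to 0$ in hand, I would control the level. At an index attaining the maximum, $\T^o(x)_i \le M$, and at one attaining the minimum, $\T^o(x)_i \ge m$, so the Dini derivatives satisfy $D^+ M(t) \le -\bar\alpha f_\infty(x(t))$ and $D^+ m(t) \ge -\bar\alpha f_\infty(x(t))$ (recovering span nonincrease as a by-product). Positive homogeneity together with SISTr at the origin give $f_\infty(c\1) = \kappa_+ c$ for $c \ge 0$ and $\kappa_- c$ for $c < 0$ with $\kappa_+,\kappa_- > 0$, so $f_\infty$ along the constant line has the sign of $c$ with a definite linear rate $\kappa \= \min\{\kappa_+,\kappa_-\} > 0$. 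Combining this with $|f_\infty(x) - f_\infty(M\1)| \le L_{f_\infty}\,\mathrm{sp}(x)$ (Lipschitzness and $\|x - M\1\|_\infty \le \mathrm{sp}(x)$) yields $D^+ M \le -\bar\alpha(\kappa M - L_{f_\infty}\mathrm{sp}(x))$ once $M \ge 0$, and symmetrically for $m$. Since $\mathrm{sp}(x(t)) \to 0$, a scalar ODE comparison (Gronwall-type) argument then gives $\limsup_t M(t) \le 0$ and $\liminf_t m(t) \ge 0$; with $M - m = \mathrm{sp} \to 0$ this forces $M(t), m(t) \to 0$, i.e.\ $x(t) \to 0$.

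For Lyapunov stability I would observe that the same comparison estimates bound $M(t)$ and $-m(t)$ by solutions of stable scalar ODEs driven by the nonincreasing quantity $\mathrm{sp}(x(t)) \le \mathrm{sp}(x(0))$, so small $\|x(0)\|_\infty$ keeps $\|x(t)\|_\infty$ small for all $t$; alternatively, since $h_\infty$ is Lipschitz and positively homogeneous, global attractivity of the origin upgrades automatically to global asymptotic stability by standard homogeneity arguments. Either route, combined with $E_{h_\infty}=\{0\}$, completes the proof. I expect the main obstacle to be the level step for a general $f_\infty$ that is SISTr only at the origin (and possibly nowhere else, cf.\ Ex.~\ref{cntex-f}): one must drive the level to zero using only the sign and growth of $f_\infty$ along the constant line, while the span is merely asymptotically --- not exactly --- zero, so the comparison must be carried out uniformly as the trajectory approaches the constant line. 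A secondary difficulty is making the $\mathrm{sp} \to 0$ step rigorous through the invariant-set/communication-structure argument rather than relying on pointwise strict decrease.
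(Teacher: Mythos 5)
Your proposal is correct in substance but follows a genuinely different route from the paper's. The paper proves Prop.~\ref{prp-rvi-asyn-stab0} by specializing Lems.~\ref{lem-ode0-aux-ode}--\ref{lem-main-2}: it writes $x(t)=y(t)+z(t)$, where $y$ solves the auxiliary nonexpansive ODE $\dot y=\T^o(y)-y$ and converges to a constant vector by Borkar--Soumyanath (Lem.~\ref{lem: aux-ode}), while $z$ solves the scalar non-autonomous ODE $\dot z=-\bar\alpha f_\infty(y(t)+z(t))$, which is then driven to the right level by a quadratic Lyapunov function exploiting SISTr of $f_\infty$ at the origin. You instead work directly with $M(t)=\max_i x_i(t)$, $m(t)=\min_i x_i(t)$ and the span, replacing the scalar Lyapunov function by Dini-derivative differential inequalities and a comparison argument. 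The two decompositions are essentially equivalent (your quotient trajectory $[x(t)]$ is exactly $[y(t)]$, since the $f_\infty\1$ term vanishes modulo $\R\1$), and both hinge on the same two pillars: the shape converges because $\T^o$ is nonexpansive with only constant fixed points, and the level is pinned down by the sign of $f_\infty$ along the constant line. What your route buys is a more explicit, quantitative stability bound ($\|x(t)\|_\infty\le C\|x(0)\|_\infty$ directly from the comparison estimates), at the cost of some bookkeeping with Dini derivatives; the paper's route reuses its Lems.~\ref{lem-main-1} and~\ref{lem-main-2} verbatim.

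Two caveats. First, your self-contained LaSalle argument for $\mathrm{sp}(x(t))\to 0$ is not a proof as sketched: an invariant set on which $\mathrm{sp}$ is constant need not consist of equilibria, and ruling out, e.g., nonconstant recurrent orbits from the weakly communicating structure alone is exactly the nontrivial content of \cite{BoS97} (note that weak communication without the damping term $1-\bar\alpha/t_{sa}$ in $\T^o$ would not even give span contraction for the discrete iteration). You correctly flag that this step can be borrowed from Lem.~\ref{lem: aux-ode}, and with that citation the step is rigorous; without it, it is a gap. Second, the claimed ``automatic'' upgrade from global attractivity to Lyapunov stability via positive homogeneity of $h_\infty$ is not standard without further argument (attractivity plus homogeneity does not circularly yield the uniform bounds one needs); fortunately your primary route---the comparison bound with $\mathrm{sp}(x(t))\le\mathrm{sp}(x(0))$---already delivers Lyapunov stability, so this alternative can simply be dropped.
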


Proposition~\ref{prp-rvi-asyn-stab0} will be proved by specializing our proof arguments for Prop.~\ref{prp-rvi-asyn-stab} to the case where the SMDP has zero rewards. Assuming both propositions have been proved, we can then invoke Thm.~\ref{thm-ql} to derive Thm.~\ref{thm-rvi-ql} as follows:

\begin{proof}[Proof of Thm.~\ref{thm-rvi-ql}]
Consider the RVI Q-learning algorithm in its equivalent form \eqref{eq-rvi-sa-form}. We verify one by one that the conditions of Thm.~\ref{thm-ql} are met. First, by Lem.~\ref{lem-cond-h} and Prop.~\ref{prp-rvi-asyn-stab0}, $h$ satisfies Assum.~\ref{cond-h}, with $E_h = \Q_f$. Assumption~\ref{cond-ns} holds by Lem.~\ref{lem-cond-noise}. The scaled stepsizes $\{\alpha_n/\bar \alpha\}$ and the asynchronous update scheme satisfy Assums.~\ref{cond-ss}--\ref{cond-us} due to the algorithmic requirements in Assum.~\ref{cond-rvi-alg}.
The remaining condition is that $E_h$ is globally asymptotically stable for the ODE $\dot x(t) = h(x(t))$, which follows from Prop.~\ref{prp-rvi-asyn-stab}. The desired conclusions in Thm.~\ref{thm-rvi-ql} now follow from Thm.~\ref{thm-ql}.
\end{proof}

Now, let us proceed to prove Prop.~\ref{prp-rvi-asyn-stab}. Similarly to the approach in Abounadi et al.\ \cite[Sec.\ 3.1]{ABB01}, we consider two ODEs, $\dot x(t) = h(x(t))$ and $\dot{y}(t) = h'(y(t))$, for $h$ and $h'$ defined in \eqref{eq-def-h} and \eqref{eq-def-h'}, respectively:
\begin{align}    
\dot x(t) & = h(x(t)), \quad \ \text{where} \ h(x) = \T(x) - x - \bar \alpha f(x), \label{eq: ode0} \\
 \dot{y}(t) & = h'(y(t)), \quad \,\text{where} \ h'(y) = \T(y) - y - \bar \alpha \, r^*. \label{eq: aux-ode}
\end{align}
We relate the solutions of \eqref{eq: ode0} to those of \eqref{eq: aux-ode}, whose asymptotic properties follow directly from Borkar and Soumyanath \cite{BoS97}. Recall that $\T$ is nonexpansive w.r.t.\ $\|\cdot\|_\infty$ and $E_{h'} = \Q \not= \varnothing$ (Lem.~\ref{lem-cond-h}). By \cite{BoS97}, solutions of \eqref{eq: aux-ode} have the following important properties, where we emphasize that \emph{we henceforth write $\| \cdot \| = \| \cdot \|_\infty$ and use this notation throughout the proofs below.}

\begin{lemma}[from {\cite[Thm.\ 3.1 and Lem.\ 3.2]{BoS97}}] \label{lem: aux-ode}
For any solution $y(\cdot)$ of the ODE \eqref{eq: aux-ode} and any $\bar y \in E_{h'} = \Q$, 
the distance $\|y(t) - \bar y \|$ is nonincreasing, and as $t \to \infty$, $y(t) \to y_\infty$, a point in $E_{h'}$ 
that may depend on $y(0)$.
\end{lemma}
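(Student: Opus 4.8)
The statement is a direct consequence of the theory of nonexpansive flows in \cite{BoS97}, and the plan is to cast \eqref{eq: aux-ode} into their framework and then invoke their results. Define $\hat\T(y) \= \T(y) - \bar\alpha\, r^*$. Since $\T$ is nonexpansive w.r.t.\ $\|\cdot\| = \|\cdot\|_\infty$ and translation by the constant vector $\bar\alpha\, r^*$ preserves this property, $\hat\T$ is nonexpansive, and its fixed-point set is precisely $E_{h'} = \Q$, which is nonempty (Lem.~\ref{lem-cond-h}). The ODE \eqref{eq: aux-ode} then takes the canonical form $\dot y(t) = \hat\T(y(t)) - y(t)$ analyzed in \cite{BoS97}, so both conclusions follow from their Thm.~3.1 and Lem.~3.2.

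For the nonincreasing-distance claim it is instructive (and self-contained) to verify it directly. Fix $\bar y \in E_{h'}$, so that $\T(\bar y) = \bar y + \bar\alpha\, r^*$, and set $z(t) \= y(t) - \bar y$. Then $\dot z = \T(y) - y - \bar\alpha\, r^* = w - z$, where $w \= \T(y) - \T(\bar y)$ satisfies $\|w\| \leq \|y - \bar y\| = \|z\|$ by nonexpansiveness. Computing the upper Dini derivative of $t \mapsto \|z(t)\|_\infty$ at an index $i$ attaining the maximum $|z_i| = \|z\|_\infty$ gives $\mathrm{sign}(z_i)\,\dot z_i = \mathrm{sign}(z_i)\, w_i - |z_i| \leq \|w\| - \|z\| \leq 0$, so $D^+ \|z(t)\| \leq 0$ and hence $\|y(t) - \bar y\|$ is nonincreasing. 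In particular, choosing any fixed $\bar y$ shows that the trajectory $y(\cdot)$ is bounded.

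It remains to upgrade boundedness to convergence to a single point $y_\infty$, which is the substantive content of \cite[Thm.\ 3.1]{BoS97} and the step I expect to be the main obstacle. The first part already shows that $t \mapsto \|y(t) - \bar y\|$ converges for every $\bar y \in E_{h'}$; consequently every $\omega$-limit point of $y(\cdot)$ shares the same distance to each given equilibrium, and---once convergence to a limit $y_\infty$ is established---passing to the limit in $\dot y = \T(y) - y - \bar\alpha\, r^*$ forces $y_\infty \in E_{h'}$. The delicate part is ruling out that the bounded, precompact trajectory oscillates among a continuum of equidistant candidate limit points. In a Hilbert space an Opial-type argument would immediately collapse the $\omega$-limit set to a singleton, but here the governing norm is the sup-norm, which lacks inner-product structure; establishing single-point convergence therefore requires the specialized analysis of nonexpansive flows developed in \cite{BoS97}, on which I would rely rather than attempt a generic Lyapunov argument.
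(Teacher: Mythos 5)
Your proposal matches the paper exactly: the paper gives no proof of this lemma but simply imports it from \cite[Thm.~3.1 and Lem.~3.2]{BoS97} after noting that $\T$ is nonexpansive w.r.t.\ $\|\cdot\|_\infty$ and $E_{h'} = \Q \neq \varnothing$, which is precisely your reduction. Your added Dini-derivative verification of the nonincreasing-distance claim is correct, and you rightly identify single-point convergence as the substantive content that must be delegated to \cite{BoS97}.
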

 
\begin{remark} \rm \label{rmk-novel-proof}
From this point on, our proof arguments depart significantly from the arguments employed in \cite{ABB01} and their recent extensions in \cite{WNS21a,WYS24} for proving similar conclusions. The earlier analyses utilized an explicit expression of the difference $x(t) - y(t)$ (in terms of $y(t)$), obtained via the variation of constants formula, for a specific family of functions $f$ (cf.\ Ex.~\ref{ex-f}). As we deal with a more general family of $f$ under Assum.~\ref{cond-f}, explicit expressions of $x(t) - y(t)$ are not available. Instead, to characterize this difference in Lem.~\ref{lem-ode0-aux-ode} below, we utilize an existence/uniqueness theorem for non-autonomous ODEs. Subsequently, to derive asymptotic properties of $x(t)$ in Lems.~\ref{lem-main-1} and~\ref{lem-main-2}, we make extensive use of the SISTr properties of $f$ and $f_\infty$. 

It is also worth noting that, in \cite{ABB01}, the proof showing that $x(t) - y(t)$ is a constant vector relies on the nonexpansiveness of the operator $\T$ w.r.t.\ the span seminorm. Our proofs do not use this property, and as such, they are potentially applicable to more general problem settings beyond SMDPs/MDPs. \myqed
\end{remark} 

\begin{lemma}\label{lem-ode0-aux-ode}
Let $x(t)$ and $y(t)$ be solutions of the ODEs \eqref{eq: ode0} and \eqref{eq: aux-ode}, respectively, with the same initial condition $x(0) = y(0)$. Then $x(t)= y(t) + z(t)$ for all $t \in \R$, where $z(t)$ is the unique real-valued solution to
the scalar ODE \begin{equation} \label{eq-z-prf1}
    \dot{z}(t) = \bar \alpha \, r^* - \bar \alpha f(y(t) + z(t)), \quad \text{with} \ z(0) = 0.
\end{equation} 
\end{lemma}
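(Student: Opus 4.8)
The plan is to build a candidate trajectory out of $y(\cdot)$ and the scalar function $z(\cdot)$, and then identify it with $x(\cdot)$ by the uniqueness of ODE solutions. The advantage of this direction is that the nontrivial assertion hidden in the statement---namely that $x(t)-y(t)$ is a \emph{constant} vector (a scalar multiple of $\1$) for every $t$---comes out automatically from the construction, rather than having to be assumed. The entire force of the argument is the translation-invariance property $\T(q+c)=\T(q)+c$ (property (ii) of $\T$ recorded above).

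First I would establish that the scalar ODE \eqref{eq-z-prf1} has a unique solution defined on all of $\R$. Since $\T$ is nonexpansive, $h'$ is Lipschitz, so the ODE \eqref{eq: aux-ode} has a unique solution $y(\cdot)$ on all of $\R$; in particular $t \mapsto y(t)$ is continuous, making the right-hand side of \eqref{eq-z-prf1} continuous in $t$. It is also globally Lipschitz in $z$, uniformly in $t$: for scalars $z_1,z_2$, using $\|\cdot\|=\|\cdot\|_\infty$ and the Lipschitz continuity of $f$,
\[
\big| f(y(t)+z_1) - f(y(t)+z_2) \big| \le L_f\,\big\|(z_1-z_2)\1\big\| = L_f\,|z_1-z_2|.
\]
Global Lipschitzness in the state variable precludes finite-time blow-up, so standard ODE theory yields a unique solution $z(\cdot)$ that extends to all of $\R$.

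Next I set $w(t) \= y(t) + z(t)$ (the scalar $z(t)$ added to each component of $y(t)$, per the paper's convention). Then $w(0)=y(0)=x(0)$, and differentiating, then substituting \eqref{eq-def-h'} and \eqref{eq-z-prf1},
\[
\dot w(t) = \dot y(t) + \dot z(t) = h'(y(t)) + \bar\alpha\, r^* - \bar\alpha f(w(t)) = \T(y(t)) - y(t) - \bar\alpha f(w(t)).
\]
Because $w=y+z$, translation invariance gives $\T(w)-w = \T(y)+z-(y+z) = \T(y)-y$, so that
\[
\dot w(t) = \T(w(t)) - w(t) - \bar\alpha f(w(t)) = h(w(t)).
\]
Thus $w(\cdot)$ solves \eqref{eq: ode0} with $w(0)=x(0)$; since $h$ is Lipschitz (Lem.~\ref{lem-cond-h}), the solution of \eqref{eq: ode0} is unique, whence $w(t)=x(t)$ for all $t\in\R$, i.e.\ $x(t)=y(t)+z(t)$.

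I expect the only genuine point needing care to be the \emph{global} existence of $z(\cdot)$ on all of $\R$ (as opposed to a merely local solution), which is why I isolate the uniform-in-$t$ global Lipschitz bound at the outset; everything else is a one-line computation. It is worth emphasizing that this argument invokes neither an explicit variation-of-constants formula nor any span-seminorm nonexpansiveness of $\T$, consistent with Rem.~\ref{rmk-novel-proof}: it relies solely on Lipschitz continuity of $f$ and the additive translation invariance of $\T$.
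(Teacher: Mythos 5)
Your proposal is correct and follows essentially the same route as the paper's proof: both construct the candidate trajectory $y(t)+z(t)$, use the translation invariance $\T(q+c)=\T(q)+c$ to reduce matters to the scalar ODE \eqref{eq-z-prf1}, establish global existence and uniqueness of $z(\cdot)$ via the uniform-in-$t$ Lipschitz bound (the paper spells this out with Picard--Lindel\"{o}f, an extension theorem, and Gronwall's inequality, which is exactly the ``standard ODE theory'' you invoke), and conclude by uniqueness of solutions to \eqref{eq: ode0}. The only difference is the order of presentation, which is immaterial.
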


\begin{proof} 
Consider a function $\phi(t) = y(t) + z(t)$, where $z(t)$ is some real-valued differentiable function with $z(0) = 0$. If $\phi$ satisfies the ODE \eqref{eq: ode0}, then $x(\cdot) = \phi(\cdot)$ since \eqref{eq: ode0} has a unique solution for each initial condition.
Since $\T(y(t) + z(t))  = \T(y(t)) + z(t)$ and $\dot{y}(t)=h'(y(t))$, for $\phi$ to satisfy \eqref{eq: ode0} [i.e., $\dot{y}(t) + \dot{z}(t) = h(y(t) + z(t))$], it is equivalent, by the definitions of $h$ and $h'$, to having $z(t)$ satisfy \eqref{eq-z-prf1}. 

Now express \eqref{eq-z-prf1} as the non-autonomous ODE $\dot{z}(t) = \psi(t, z(t))$, where $\psi$ is the continuous function $\psi(t, z) \= \bar \alpha \, r^* - \bar \alpha f(y(t) + z )$ defined on $\R \times \R$. By the Lipschitz continuity of $f$ (Assum.~\ref{cond-f}(i)), $\psi$ is Lipschitz continuous in $z$ uniformly w.r.t.\ $t$. Then, by the Picard-Lindel\"{o}f existence theorem and an extension theorem for ODEs \citep[Chap.\ II, Thms.~1.1 and 3.1]{Har02}, a unique solution to \eqref{eq-z-prf1} exists on a maximal interval $(t_-, t_+)$, with $(t, z(t))$ approaching the boundary of $\R \times \R$ as $t \to t_-$ or $t \to t_+$. Since $f$ is Lipschitz continuous and $y(t)$, being continuous on $\R$, is bounded on bounded intervals, Gronwall's inequality implies that $|z(t)|$ cannot tend to $\infty$ as $t$ approaches a finite limit. This forces $(t_-, t_+) = (- \infty, \infty)$, completing the proof.
\end{proof}

\begin{remark} \rm
A previous anonymous reviewer noted an important connection to asymptotically autonomous systems: By Lem.~\ref{lem: aux-ode}, the non-autonomous ODE~\eqref{eq-z-prf1} is \emph{asymptotically autonomous}, with limiting ODE $\dot{z}(t) = \bar \alpha \, r^* - \bar \alpha f(\bar y + z(t))$, where $\bar y = \lim_{t\to\infty} y(t)$. 
The qualitative theory of these systems is well-studied \cite{Mar56,MST95}, though general results typically assume bounded (forward) solution trajectories. Rather than proving boundedness first and then invoking those results in the subsequent analysis, we give direct proofs below using Lyapunov-function arguments, which we find simpler and more direct in this case. \myqed
\end{remark}

\begin{lemma} \label{lem-main-1}
For any solution $x(\cdot)$ of the ODE \eqref{eq: ode0}, as $t \to \infty$, $x(t)$ converges to a point in $E_h = \Q_f$ that may depend on $x(0)$.
\end{lemma}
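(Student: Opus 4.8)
The plan is to exploit the decomposition $x(t) = y(t) + z(t)$ furnished by Lem.~\ref{lem-ode0-aux-ode}, where $y(\cdot)$ solves the auxiliary ODE \eqref{eq: aux-ode} with $y(0) = x(0)$ and $z(\cdot)$ is the scalar solution of \eqref{eq-z-prf1}; since \eqref{eq: ode0} has a unique solution for each initial condition, this covers every solution $x(\cdot)$. By Lem.~\ref{lem: aux-ode}, $y(t) \to y_\infty$ for some $y_\infty \in E_{h'} = \Q$, so the entire problem reduces to showing that the scalar trajectory $z(t)$ converges. Because $h'$ is invariant under scalar translation (noted after \eqref{eq-def-h'}), the set $\Q = E_{h'}$ is itself invariant under scalar translation; hence $y_\infty + c \in \Q$ for every $c \in \R$, and by Lem.~\ref{lem-f} (with $\ell = r^*$ and $x = y_\infty$) there is a \emph{unique} scalar $z^*$ with $f(y_\infty + z^*) = r^*$, i.e.\ with $y_\infty + z^* \in \Q_f$. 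I would then prove $z(t) \to z^*$, which immediately yields $x(t) \to y_\infty + z^* \in \Q_f = E_h$, with the limit depending on $x(0)$ through $y_\infty$.

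To show $z(t) \to z^*$, I would treat \eqref{eq-z-prf1} as a vanishing perturbation of its limiting autonomous equation $\dot z = \bar\alpha\,(r^* - f(y_\infty + z))$ and argue with the Lyapunov function $V(z) = \tfrac12(z - z^*)^2$. Writing $e(t) \= \|y(t) - y_\infty\| \to 0$ and using $r^* = f(y_\infty + z^*)$, I would split
\begin{align*}
\dot V &= \bar\alpha\,(z - z^*)\big(r^* - f(y(t)+z)\big) \\
&= \bar\alpha\,(z - z^*)\big[f(y_\infty + z^*) - f(y_\infty + z)\big] + \bar\alpha\,(z - z^*)\big[f(y_\infty + z) - f(y(t)+z)\big].
\end{align*}
The first term is $\le 0$ by the strict monotonicity of $c \mapsto f(y_\infty + c)$ (Def.~\ref{def-sistr}, Assum.~\ref{cond-f}(i)), and is strictly negative for $z \ne z^*$; the second is bounded in magnitude by $\bar\alpha L_f\,|z - z^*|\,e(t)$ via Lipschitz continuity of $f$. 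Because $c \mapsto f(y_\infty + c)$ maps $\R$ onto $\R$, the dissipative term dominates the perturbation once $|z - z^*|$ is large, so $\dot V < 0$ outside a fixed compact interval for all large $t$; this gives boundedness of $z(\cdot)$.

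Restricting to the resulting compact interval, I would set $m(\epsilon) \= \min\big\{(z - z^*)\,[f(y_\infty+z)-f(y_\infty+z^*)] : |z - z^*| \ge \epsilon\big\} > 0$, which is positive and attained by compactness together with strict monotonicity, and then choose $T_\epsilon$ so that the perturbation term stays below $\tfrac12\bar\alpha\, m(\epsilon)$ for $t \ge T_\epsilon$. Then $\dot V \le -\tfrac12\bar\alpha\, m(\epsilon) < 0$ whenever $|z(t) - z^*| \ge \epsilon$ and $t \ge T_\epsilon$, which forces $V$ below $\epsilon^2/2$ in finite time and, since $\dot V < 0$ on the level $\{V = \epsilon^2/2\}$ makes the sublevel set $\{V \le \epsilon^2/2\}$ forward-invariant thereafter, to remain there. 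Letting $\epsilon \downarrow 0$ gives $z(t) \to z^*$ and completes the argument. I expect the main obstacle to be precisely this last step: the SISTr hypothesis provides only strict --- not uniformly strict --- monotonicity, so there is no a priori contraction rate, and convergence must be extracted from a compactness-based lower bound on the drift, the vanishing-perturbation estimate $e(t) \to 0$, and the trapping argument for the $\epsilon$-sublevel set.
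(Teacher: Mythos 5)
Your proposal is correct and follows essentially the same route as the paper's proof: the same decomposition $x(t)=y(t)+z(t)$ via Lem.~\ref{lem-ode0-aux-ode}, reduction to the scalar ODE for $z$, and a Lyapunov argument with $V(z)=\tfrac12(z-z^*)^2$ that combines the SISTr-induced drift with the vanishing perturbation $\|y(t)-y_\infty\|$. The only cosmetic difference is that you first establish boundedness of $z(\cdot)$ and then extract the drift lower bound $m(\epsilon)$ by compactness, whereas the paper obtains a uniform lower bound $\epsilon_{\bar y}$ on the drift directly from strict monotonicity (valid for all $|z-\bar z|\ge\delta$, since $f(\bar y+\bar z\pm\delta)$ already separates from $f(\bar y+\bar z)$), which makes the separate boundedness step unnecessary.
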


\begin{proof} 
Let $y(\cdot)$ be the solution of \eqref{eq: aux-ode} with $y(0) = x(0)$. Express $x(t)$ as $x(t) = y(t) + z(t)$ according to Lem.~\ref{lem-ode0-aux-ode}. By Lem.~\ref{lem: aux-ode}, $\lim_{t \to \infty} y(t) = : \bar y \in E_{h'} = \Q$.
By Lem.~\ref{lem-f}, there is a unique scalar $\bar z$ satisfying $f(\bar y + \bar z) = r^*$. Let us show that  as $t \to \infty$, $z(t) \to \bar z$, which will imply $x(t) \to \bar y + \bar z \in \Q_f$.

Define $V_{\bar y} : \R \to \R_+$ by $V_{\bar y}(z) \= \frac{1}{2 \bar \alpha}(z - \bar z)^2$.
By Lem.~\ref{lem-ode0-aux-ode} and the fact $f(\bar y + \bar z) = r^*$, the time derivative $\dot{V}_{\bar y}(z(\cdot))$ satisfies
\begin{align}
   \dot{V}_{\bar y}(z(t)) & = (z(t) - \bar z) \cdot  \big( r^* - f( y(t) + z(t)) \big) \notag \\
      & =  (z(t) - \bar z) \cdot \big( f(\bar y + \bar z) - f(\bar y + z(t)) + f(\bar y + z(t)) - f( y(t) + z(t)) \big) \notag \\
      & \leq  (z(t) - \bar z) \cdot \big( f(\bar y + \bar z) - f(\bar y + z(t)) \big) +   | z(t) - \bar z| \cdot L_f \| \bar y - y(t) \|.  \label{eq-lem-m1-prf2}
\end{align}
Let $\delta > 0$. Since $f$ is SISTr  
(Assum.\ \ref{cond-f}), there exists $\epsilon_{\bar y} > 0$ such that $f(\bar y + z ) \geq f(\bar y + \bar z) + \epsilon_{\bar y}$ if $z - \bar z \geq \delta$, and $f(\bar y + z ) \leq f(\bar y + \bar z) - \epsilon_{\bar y}$ if $z - \bar z \leq - \delta$. Consequently,
\begin{equation} \label{eq-lem-m1-prf3}
  | z - \bar z | \geq \delta \quad \overset{\text{implies}}{\Longrightarrow} \quad  (z - \bar z) \cdot \big( f(\bar y + \bar z) - f(\bar y + z ) \big) \leq - \epsilon_{\bar y} | z - \bar z |.
\end{equation}   
Using \eqref{eq-lem-m1-prf3} and the fact that $\| \bar y - y(t) \| \to 0$ as $t \to \infty$, it follows from \eqref{eq-lem-m1-prf2} that for some $t_0$ sufficiently large so that $L_f \| \bar y - y(t)\| < \epsilon_{\bar y}/2$ for all $t \geq t_0$, we have that
$$ | z(t) - \bar z | \geq \delta, \ t \geq t_0 \quad \overset{\text{implies}}{\Longrightarrow} \quad \dot{V}_{\bar y}(z(t)) \leq - \tfrac{ \epsilon_{\bar y}}{2} | z(t) - \bar z | \leq - \tfrac{ \epsilon_{\bar y} \delta}{2} < 0.$$
This implies that there exists some finite time $t_{\delta} \geq t_0$ such that $|z(t) - \bar z | \leq \delta$ for all $t \geq t_\delta$. Since $\delta$ is arbitrary, we obtain $z(t) \to \bar z$ as $t \to \infty$.
\end{proof}

Next, we prove that $\Q_f$ is Lyapunov stable for the ODE \eqref{eq: ode0}, which will establish its global asymptotic stability in view of Lem.~\ref{lem-main-1}. For this part of the proof, we need another implication of the SISTr property of $f$, given in the lemma below:

\begin{lemma} \label{lem-f2}
Assumption~\ref{cond-f}(i) on $f$ implies the following:
For $\delta > 0$ and $x \in \R^d$, let $\epsilon_{x, \delta} > 0$ be the smallest number with $\min \big\{ f(x + \epsilon_{x, \delta}) - f(x), \, f(x) - f(x - \epsilon_{x, \delta}) \big\} =  \delta$. Then for any bounded set $D \subset \R^d$, $\sup_{x \in D} \epsilon_{x, \delta} < \infty$ and $\sup_{x \in D} \epsilon_{x, \delta} \downarrow 0$ as $\delta \downarrow 0$.
\end{lemma}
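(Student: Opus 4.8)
The lemma has two assertions about the quantity $\epsilon_{x,\delta}$: first, that it is finite for each $x$ and well-defined as the smallest number achieving the min-gap $\delta$; and second, the uniform statements over a bounded set $D$, namely $\sup_{x\in D}\epsilon_{x,\delta}<\infty$ and $\sup_{x\in D}\epsilon_{x,\delta}\downarrow 0$ as $\delta\downarrow 0$. The plan is to first establish that $\epsilon_{x,\delta}$ is a well-defined, continuous, and monotone function of its arguments, then exploit compactness of $\overline{D}$ to pass from pointwise properties to the uniform ones.

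I would begin by fixing $x$ and examining the function $\omega_x(\epsilon) \= \min\{f(x+\epsilon)-f(x),\,f(x)-f(x-\epsilon)\}$ for $\epsilon\geq 0$. Since $f$ is SISTr (Assum.~\ref{cond-f}(i)), the maps $c\mapsto f(x+c)$ and $c\mapsto f(x-c)$ are strictly increasing and surject onto $\R$; hence each of the two terms $f(x+\epsilon)-f(x)$ and $f(x)-f(x-\epsilon)$ is continuous, strictly increasing in $\epsilon$, equals $0$ at $\epsilon=0$, and tends to $+\infty$ as $\epsilon\to\infty$. Their minimum $\omega_x$ therefore shares these properties (continuous, strictly increasing from $0$ to $+\infty$), so for each $\delta>0$ there is a \emph{unique} $\epsilon>0$ with $\omega_x(\epsilon)=\delta$; this is exactly $\epsilon_{x,\delta}$, and it is automatically the smallest (indeed only) such value. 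In particular $\epsilon_{x,\delta}$ is finite, and $\delta\mapsto\epsilon_{x,\delta}$ is the inverse of $\omega_x$, hence continuous and strictly increasing, with $\epsilon_{x,\delta}\downarrow 0$ as $\delta\downarrow 0$ for each fixed $x$.

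For the uniform statements, I would show $(x,\delta)\mapsto\epsilon_{x,\delta}$ is jointly continuous (or at least that $x\mapsto\epsilon_{x,\delta}$ is continuous for fixed $\delta$, plus a monotonicity-in-$\delta$ argument). The key tool is the implicit relation $\omega_x(\epsilon_{x,\delta})=\delta$ together with the Lipschitz continuity of $f$: the two component functions depend continuously on $x$ (uniformly in $\epsilon$ on bounded ranges, by the Lipschitz bound), so $\omega_x(\epsilon)$ is jointly continuous in $(x,\epsilon)$, and continuity of the implicitly defined $\epsilon_{x,\delta}$ then follows by a standard argument mirroring the contradiction proof of Lem.~\ref{lem-f}. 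Continuity of $x\mapsto\epsilon_{x,\delta}$ on the compact closure $\overline{D}$ yields $\sup_{x\in D}\epsilon_{x,\delta}=\max_{x\in\overline D}\epsilon_{x,\delta}<\infty$. Finally, for $\sup_{x\in D}\epsilon_{x,\delta}\downarrow 0$: the supremum is decreasing in $\delta$ (since each $\epsilon_{x,\delta}$ is), so the limit exists; I would rule out a positive limit by extracting, for a hypothetical sequence $\delta_k\downarrow 0$ with $\epsilon_{x_k,\delta_k}\geq\rho>0$, a convergent subsequence $x_k\to x_*\in\overline D$, and then showing $\omega_{x_*}(\rho/2)>0$ forces $\delta_k\geq\omega_{x_k}(\rho)$ bounded away from $0$ by the joint continuity of $\omega$, a contradiction.

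The main obstacle I anticipate is the uniform vanishing $\sup_{x\in D}\epsilon_{x,\delta}\downarrow 0$, not the finiteness. Pointwise vanishing is immediate from inverting $\omega_x$, but the uniformity genuinely requires compactness of $\overline{D}$ together with a uniform lower bound on the strictly-increasing gaps $\omega_x(\epsilon)$ over $x\in\overline D$ at a fixed $\epsilon$. Establishing such a uniform lower bound---equivalently, that $\inf_{x\in\overline D}\omega_x(\epsilon)>0$ for each $\epsilon>0$---is where the SISTr hypothesis must be combined with the continuity/compactness machinery; this is the crux, since SISTr is only a pointwise (per-$x$) strict-monotonicity statement, and promoting it to a modulus uniform over $\overline D$ is the heart of the argument. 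The extremum $\min\{\cdot,\cdot\}$ defining $\omega_x$ requires mild care to confirm continuity is preserved, but presents no real difficulty.
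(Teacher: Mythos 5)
Your proposal is correct, and it reaches the two assertions by a somewhat more structured route than the paper. The paper proves both statements directly by sequential contradiction: for finiteness it takes $x_n \to \bar x$ in $\overline{D}$ with $\epsilon_{x_n,\delta} \to \infty$ and uses Lipschitz continuity plus the SISTr fact that $|f(\bar x + c)| \to \infty$ as $|c| \to \infty$ to contradict $|f(x_n + \kappa_n\epsilon_n) - f(x_n)| = \delta$; for the vanishing statement it takes $\delta_n \downarrow 0$, $\epsilon_{x_n,\delta_n} \to \bar\epsilon > 0$ and derives $f(\bar x) = f(\bar x + \kappa\bar\epsilon)$, contradicting injectivity under scalar translation. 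You instead introduce the modulus $\omega_x(\epsilon) \= \min\{f(x+\epsilon)-f(x),\, f(x)-f(x-\epsilon)\}$, observe it is a continuous strictly increasing bijection of $[0,\infty)$ so that $\epsilon_{x,\delta} = \omega_x^{-1}(\delta)$ is in fact unique, establish continuity of $x \mapsto \epsilon_{x,\delta}$, and then invoke compactness of $\overline{D}$. This buys a cleaner final step: your vanishing argument via $\delta_k \geq \omega_{x_k}(\rho) \to \omega_{x_*}(\rho) > 0$ uses only monotonicity and joint continuity of $\omega$, avoiding the paper's need to first extract a convergent subsequence of the $\epsilon_n$ (whose boundedness the paper implicitly gets from the first part). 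The cost is that your intermediate continuity claim for $x \mapsto \epsilon_{x,\delta}$ is not free: the case where $\epsilon_{x_n,\delta}$ is unbounded along $x_n \to \bar x$ must be excluded, and doing so is exactly the paper's first contradiction argument (as in the proof of Lem.~\ref{lem-f}, which you correctly cite). So the two hard ingredients---sequential compactness of $\overline{D}$ combined with Lipschitz continuity, and the SISTr property used both for escape to infinity and for injectivity---are the same in both proofs; yours packages them behind an explicit inverse-modulus function, the paper's deploys them bare. Either write-up would be acceptable; just make sure that when you flesh out the continuity of $x \mapsto \epsilon_{x,\delta}$ you handle the unbounded subsequence case explicitly rather than leaving it to the phrase ``standard argument.''
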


\begin{proof}
Since the function $f$ is SISTr, $\epsilon_{x,\delta} > 0$ is well-defined, finite, and nondecreasing in $\delta$. 
To show $\sup_{x \in D} \epsilon_{x, \delta} < \infty$, suppose, for contradiction, that there exists a sequence $\{x_n\} \subset D$ with $\epsilon_n \= \epsilon_{x_n, \delta} \to \infty$. Since $D$ is bounded, by passing to a subsequence if necessary, we can assume $x_n \to \bar x \in \R^d$. Let $\kappa_n = 1$ or $-1$ depending on whether $f(x_n + \epsilon_{n}) - f(x_n) \leq f(x_n) - f(x_n - \epsilon_n)$ or not. Then $| f(x_n + \kappa_n \epsilon_n) - f(x_n) | = \delta$ for all $n$, while by the Lipschitz continuity of $f$, we have $f(x_n) \to f(\bar x)$ and $|f(x_n + \kappa_n \epsilon_n) - f(\bar x + \kappa_n \epsilon_n) | \to 0$ as $n \to \infty$. These relations together imply $|f(\bar x + \kappa_n \epsilon_n) - f(\bar x) | \to \delta$ as $n \to \infty$. However, since $f$ is SISTr, $|\kappa_n \epsilon_n | \to \infty$ implies $|f(\bar x + \kappa_n \epsilon_n)| \to \infty$. This contradiction proves that $\sup_{x \in D} \epsilon_{x, \delta} < \infty$.

As $\delta \downarrow 0$, $\sup_{x \in D} \epsilon_{x, \delta}$ is nonincreasing; 
suppose for the sake of contradiction that $\sup_{x \in D} \epsilon_{x, \delta} \not\to 0$. 
Then, similarly to the above argument, there exist a sequence $\delta_n \downarrow 0$ and a sequence $\{x_n\} \subset D$ such that $x_n \to \bar x \in \R^d$, $\epsilon_n \= \epsilon_{x_n, \delta_n} \to \bar \epsilon > 0$, and 
$|f(x_n) - f(x_n +  \kappa  \epsilon_n)| = \delta_n,$ 
where $\kappa = 1$ or $-1$. Letting $n \to \infty$ yields $f(\bar x) = f(\bar x + \kappa \bar \epsilon)$, which is impossible since $f$ is SISTr. This proves that $\sup_{x \in D} \epsilon_{x, \delta} \downarrow 0$ as $\delta \downarrow 0$.
\end{proof}

\begin{lemma} \label{lem-main-2}
The set $E_h = \Q_f$ is Lyapunov stable for the ODE \eqref{eq: ode0}.
\end{lemma}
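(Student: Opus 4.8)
The plan is to prove Lyapunov stability by controlling the two parts of the decomposition $x(t) = y(t) + z(t)$ from Lem.~\ref{lem-ode0-aux-ode} separately: the nonexpansive part $y(t)$ through Lem.~\ref{lem: aux-ode}, and the scalar part $z(t)$ through a forward-invariance (barrier) argument built on the quantitative SISTr estimate of Lem.~\ref{lem-f2}. Throughout, fix $\epsilon > 0$; the goal is to produce a $\delta > 0$ such that $\mathrm{dist}(x(0), \Q_f) < \delta$ forces $\mathrm{dist}(x(t), \Q_f) < \epsilon$ for all $t \ge 0$.

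First I would set up a modulus that is uniform over the whole target set. Since $\Q_f$ is compact (Prop.~\ref{prp-sol-set}), the closed $1$-neighborhood $D \= \{ y \mid \mathrm{dist}(y, \Q_f) \le 1\}$ is bounded, so Lem.~\ref{lem-f2} makes $B(\rho) \= \sup_{y \in D} \epsilon_{y,\, 2 L_f \rho}$ finite with $B(\rho) \downarrow 0$ as $\rho \downarrow 0$. Given $x(0)$ with $\mathrm{dist}(x(0), \Q_f) < \delta$ for some $\delta \le 1$ to be chosen, pick $\bar q \in \Q_f$ with $\rho \= \| x(0) - \bar q\| < \delta$. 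Letting $y(\cdot)$ solve \eqref{eq: aux-ode} with $y(0) = x(0)$, the inclusion $\bar q \in \Q_f \subseteq \Q = E_{h'}$ together with Lem.~\ref{lem: aux-ode} gives $\| y(t) - \bar q\| \le \rho \le 1$ for all $t \ge 0$, so $y(t) \in D$ and (using $f(\bar q) = r^*$ and Lipschitz continuity) $|f(y(t)) - r^*| \le L_f \rho$ throughout.

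The crux is to show that $[-B(\rho), B(\rho)]$ is forward invariant for the scalar ODE \eqref{eq-z-prf1}, which, since $z(0) = 0$, yields $|z(t)| \le B(\rho)$ for all $t \ge 0$. Here I would evaluate the field at the endpoints: because $B(\rho) \ge \epsilon_{y(t),\, 2L_f\rho}$, Lem.~\ref{lem-f2} gives $f(y(t) + B(\rho)) - f(y(t)) \ge 2L_f\rho$ and $f(y(t)) - f(y(t) - B(\rho)) \ge 2L_f\rho$; combined with $|f(y(t)) - r^*| \le L_f\rho$ this yields $f(y(t)+B(\rho)) \ge r^* + L_f\rho > r^*$ and $f(y(t)-B(\rho)) \le r^* - L_f\rho < r^*$. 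Hence $\dot z = \bar\alpha\big(r^* - f(y(t)+z)\big)$ points strictly inward at $z = \pm B(\rho)$ for every $t$, and a standard first-crossing argument excludes escape. Since $\bar q \in \Q_f$, it then follows that $\mathrm{dist}(x(t), \Q_f) \le \| x(t) - \bar q\| \le \|y(t)-\bar q\| + |z(t)| \le \rho + B(\rho)$; choosing $\delta \le 1$ small enough that $\rho < \delta$ implies $\rho + B(\rho) < \epsilon$ completes the proof. Together with Lem.~\ref{lem-main-1}, this establishes the global asymptotic stability of $\Q_f$.

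I expect the main obstacle to be securing the modulus $B(\rho)$ \emph{uniformly over all base points} $\bar q \in \Q_f$. The estimate of Lem.~\ref{lem-f2} is uniform in $x$ only over a fixed bounded set, so it is essential to enclose every relevant ball $\{ y \mid \|y - \bar q\| \le \rho\}$ inside the single, $\bar q$-independent set $D$ before letting $\rho \downarrow 0$; using a $\bar q$-dependent neighborhood would not deliver a single $\delta$ valid for all nearby initial conditions. A minor point to handle carefully is the strictness of the inward-pointing inequalities (ensured here by the factor $2$ in $2L_f\rho$), which keeps the first-crossing argument clean.
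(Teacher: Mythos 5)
Your proof is correct and follows essentially the same route as the paper's: decompose $x(t) = y(t) + z(t)$ via Lem.~\ref{lem-ode0-aux-ode}, use the nonexpansiveness property in Lem.~\ref{lem: aux-ode} to keep $y(t)$ within $\rho$ of the base point in $\Q_f$, and use the quantitative SISTr modulus of Lem.~\ref{lem-f2}, taken uniformly over a bounded neighborhood of the compact set $\Q_f$, to confine $z(t)$ to an interval that shrinks with $\rho$. The only difference is cosmetic: you run a barrier/first-crossing argument at the endpoints $\pm B(\rho)$ with the SISTr estimate evaluated at the moving point $y(t)$, whereas the paper uses the Lyapunov function $V_{\bar x}(z) = \tfrac{z^2}{2\bar\alpha}$ with the estimate evaluated at the fixed point $\bar x \in \Q_f$, absorbing the discrepancy $\|y(t)-\bar x\|\le\delta_0$ into the enlarged threshold $(L_f+1)\delta_0$.
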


\begin{proof}
First, consider a solution $x(t)$ of \eqref{eq: ode0} that starts in some $\delta_0$-neighborhood of $\Q_f$, where $\delta_0 > 0$, and let $\bar x \in \Q_f$ be such that $\|x(0) - \bar x\| \leq \delta_0$. We derive a bound on $\|x(t) - \bar x\|$ for $t \geq 0$ in terms of $\delta_0$. Decompose $x(t)$ as $x(t) = y(t) + z(t)$ according to Lem.~\ref{lem-ode0-aux-ode}, with $y(0) = x(0)$ and $z(0) = 0$. 
Since $\Q_f \subset E_{h'} =\Q$, Lem.~\ref{lem: aux-ode} implies that $\| y(t) - \bar x\|$ is nonincreasing in $t$. Consequently,
\begin{equation} \label{eq-m2-prf0}
   \| x(t) - \bar x \| \leq \| y(t) - \bar x \| + | z(t) | \leq \delta_0 + | z(t)|, \qquad \forall \, t \geq 0.
\end{equation}
To bound the term $| z(t)|$, define $V_{\bar x} : \R \to \R_+$ by $V_{\bar x}(z) \= \frac{z^2}{2 \bar \alpha}$. Since $f(\bar x) = r^*$, similarly to the derivation of \eqref{eq-lem-m1-prf2} (with $\bar x$ in place of $\bar y$ and $\bar z = 0$), we have 
\begin{equation}
\dot{V}_{\bar x}(z(t)))  \leq z(t) \cdot \big( f(\bar x) - f(\bar x + z(t)) \big) +   | z(t) | \cdot L_f \| \bar x - y(t) \|.   \label{eq-lem-m2-prf1}
\end{equation}
Let $\delta \= (L_f +1) \delta_0$, and let $\epsilon_{\bar x, \delta} > 0$ be as defined in Lem.~\ref{lem-f2}. Since $f$ is SISTr, similarly to the derivation of \eqref{eq-lem-m1-prf3}, we have
\begin{equation}
  | z(t) | \geq \epsilon_{\bar x, \delta}  \quad \overset{\text{implies}}{\Longrightarrow} \quad z(t) \cdot \big( f(\bar x) - f(\bar x + z(t)) \big)  \leq - \delta |z(t)|, \notag
\end{equation} 
and then by \eqref{eq-lem-m2-prf1} and the fact $\| \bar x - y(t) \| \leq \delta_0$,
\begin{equation} \label{eq-lem-m2-prf2}
  | z(t) | \geq \epsilon_{\bar x, \delta}  \quad \overset{\text{implies}}{\Longrightarrow} \quad   \dot{V}_{\bar x}(z(t))) \leq - \delta_0 | z(t)| \leq - \delta_0 \epsilon_{\bar x, \delta}  < 0.
\end{equation}
Since $z(0) = 0$, the relation \eqref{eq-lem-m2-prf2} implies that 
\begin{equation} \label{eq-lem-m2-prf3}
|z(t)| \leq \epsilon_{\bar x, \delta} \ \ \ \forall \, t \geq 0, \quad \text{where} \ \delta = (L_f +1) \delta_0.
\end{equation}

We now use this bound and Lem.~\ref{lem-f2} to prove the Lyapunov stability of $\Q_f$. For any $\epsilon > 0$, by the compactness of $\Q_f$ (Prop.~\ref{prp-sol-set}) and Lem.~\ref{lem-f2}, there exists a sufficiently small $\bar \delta > 0$ such that $\sup_{\bar x \in \Q_f} \epsilon_{\bar x, \delta'} \leq \epsilon/2$ for all $\delta' \leq \bar \delta$. Define $\bar \delta_0 = \frac{\epsilon}{2} \wedge \frac{\bar \delta}{(L_f + 1)}$. Then, by \eqref{eq-m2-prf0} and the bound \eqref{eq-lem-m2-prf3} on $|z(t)|$ with $\delta_0 = \bar \delta_0$, if $x(0)$ lies in the $\bar \delta_0$-neighborhood of $\Q_f$, then $x(t)$ remains in the $\epsilon$-neighborhood of $\Q_f$ for all $t \geq 0$. This establishes that $\Q_f$ is Lyapunov stable. 
\end{proof}

By Lems.~\ref{lem-main-1} and~\ref{lem-main-2}, $\Q_f$ is globally asymptotically stable for the ODE \eqref{eq: ode0}. This proves Prop.\ \ref{prp-rvi-asyn-stab}. 

We now specialize the above proof arguments to establish Prop.~\ref{prp-rvi-asyn-stab0}: 

\begin{proof}[Proof of Prop.~\ref{prp-rvi-asyn-stab0}]
Consider the case where the SMDP has zero rewards, with the functions $f_\infty$ and $h_\infty$ taking the roles of $f$ and $h$, respectively. In this case, $r^* = 0$ and the operator $\T^o$ replaces $\T$. Instead of $\Q$, we have the solution set of AOE given by $\Q^o \= \{ c \1 \,|\, c \in \R\}$ (Lem.~\ref{lem: sol-struc}), while instead of $\Q_f$, we have $\Q^o_{f_\infty} = E_{h_\infty} =  \{0\}$ (Lem.~\ref{lem-cond-h}). Lemmas~\ref{lem: aux-ode} and~\ref{lem-ode0-aux-ode} hold with these substitutions. In particular, they establish that, for each initial condition, the solution of $\dot{x}(t) = h_\infty(x(t))$ can be written as $x(t) = y(t) + z(t)$, where $y(t)$ converges to a constant vector $\bar y \in \Q^o$, and $z(t)$ satisfies $\dot{z}(t) = - \bar \alpha f_\infty(y(t) + z(t))$.  

Next, observe that the proof of Lem.~\ref{lem-main-1} relies only on $f$ being SISTr at every point of $\Q_f$ (see the derivation of \eqref{eq-lem-m1-prf3}), along with the existence of a unique scalar $\bar z$ solving $f(\bar y + \bar z) = r^*$ for any given $\bar y \in \Q$. In the present case, $\Q_{f_\infty}^o = \{0\}$, and $f_\infty$ is SISTr at the origin by Assum.~\ref{cond-f}(ii). This assumption also implies that, for any constant vector $\bar y = c \1 \in \Q^o$, the equation $f_\infty(\bar y + \bar z) = 0$ has a unique solution $\bar z = -c$. Therefore, the conclusion of Lem.~\ref{lem-main-1} applies here as well and shows that any solution $x(t)$ of the ODE $\dot{x}(t) = h_\infty(x(t))$ converges to the origin as $t \to \infty$.

For the Lyapunov stability of the origin, observe that in the proof of Lem.~\ref{lem-main-2}, when defining $\epsilon_{\bar x, \delta}$ and deriving the relation \eqref{eq-lem-m2-prf3} to bound $| z(t)|$ by $\epsilon_{\bar x, \delta}$, we only used the property that $f$ is SISTr at any point $\bar x \in \Q_f$. We then relied on the argument $\sup_{\bar x \in \Q_f} \epsilon_{\bar x, \delta'} \downarrow 0$ as $\delta' \downarrow 0$ to conclude the proof. In the present case, since $\Q_{f_\infty}^o = \{0\}$ and $f_\infty$ is SISTr at the origin, with $\bar x = 0$, the same bound $| z(t) | \leq \epsilon_{\bar x, \delta}$ holds, and we have $ \epsilon_{\bar x, \delta'} \downarrow 0$ as $\delta' \downarrow 0$. Therefore, the conclusion of Lem.~\ref{lem-main-2} also applies here, establishing the Lyapunov stability of the origin and hence its global asymptotic stability.
\end{proof}

This completes the proof of Thm.~\ref{thm-rvi-ql}. Finally, based on the above proof, we make an observation regarding the relaxation of the SISTr condition on $f$.

\begin{remark} \rm \label{rmk-rel-sistr}
In the proofs of Lems.~\ref{lem-main-1} and~\ref{lem-main-2}, we only need the key relations \eqref{eq-lem-m1-prf3} and \eqref{eq-lem-m2-prf2} to hold for points $\bar y + \bar z$ and $\bar x$ in $\Q_f$, and the bound $\sup_{\bar x \in \Q_f} \epsilon_{\bar x, \delta'} \downarrow 0$ as $\delta' \downarrow 0$. It is possible to obtain these under a localized version of the SISTr condition. For example, if $r^*$ is known to lie in the range $[a, b]$, then we can require that for each $x \in \R^d$ with $a \leq f(x) \leq b$, the function $c \in \R \mapsto f(x + c)$ is strictly increasing in a certain neighborhood of $0$, without requiring it to be monotonic outside this neighborhood, as long as it behaves appropriately. \myqed
\end{remark}

\subsection{Proof of Theorem~\ref{thm-rvi-ql2}} \label{sec-shad2}

We apply Thm.~\ref{thm-3} to prove Thm.~\ref{thm-rvi-ql2} for RVI Q-learning. Most of the conditions required by Thm.~\ref{thm-3} are satisfied by assumption or by the proofs already provided in Sec.~\ref{sec-4.1} when establishing Thm.~\ref{thm-rvi-ql}. In particular, the conditions on the associated ODE hold by Prop.~\ref{prp-rvi-asyn-stab} and Lem.~\ref{lem-main-1}. Below, we verify the remaining conditions: the thresholds for the scaling parameters in class-1 or class-2 stepsizes $\{\alpha_n\}$, Assum.~\ref{cond-mns} on the biased noise terms $\{\epsilon_n\}$, and, for class-1 stepsizes,  the additional asynchrony condition Assum.~\ref{cond-mus}. Among these, the verification of Assum.~\ref{cond-mns} will constitute most of the proof. 

Before proceeding, we clarify that all sample path arguments below are understood to be for a path on which the a.s.\ conditions of Thm.~\ref{thm-rvi-ql2} and all previously established a.s.\ properties hold. Any new a.s.\ properties proved below apply to this path from the point they are established. For conciseness, we do not repeat ‘a.s.’ in every instance.

For RVI Q-learning in SMDPs, the biased noise terms $\{\epsilon_n\}$ arise from estimation errors in estimating the expected holding times from data (see \eqref{eq: epsilon}). We first show that under Assum.~\ref{cond-rvi-alg2} on the stepsizes $\{\beta_n\}$ used in estimation, Assum.~\ref{cond-mns} is satisfied by $\{\epsilon_n\}$.

We set the parameter $\bar \alpha \= \tm = \min_{(s,a) \in \S \times \A} t_{sa}$ for the operator $\T$ and the function $h$ in our preceding analysis of RVI Q-learning (see \eqref{eq-T}-\eqref{eq-alt-h}). This choice gives 
$$h(q) = \T(q) - q - \tm f(q),$$
where $\T$ is nonexpansive w.r.t.\ $\| \cdot\|_\infty$, as previously noted. Moreover, $\{\tm^{-1} \alpha_n\}$ serves as the actual stepsize sequence when applying the SA results (see \eqref{eq-rvi-sa-form}), with its scaling parameter given by $\tm A$, where $A$ is the scaling parameter in $\{\alpha_n\}$. 
Observe that 
\begin{equation} \label{eq-prf-shad-rvi0a}
  L_h \leq 2 + \tm L_f, \qquad \ell(\{\tm^{-1} \alpha_n\}) = \tm \ell(\{\alpha_n\}),
\end{equation}  
where, as recalled, $L_h$ and $L_f$ are the Lipschitz constants of $h$ and $f$ under $\|\cdot\|_\infty$, and $\ell(\{\alpha_n\}) = \limsup_{n \to \infty} \tfrac{\ln(\alpha_n)}{\sum_{k=0}^{n} \alpha_k}$.

Accordingly, the requirement in Thm.~\ref{thm-3} on the noise terms $\{\epsilon_n\}$---namely, that Assum.~\ref{cond-mns} holds with $\mu_\delta < - L_h$---becomes 
\begin{equation} \label{eq-prf-shad-rvi0b}
  \limsup_{n \to \infty} \tfrac{\ln(\delta_{n+1})}{\tm^{-1} \sum_{k=0}^n \alpha_k} \leq \mu_\delta < - L_h, \ \ \  \text{for} \ \delta_{n+1} = \bar{K}'\!\! \max_{(s, a) \in \S \times \A} \big| \big(T_n(s, a) \vee \eta_n \big)^{-1} - t_{sa}^{-1} \big|
\end{equation}  
as defined in the proof of~Lem.~\ref{lem-cond-noise}, where $\bar{K}'$ is some constant. Since $T_n(s,a) \to t_{sa}$ and $\eta_n \to 0$, we can bound $\delta_{n+1}$ for sufficiently large $n$ as 
\begin{equation} \label{eq-prf-shad-rvi0}
  \textstyle{\delta_{n+1} \leq c \max_{(s, a) \in \S \times \A} \big| T_n(s, a)  - t_{sa} \big|}, \ \ \ \text{for some constant $c$}.
\end{equation}  
Recall that the estimates $T_n(s, a)$ of the holding time $t_{sa}$ are updated using stochastic gradient descent with a stepsize sequence $\{\beta_n\}$ such that, among other requirements, $\beta_n \geq \varsigma \alpha_n$ eventually for some $\varsigma > 0$ [see \eqref{eq: t-ite} and Assum.~\ref{cond-rvi-alg2}(ii)]. 

\begin{lemma} \label{lem-shad-rvi} Almost surely,
$\limsup_{n \to \infty} \frac{\ln(\delta_{n+1})}{\sum_{k=0}^n \alpha_k} \leq \varsigma \max \big\{ \tfrac{\ell(\{\beta_n\})}{2}, \, - 1 \big\}$. 
\end{lemma}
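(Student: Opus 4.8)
The plan is to bound the exponential rate at which the holding-time estimation errors decay and then translate that rate from each component's \emph{local} update clock to the global clock. By \eqref{eq-prf-shad-rvi0}, $\delta_{n+1}\le c\max_{(s,a)}|e_n(s,a)|$ with $e_n(s,a)\=T_n(s,a)-t_{sa}$; since $\ln(c)/\sum_{k=0}^n\alpha_k\to0$ and there are finitely many pairs, it suffices to bound $\limsup_n\frac{\ln|e_n(s,a)|}{\sum_{k=0}^n\alpha_k}$ for each fixed $(s,a)$. Fixing such a pair and tracking its error only at the instants it is updated, let $m=\nu(n,(s,a))$ be its local clock and $\hat e_m$ the error after $m$ updates. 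Then \eqref{eq: t-ite} gives the scalar recursion $\hat e_{m+1}=(1-\beta_m)\hat e_m+\beta_m W_{m+1}$, where $W_{m+1}=\tau^{sa}-t_{sa}$ at the $(m{+}1)$st visit is a martingale difference (w.r.t.\ the local filtration) with conditional variance bounded by $\sigma^2\=\sup_{s,a}\mathrm{Var}_{sa}(\tau)<\infty$ under Assum.~\ref{cond-smdp}(ii). I would first establish the \emph{local-clock} bound $\limsup_m\frac{\ln|\hat e_m|}{\sum_{j=0}^m\beta_j}\le\max\{\ell(\{\beta_n\})/2,\,-1\}$, and then convert to the global clock.

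For the local-clock bound, solve the linear recursion as $\hat e_m=P_m\hat e_0+G_m$, where $P_m\=\prod_{i=0}^{m-1}(1-\beta_i)$ and $G_m=P_mN_m$ with $N_m\=\sum_{j=0}^{m-1}\tfrac{\beta_j}{P_{j+1}}W_{j+1}$ a martingale. Because $\sum_j\beta_j^2<\infty$ (Assum.~\ref{cond-rvi-alg}(ii)), $\ln P_m=-\sum_{j=0}^{m-1}\beta_j+O(1)$, so the deterministic term $P_m\hat e_0$ contributes rate $-1$ (or $-\infty$ if $\hat e_0=0$). The crux is the martingale term: I would show $|N_m|=o(b_m)$ a.s.\ for $b_m\=\sqrt{\beta_m}\,(\sum_{j=0}^m\beta_j)^{(1+\delta)/2}/P_m$ with any $\delta>0$, by applying the martingale convergence theorem to $\sum_j\Delta N_j/b_j$---whose predictable quadratic variation is $\sigma^2\sum_j\tfrac{\beta_{j-1}^2/P_j^2}{b_j^2}\asymp\sigma^2\sum_j\tfrac{\beta_j}{(\sum_{i\le j}\beta_i)^{1+\delta}}<\infty$---and then Kronecker's lemma; crucially this uses only the finite second moment of Assum.~\ref{cond-smdp}(ii). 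Hence eventually $|G_m|=P_m|N_m|\le\sqrt{\beta_m}\,(\sum_{j\le m}\beta_j)^{(1+\delta)/2}$, and since the slowly growing factor is negligible relative to $\exp(\text{const}\cdot\sum_{j\le m}\beta_j)$, its logarithm divided by $\sum_{j\le m}\beta_j$ tends to $\tfrac12\ell(\{\beta_n\})$. Taking the maximum of the deterministic and martingale contributions yields the local-clock bound.

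To pass to the global clock, write $e_n(s,a)=\hat e_{\nu(n,(s,a))}$ and set $\rho\=\max\{\ell(\{\beta_n\})/2,\,-1\}$. Note $\ell(\{\beta_n\})\le0$ always (as $\beta_n\to0$), so $\rho\le0$; if $\rho=0$ the asserted bound is $0$, which holds trivially since $\delta_{n+1}\to0$ forces $\ln\delta_{n+1}<0$ eventually, so I focus on $\rho<0$ (the operative regime, indeed $\ell(\{\beta_n\})<0$ under Assum.~\ref{cond-rvi-alg2}(ii)). For every small $\epsilon>0$ with $\rho+\epsilon<0$, eventually $\ln|e_n(s,a)|\le(\rho+\epsilon)\sum_{j=0}^{\nu(n,(s,a))}\beta_j$, whence
\[
\frac{\ln|e_n(s,a)|}{\sum_{k=0}^n\alpha_k}\le(\rho+\epsilon)\,\frac{\sum_{j=0}^{\nu(n,(s,a))}\beta_j}{\sum_{k=0}^n\alpha_k}.
\]
Using $\beta_j\ge\varsigma\alpha_j$ eventually (Assum.~\ref{cond-rvi-alg2}(ii)), $\nu(n,(s,a))\le n$ with $\liminf_n\nu(n,(s,a))/n>0$ (Assum.~\ref{cond-us}(i)), and the partial-sum ratio convergence $\frac{\sum_{k=0}^{\nu(n,(s,a))}\alpha_k}{\sum_{k=0}^n\alpha_k}\to1$ (from Assum.~\ref{cond-ss}(iii), applied uniformly on $[x,1]$ for some $x<\liminf_n\nu(n,(s,a))/n$), I obtain $\liminf_n\frac{\sum_{j=0}^{\nu(n,(s,a))}\beta_j}{\sum_{k=0}^n\alpha_k}\ge\varsigma$. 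Since $\rho+\epsilon<0$, multiplying gives $\limsup_n\frac{\ln|e_n(s,a)|}{\sum_{k=0}^n\alpha_k}\le(\rho+\epsilon)\varsigma$; letting $\epsilon\downarrow0$ and taking the maximum over the finitely many $(s,a)$ yields the lemma.

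The main obstacle is the almost-sure martingale estimate on $G_m$ under only a finite second moment on $\tau$. An in-probability bound $G_m=O_P(\sqrt{\beta_m})$ follows at once from $\E[G_m^2]=O(\beta_m)$, but Chebyshev plus Borel--Cantelli is too weak to deliver the required a.s.\ rate: for class-1 stepsizes $\ln m\asymp\sum_{j\le m}\beta_j$, so even a polynomial-in-$m$ slack would corrupt the exponent. The martingale-convergence/Kronecker route (a form of the strong law for martingales) circumvents this while needing only second moments, but one must choose the slowly growing factor $(\sum_{j\le m}\beta_j)^{(1+\delta)/2}$ so that it \emph{simultaneously} renders the quadratic variation summable and stays negligible in the rate. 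Verifying this balance, together with the bookkeeping relating $P_m$ to $\exp(-\sum_{j<m}\beta_j)$ via $\sum_j\beta_j^2<\infty$, is the delicate part; the remaining reductions and the global-clock conversion are routine.
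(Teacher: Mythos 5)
Your proposal is correct and its overall architecture matches the paper's: reduce to a per-component exponential decay rate on the holding-time error measured in each component's local clock, then convert to the global clock via $\beta_j\geq\varsigma\alpha_j$, Assum.~\ref{cond-us}(i), and the partial-sum ratio convergence of Assum.~\ref{cond-ss}(iii) to get $\liminf_n \sum_{j\le\nu(n,(s,a))}\beta_j\big/\sum_{k\le n}\alpha_k\geq\varsigma$; that second half is essentially identical to the paper's derivation of \eqref{eq-prf-shad-rvi3}--\eqref{eq-prf-shad-rvi3b}. Where you genuinely diverge is the local-clock rate $\max\{\ell(\{\beta_n\})/2,-1\}$. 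The paper obtains it by viewing \eqref{eq: t-ite} as an SA scheme tracking the exponentially stable ODE $\dot y=-(y-t_{sa})$ and citing Bena\"{i}m's shadowing-rate results \cite[Prop.~8.3, Lem.~8.7(i)]{Ben99} (with a footnote extending them to unbounded-range Lipschitz maps). You instead exploit the linearity of the recursion $\hat e_{m+1}=(1-\beta_m)\hat e_m+\beta_m W_{m+1}$: solve it in closed form, read off the rate $-1$ from $\ln P_m=-\sum_{j<m}\beta_j+O(1)$, and control the noise term by a martingale strong law (convergence of $\sum_j\Delta N_j/b_j$ plus Kronecker) under only the second-moment condition of Assum.~\ref{cond-smdp}(ii), yielding the rate $\ell(\{\beta_n\})/2$. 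Your route is self-contained and elementary, avoids importing the dynamical-systems machinery and the footnoted extension; the paper's route is shorter and would survive replacing the quadratic objective by a nonlinear one. Two small points to tidy up in your version: (a) the step $\beta_{j-1}^2/(\beta_jP_j^2 b_j^{-2}\cdots)\asymp\beta_j/(\sum_{i\le j}\beta_i)^{1+\delta}$ implicitly uses $\beta_{j-1}=O(\beta_j)$, which holds for the stepsizes actually admitted here (class 1/2 and Ex.~\ref{ex-beta}) but is not literally forced by Assum.~\ref{cond-rvi-alg2}(ii) alone --- using $\beta_{j-1}$ in the normalizer $b_j$ removes the issue; and (b) since $\beta_j\in[0,1]$ you should discard the finitely many indices with $\beta_j=1$ (where $P_{j+1}=0$) before writing $\prod_{i=j+1}^{m-1}(1-\beta_i)=P_m/P_{j+1}$. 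Neither affects the validity of the argument.
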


\begin{proof}
For each $(s,a) \in \S \times \A$, consider the iterates $\{T_n(s,a)\}_{n \geq 0}$ in \eqref{eq: t-ite} at the iterations $m_k$ (indexed by $k \geq 0$) when the $(s,a)$-component is updated. Under Assums.~\ref{cond-smdp}(ii) and~\ref{cond-rvi-alg2}(ii), the sequence $\{T_{m_k}(s,a)\}_{k \geq 0}$ follows standard stochastic gradient descent to minimize the objective function $V(y) = \tfrac{1}{2} (y - t_{sa})^2$, using the stepsizes $\{\beta_k\}$. 
Define a continuous trajectory $\bar y(t)$ via piecewise linear interpolation of the iterates $\{T_{m_k}(s,a)\}_{k \geq 0}$, using $\{\beta_k\}$ as the elapsed times between consecutive iterates to define the ODE-time (analogously to the construction of the trajectory \eqref{eq-cont-traj2}). Specifically, for $k \geq 0$, let $\hat t(k) \= \sum_{j=0}^{k-1} \beta_j$ with $\hat t(0) \= 0$, and let
$$ \bar y(t) \=  T_{m_k}(s,a) +  \tfrac{t - \hat t(k)}{\hat t(k+1) - \hat t(k)} \, ( T_{m_{k+1}}(s,a) - T_{m_k}(s,a)), \ \ \,  t \in [\hat t(k), \hat t(k+1)].
$$
Then, by standard SA theory (see, e.g., \cite{Bor23,KuY03}), the trajectory $\bar y(t)$ asymptotically tracks the solutions of the scalar ODE $\dot{y}(t) = - (y(t) - t_{sa})$. 

This ODE exhibits exponential convergence to $t_{sa}$ at rate $-1$: for any initial condition $y(0)$, $|y(t) - t_{sa}| \leq e^{-t} | y(0) - t_{sa}|$ for all $t \geq 0$. Then, by Bena\"{i}m \cite[Prop.\ 8.3 and Lem. 8.7(i)]{Ben99},
\footnote{Although \cite[Prop.\ 8.3]{Ben99} assumes that the mapping underlying the SA algorithm has a bounded range in addition to being Lipschitz continuous, the proof of \cite[Prop.\ 8.3]{Ben99} carries over to Lipschitz continuous mappings with unbounded range, provided that the iterates are bounded, as in our case.}
we obtain
\begin{equation} \label{eq-prf-shad-rvi1}
   \limsup_{t \to \infty}  \tfrac{1}{t} \ln \big( | \bar y(t) - t_{sa} | \big) \leq \mu \= \max \big\{ \tfrac{\ell(\{\beta_k\})}{2}, \, - 1 \big\} \ \ a.s.,
\end{equation}
where $\mu < 0$ because $\ell(\{\beta_k\}) < 0$ under Assum.~\ref{cond-rvi-alg2}(ii).
The bound \eqref{eq-prf-shad-rvi1} holds for all components $(s,a) \in \S \times \A$, \emph{with the ODE-time $t$ defined by their respective `local clocks.'} This implies that, w.r.t.\ the global iteration index $n$, for any $\epsilon > 0$ such that $\mu + \epsilon < 0$, it holds for all sufficiently large $n$ that
\begin{equation} \label{eq-prf-shad-rvi2}
  \big| T_n(s,a) - t_{sa} \big| \leq e^{(\mu + \epsilon) \sum_{k=0}^{n} \beta_{\nu(k, (s,a))}\ind \{ (s,a)  \,\in  \,Y_k\}}, \qquad \forall \, (s,a) \in \S \times \A.
\end{equation}
Combining \eqref{eq-prf-shad-rvi2} with \eqref{eq-prf-shad-rvi0} yields that, for all sufficiently large $n$,
\begin{equation} 
 \delta_{n+1} \leq c e^{(\mu + \epsilon) \min_{(s,a) \in \S \times \A} \sum_{k=0}^{n} \beta_{\nu(k, (s,a))}\ind \{ (s,a) \, \in  \,Y_k\}}. \notag
\end{equation} 
Consequently,
\begin{equation} \label{eq-prf-shad-rvi3}
 \limsup_{n \to \infty} \tfrac{\ln(\delta_{n+1})}{\sum_{k=0}^n \alpha_k} \leq (\mu + \epsilon) \cdot \liminf_{n \to \infty} \min_{(s,a) \in \S \times \A} \tfrac{\sum_{k=0}^{n} \beta_{\nu(k, (s,a))}\ind \{ (s,a) \, \in  \,Y_k\}}{\sum_{k=0}^n \alpha_k}.
\end{equation} 

We now bound the r.h.s.\ of \eqref{eq-prf-shad-rvi3}. Since $\beta_k \geq \varsigma \alpha_k$ for all sufficiently large $k$ (Assum.~\ref{cond-rvi-alg2}(ii)), for each $(s,a) \in \S \times \A$, 
\begin{align}
\sum_{k=0}^{n} \beta_{\nu(k, (s,a))} \ind \{ (s,a) \, \in  \,Y_k\} & \, = \, \sum_{k=0}^{n} \tfrac{\beta_{\nu(k, (s,a))}}{\alpha_{\nu(k, (s,a))}} \cdot \alpha_{\nu(k, (s,a))} \ind \{ (s,a) \, \in  \,Y_k\} \notag \\
& \, \geq \, \varsigma \sum_{k=0}^{n} \alpha_{\nu(k, (s,a))} \ind \{ (s,a) \, \in  \,Y_k\} + O(1),  \label{eq-prf-shad-rvi3a} 
\end{align}
where $O(1)$ denotes a term whose absolute value is bounded by some finite (path-dependent) constant, and in deriving inequality~\eqref{eq-prf-shad-rvi3a} we also used the algorithmic conditions in Assums.~\ref{cond-rvi-alg}(i) and \ref{cond-us}(i). 
By Assums.~\ref{cond-rvi-alg}(i),~\ref{cond-ss}(iii) and~\ref{cond-us}(i), we also have 
$$\tfrac{\sum_{k=0}^{n} \alpha_{\nu(k, (s,a))}\ind \{ (s,a) \, \in \, Y_k\}}{\sum_{k=0}^{n} \alpha_k} \to 1 \ \ \  \text{as $n \to \infty$}, \quad \forall \, (s,a) \in \S \times \A.$$
Together with \eqref{eq-prf-shad-rvi3a}, this implies
\begin{equation} \label{eq-prf-shad-rvi3b}
\liminf_{n \to \infty} \tfrac{\sum_{k=0}^{n} \beta_{\nu(k, (s,a))}\ind \{ (s,a) \, \in  \,Y_k\}}{\sum_{k=0}^n \alpha_k} \geq \varsigma, \quad \forall \, (s, a) \in \S \times \A.
\end{equation}
Combining \eqref{eq-prf-shad-rvi3b} with \eqref{eq-prf-shad-rvi3}, we obtain
$\limsup_{n \to \infty} \tfrac{\ln(\delta_{n+1})}{\sum_{k=0}^n \alpha_k} \leq \varsigma (\mu + \epsilon).$
Letting $\epsilon \to 0$ then establishes the lemma. 
\end{proof}

By Lem.~\ref{lem-shad-rvi}, we can set $\mu_\delta = \tm \cdot \varsigma \max \big\{ \tfrac{\ell(\{\beta_k\})}{2}, \, - 1 \big\}$ in condition \eqref{eq-prf-shad-rvi0b}; that is, Assum.~\ref{cond-mns} holds here with this choice of $\mu_\delta$.
Since $L_h \leq 2 +  \tm L_f$ by \eqref{eq-prf-shad-rvi0a}, the condition $\mu_\delta < - L_h$ required by Thm.~\ref{thm-3} is satisfied whenever 
$$\tm \cdot \varsigma \max \big\{ \tfrac{\ell(\{\beta_k\})}{2}, \, - 1 \big\} < - (2 +  \tm L_f),$$ 
or equivalently,
\begin{equation} \label{eq-prf-shad-rvi4}
    \varsigma >  \tfrac{2}{\tm} + L_f, \qquad  - \tfrac{\varsigma \, \ell(\{\beta_k\})}{2} > \tfrac{2}{\tm} + L_f,
\end{equation}
which match precisely the conditions in Assum.~\ref{cond-rvi-alg2}(ii). Hence Assum.~\ref{cond-mns} holds here with $\mu_\delta < - L_h$, as required by Thm.~\ref{thm-3}.

Additionally, a direct calculation using \eqref{eq-prf-shad-rvi0a} shows that to exceed the stepsize thresholds set in Thm.~\ref{thm-3}, we can let the scaling parameter $A$ here satisfy:
\begin{align} 
\text{for class-2 stepsizes:} \ \ & A > \tfrac{2}{\tm} + L_f;  \label{eq-prf-shad-rvi5} \\
\text{for class-1 stepsizes:} \ \ &  \big(\gamma \wedge \tfrac{1}{2}\big) A > \tfrac{2}{\tm} + L_f, \label{eq-prf-shad-rvi6}
\end{align}
where $\gamma$ is the constant appearing in Assum.~\ref{cond-mus} on asynchrony.
The conditions in \eqref{eq-prf-shad-rvi5}-\eqref{eq-prf-shad-rvi6} match exactly the thresholds in Thm.~\ref{thm-rvi-ql2} for $A$ in each stepsize class. Thus, Thm.~\ref{thm-3} applies, establishing Thm.~\ref{thm-rvi-ql2}. This completes our proof.

\section{Discussion}  \label{sec-conc-rmks}

In this paper, we developed and analyzed a generalized RVI Q-learning algorithm for average-reward RL in weakly communicating SMDPs, proving its convergence using asynchronous SA results from our companion paper \cite{YWS25a}. To fully leverage the underlying SA theory, we equipped RVI Q-learning with new monotonicity conditions for estimating the optimal reward rate, thereby substantially broadening the algorithmic framework and generalizing previous convergence analyses.
A natural direction for future work is to extend this approach to distributed computation settings with communication delays, as discussed in \cite{YWS25a}.

\section*{Acknowledgments} 
We thank Prof.\ Eugene Feinberg for helpful discussion on average-reward SMDPs and Dr.\ Martha Steenstrup for critical feedback on parts of our earlier draft. We also thank an anonymous reviewer of an earlier version of this work for valuable insights and detailed comments---particularly regarding the use of SMDPs for approximating continuous-time control problems, asymptotically autonomous systems, and the potential for sharper convergence results. The latter suggestion prompted us to study shadowing properties of asynchronous SA algorithms, which ultimately led to the sharper result presented in Thm.~\ref{thm-rvi-ql2} for RVI Q-learning. In preparing this manuscript, we used OpenAI ChatGPT (GPT-4 and GPT-5) to refine the writing style.

\addcontentsline{toc}{section}{References} 
\bibliographystyle{apa} 
\let\oldbibliography\thebibliography
\renewcommand{\thebibliography}[1]{%
  \oldbibliography{#1}%
  \setlength{\itemsep}{0pt}%
}
{\fontsize{9}{11} \selectfont
\bibliography{asyn_sa_arXiv3.bib}}

\end{document}